\newtheorem{op}[theorem]{Optimization Problem} 
\newcommand{\valpha}{{\boldsymbol{\alpha}}}
\newcommand{\vepsilon}{{\boldsymbol{\epsilon}}}
\newcommand{\vzero}{\mathbf{0}}
\renewcommand{\v}[1]{\ensuremath{\mathbf{#1}}}
\begin{document}

\title{Security Analysis of Online Centroid Anomaly Detection}



\author{\name Marius Kloft\thanks{Also at Machine Learning Group, Technische Universität Berlin, Franklinstr. 28/29, FR 6-9, 10587 Berlin, Germany.}
	     \email mkloft@cs.berkeley.edu \\
	     \addr Computer Science Division\\
       \addr University of California\\
             Berkeley, CA 94720-1758, USA
       \AND
       \name Pavel Laskov
       \email pavel.laskov@uni-tuebingen.de \\
       \addr Wilhelm-Schickard Institute for Computer Science\\
       \addr University of T\"ubingen \\
             Sand 1, 72076 T\"ubingen, Germany
}



\maketitle

\begin{abstract}
  Security issues are crucial in a number of machine learning
  applications, especially in scenarios dealing with human activity
  rather than natural phenomena (e.g., information ranking, spam
  detection, malware detection, etc.). It is to be expected in such
  cases that learning algorithms will have to deal with manipulated
  data aimed at hampering decision making. Although some previous work addressed the handling of
  malicious data in the context of supervised learning, very little is
  known about the behavior of anomaly detection methods in such
  scenarios. In this contribution,\footnote{A preliminary version of this paper 
  appears in AISTATS 2010, JMLR Workshop and Conference Proceedings, 2010.} we analyze the performance of a
  particular method -- online centroid anomaly detection -- in the
  presence of adversarial noise.  Our analysis addresses the following
  security-related issues: formalization of learning and attack
  processes, derivation of an optimal attack, analysis of its
  efficiency and constraints. We derive bounds on the effectiveness of
  a poisoning attack against centroid anomaly under different
  conditions: bounded and unbounded percentage of traffic, and bounded false positive
  rate. Our bounds show that whereas a poisoning attack can be
  effectively staged in the unconstrained case, it can be made
  arbitrarily difficult (a strict upper bound on the attacker's gain) if
  external constraints are properly used. Our experimental evaluation
  carried out on real HTTP and exploit traces confirms the tightness of
  our theoretical bounds and practicality of our protection
  mechanisms.
\end{abstract}
\vspace*{\baselineskip}

\section{Introduction}
\label{sec:intro}

Machine learning methods have been instrumental in enabling numerous
novel data analysis applications. 
Currently
indispensable technologies such as
object recognition, user preference
analysis, spam filtering -- to name only a few -- all rely on accurate analysis of massive
amounts of data. Unfortunately, the increasing \emph{use} of machine
learning methods brings about a threat of their \emph{abuse}. A
convincing example of this phenomenon are emails that bypass spam
protection tools. Abuse of machine learning can take on various
forms. A malicious party may affect the training data, for
example, when it is gathered from a real operation of a system and
cannot be manually verified. Another possibility is to manipulate
objects observed by a deployed learning system so as to bias its
decisions in favor of an attacker. Yet another way to defeat a
learning system is to send a large amount of nonsense data in order to
produce an unacceptable number of false alarms and hence force a
system's operator to turn it off. Manipulation of a
learning system may thus range from  simple cheating to complete
disruption of its operations.


A potential insecurity of machine learning methods stems from the fact
that they are usually not designed with adversarial input in
mind. Starting from the mainstream computational learning theory
\citep{Vap98,SchSmo02}, a prevalent assumption is that training and
test data are generated from the same, fixed but unknown, probability
distribution. This assumption obviously does not hold for adversarial
scenarios. Furthermore, even the recent work on learning with
differing training and test distributions \citep{SugKraMue07} is not
necessarily appropriate for adversarial input, as in the latter case one must account for a
specific worst-case difference.

The most important application field in which robustness of learning
algorithms against adversarial input is crucial is computer
security. Modern security infrastructures are facing an increasing
professionalization of attacks motivated by monetary profit.  A
wide-scale deployment of insidious evasion techniques, such as
encryption, obfuscation and polymorphism, is manifested in an
exploding diversity of malicious software observed by security
experts. Machine learning methods offer a powerful tool to counter a
rapid evolution of security threats. For example, anomaly detection
can identify unusual events that potentially contain novel, previously
unseen exploits
\citep{WanSto04,RieLas06,WanParSto06,RieLas07}. Another typical
application of learning methods is automatic signature generation
which drastically reduces the time needed for a production and
deployment of attack signatures \citep{NewKarSon06,LiSanCheKaoCha06}.
Machine learning methods can also help researchers to better
understand the design of malicious software by using classification or
clustering techniques together with special malware acquisition and
monitoring tools \citep{Bailey:07:ACA,RieHolWilDueLas08}.

In order for machine learning methods to be successful in security
applications -- and in general in any application where adversarial
input may be encountered -- they should be equipped with
countermeasures against potential attacks. The current understanding
of security properties of learning algorithms is rather
patchy. Earlier work in the PAC-framework has addressed some scenarios
in which training data is deliberately corrupt
\citep{AngLai88,Lit88,KeaLi93,Aue97,BshEirKus99}. These results,
however, are not connected to modern learning algorithms used in
classification, regression and anomaly detection problems. On the
other hand, several examples of effective attacks have been
demonstrated in the context of specific security and spam detection
applications
\citep{LowMee05,FogShaPerKolLee06,FogLee06,PerDagLeeFogSha06,NewKarSon06,NelBarChiJosRubSaiSutTygXia08},
which has motivated a recent work on taxonomization of such attacks
\citep{BarNelSeaJosTyg06,BarChiJosNelRubSaiTyg08}.
However, it remains largely unclear whether machine learning
methods can be protected against adversarial impact.

We believe that an unequivocal answer to the problem of ``security of
machine learning'' does not exist. The security properties cannot be
established experimentally, as the notion of security deals with
events that do not just happen on average but rather only potentially may
happen. Hence, a theoretical analysis of machine learning algorithms
for adversarial scenarios is indispensable. It is hard to imagine,
however, that such analysis can offer meaningful results for any
attack and any circumstances. Hence, to be a useful guide for
practical applications of machine learning in adversarial
environments, such analysis must address \emph{specific attacks against
specific learning algorithms}. This is precisely the approach followed
in this contribution.

The main focus of our work is a security analysis of online centroid
anomaly detection against the so-called ``poisoning'' attacks. The
centroid anomaly detection is a very simple method
which has been widely used in computer security applications
\citep[e.g.,][]{ForHofSomLon96,WarForPea99,WanSto04,RieLas06,WanParSto06,RieLas07}.
In the learning phase, centroid anomaly detection computes the mean of
all training data points:
$$
\v c = \frac{1}{n} \sum_{i=1}^n \v x_i.
$$
Detection is carried out by computing the distance of a new
example $\v x$ from the centroid $\v c$ and comparing it with an
appropriate threshold:
$$
f(\v x) =
\begin{cases}
  1, \quad \text{if $||\v x - \v c|| > \theta$} \\
  0, \quad \text{otherwise.}
\end{cases}
$$
Notice that all operations can be carried out using kernel functions 
-- a standard trick known since the kernel PCA
\citep{SchSmoMue98,ShaCri04} -- which substantially increases the
discriminative power of this method.

More often than not, anomaly detection algorithms are deployed in
non-stationary environments, hence need to be regularly re-trained. In
the extreme case, an algorithm learns online by updating its
hypothesis after every data point it has received. Since the data is
fed into the learning phase without any verification, this opens a
possibility for an adversary to force a learning algorithm to learn a
representation suitable for an attacker. One particular kind of attack
is the so-called ``poisoning'' in which specially crafted data points
are injected so as to cause a hypothesis function to  misclassify a
given malicious point as benign. This attack makes sense when an
attacker does not have ``write'' permission to the training data, hence
cannot manipulate it directly. Therefore, his goal is to trick an
algorithm by merely using an ``append'' permission, by sending new
data. 

The poisoning attack against online centroid anomaly detection has
been considered by \cite{NelJos06} for the case of infinite window,
i.e., when a learning algorithm memorizes all data seen so far. Their
main result was surprisingly optimistic: it was shown that the number
of attack data points to be injected grows exponentially as a function
of the impact over a learned hypothesis. However, the assumption of
an infinite window also hinders the ability of a learning algorithm to
adjust to legitimate changes in the data distribution. 

As a main contribution of this work, we present the security analysis
of online centroid anomaly detection for the finite window case,
i.e., when only a fixed number of data points can be used at any time
to form a hypothesis. We show that, in this case, an attacker can
easily compromise a learning algorithm by using only a linear amount
of injected data unless additional constraints are imposed. As a
further contribution, we analyze the algorithm under two additional
constraints on the attacker's part: (a) the fraction of the traffic
controlled by an attacker is bounded by $\nu$, and (b) the false
positive rate induced by an attack is bounded by $\alpha$. Both of
such constraints can be motivated by an operational practice of
anomaly detection systems. Overall, we significantly extend the
analysis of \cite{NelJos06} by considering a more realistic learning
scenario, explicitly treating potential constraints on the attacker's
part and providing tighter bounds.

The methodology of our analysis follows the following framework, which
we believe can be used for a \emph{quantitative security analysis}  of
learning algorithms \citep{LasKlo09}:

\begin{enumerate}

\item \emph{Axiomatic formalization of the learning and attack
    processes}. The first step in the analysis is to formally specify
  the learning and attack processes. Such formalization includes
  definitions of data sources and objective (risk) functions used by
  each party, as well as the attack goal. It specifies the knowledge
  available to an attacker, i.e., whether he knows an algorithm, its
  parameters and internal state, and which data he can potentially
  manipulate.

\item \emph{Specification of an attacker's constraints}. Potential
  constraints on the attacker's part may include: percentage of
  traffic under his control, amount of additional data to be injected,
  an upper bound on the norm of manipulated part, a maximal allowable
  false-positive rate (in case an attack must stealthy), etc. Such
  constraints must be incorporated into the axiomatic formalization.  

\item \emph{Investigation of an optimal attack policy}. Given a formal
  description of the problem and constraints, an optimal attack policy
  must be investigated. Such policy may be long-term, i.e., over
  multiple attack iteration, as well as short-term, for a single
  iteration. Investigation can be carried out either as a formal proof
  or numerically, by casting the search for an attack policy as an
  optimization problem.

\item \emph{Bounding of an attacker's gain under an optimal policy}. The
  ultimate goal of our analysis is to quantify an attacker's gain or
  effort under his optimal policy. Such analysis may take different
  forms, for example calculation of the probability for an attack to
  succeed, estimation of the required number of attack iterations,
  calculation of the geometric impact of an attack (a shift towards an
  insecure state), etc.

\end{enumerate}

Organization of this paper reflects the main steps of the proposed
methodology. In a preliminary Section~\ref{sec:prelims} the models of
the learning and the attack processes are introduced. The analytical
part is arranged in two sections as follows.  Section~\ref{sec:full} addresses
the steps (1), (3) and (4) under an assumption that an attacker has
full control of the network traffic. Section~\ref{sec:limited}
introduces an additional assumption that attacker's control is limited
to a certain fixed fraction of network traffic, as required in step
(2). Another constraint of the bounded false positive rate is
considered in Section~\ref{sec:defense}. This section also removes a
somewhat unrealistic assumption of Section~\ref{sec:limited} that all
innocuous points are accepted by the algorithm.  The analytic results are
experimentally verified in Section~\ref{sec:ids} on real HTTP data and
attacks used in intrusion detection systems. Some proofs and the
auxiliary technical material are presented in the Appendix.

Before moving on to the detailed presentation of our analysis, it may
be instructive to discuss the place of a poisoning attack in the
overall attack taxonomy and practical implication of its
assumptions. For two-class learning problems, attacks against learning
algorithms can be generally classified according to the following two
criteria (the terminology in the taxonomy of \cite{BarNelSeaJosTyg06}
is given in brackets):
\begin{itemize}
\item whether an attack is staged during the training (causative) or the
  deployment of an algorithm (causative/exploratory), or
\item whether an attack attempts to increase the false negative or the
  false positive rate at the deployment stage
  (integrity/availability).
\end{itemize}
The poisoning attack addressed in our work can be classified as a
causative integrity attack. This scenario is quite natural, e.g., in web
application scenarios in which the data on a server can be assumed
secure but the injection of adversarial data cannot be easily
prevented. Other common attack types are a mimicry attack --
alteration of malicious data to resemble innocuous data (an exploratory
integrity attack), or a ``red herring'' attack -- sending of junk data
that causes false alarms (an exploratory availability attack). Attacks
of the latter two kinds are beyond the scope of our investigation.

As a final remark, we must consider the extent to which the attacker
is familiar with the learning algorithm and trained model. 
One of the key principles of computer security, known
as \emph{Kerckhoff's principle}, is that the robustness of any security
instrument must not depend on keeping its operational functionality
secret. Similar to modern cryptographic methods, we must assume
that the attacker knows which machine learning algorithm is
deployed and how it operates (he can even use machine learning to
reverse engineer deployed classifiers, as shown by
\cite{LowMee05a}). A more serious difficulty on the attacker's part
may be to get hold of the training data or of the particular learned
model. In the case of anomaly detection, it is relatively easy for an
attacker to retrieve a learned model: it suffices to sniff on the same
application that is protected by an algorithm to get approximately the
same innocuous data the algorithm is trained on. Hence, we will
assume that an attacker has precise knowledge of
the trained model at any time during the attack.

\section{Learning and Attack Models}
\label{sec:prelims}

Before proceeding with the analysis, we first present the precise
models of the learning and the attack processes. Our focus on anomaly
detection is motivated by its ability to detect potentially novel
attacks, a crucial demand of modern information security.

\subsection{Centroid Anomaly Detection}
\label{sec:centroid}

Given the data set $X = \{\v x_1, \ldots, \v x_n\}$, the goal of
anomaly detection (also often referred to as ``novelty detection'') is
to determine whether an example $\v x$ is unlikely to have been
generated by the same distribution as the set $X$. A natural way to
perform anomaly detection is to estimate a probability density
function of the distribution from which the set $X$ was drawn and flag
$\v x$ as anomalous if it comes from a region with low density.  In
general, however, density estimation is a difficult problem,
especially in high dimensions. A large amount of data is usually
needed to reliably estimate the density in all regions of the space.  For
anomaly detection, knowing the density in the entire space is
superfluous, as we are only interested in deciding whether a specific
point falls into a ``sparsely populated'' area.  Hence several direct
methods have been proposed for anomaly detection, e.g., one-class SVM
\citep{SchPlaShaSmoWil01}, support vector data description (SVDD)
\citep{TaxDui99,TaxDui99a}, and density level set estimation
\citep{Pol95,Tsy97,SteHusSco05}. A comprehensive survey of anomaly
detection techniques can be found in \cite{MarSin03a,MarSin03b}.

In the centroid anomaly detection, a Euclidean distance from an
empirical mean of the data is used as a measure of anomality: 
$$
f(\v x) = ||\v x - \frac{1}{n} \sum_{i=1}^{n} \v x_i ||.
$$
If a hard decision is desired instead of a soft anomality score,
the data point is considered anomalous if its anomaly score exceeds a
fixed threshold $r$.

Centroid anomaly detection can be seen as a special case for the SVDD
with outlier fraction $\eta = 1$ and of the Parzen window density
estimator \citep{Par62} with the Gaussian
kernel function $k(\v x,\v y) = \frac{1}{\sqrt{ 2 \pi}} \exp (- \frac{1}{2} \v x\cdot\v y)$.
Despite its straightforwardness, 
a centroid model can represent arbitrary complex density level sets using a kernel mapping \citep{SchSmo02,MueMikRaeTsuSch01} 
(see Fig.~\ref{fig:rbf-centroid}).

\begin{figure}
  \centering
  \includegraphics[width=0.4\textwidth]{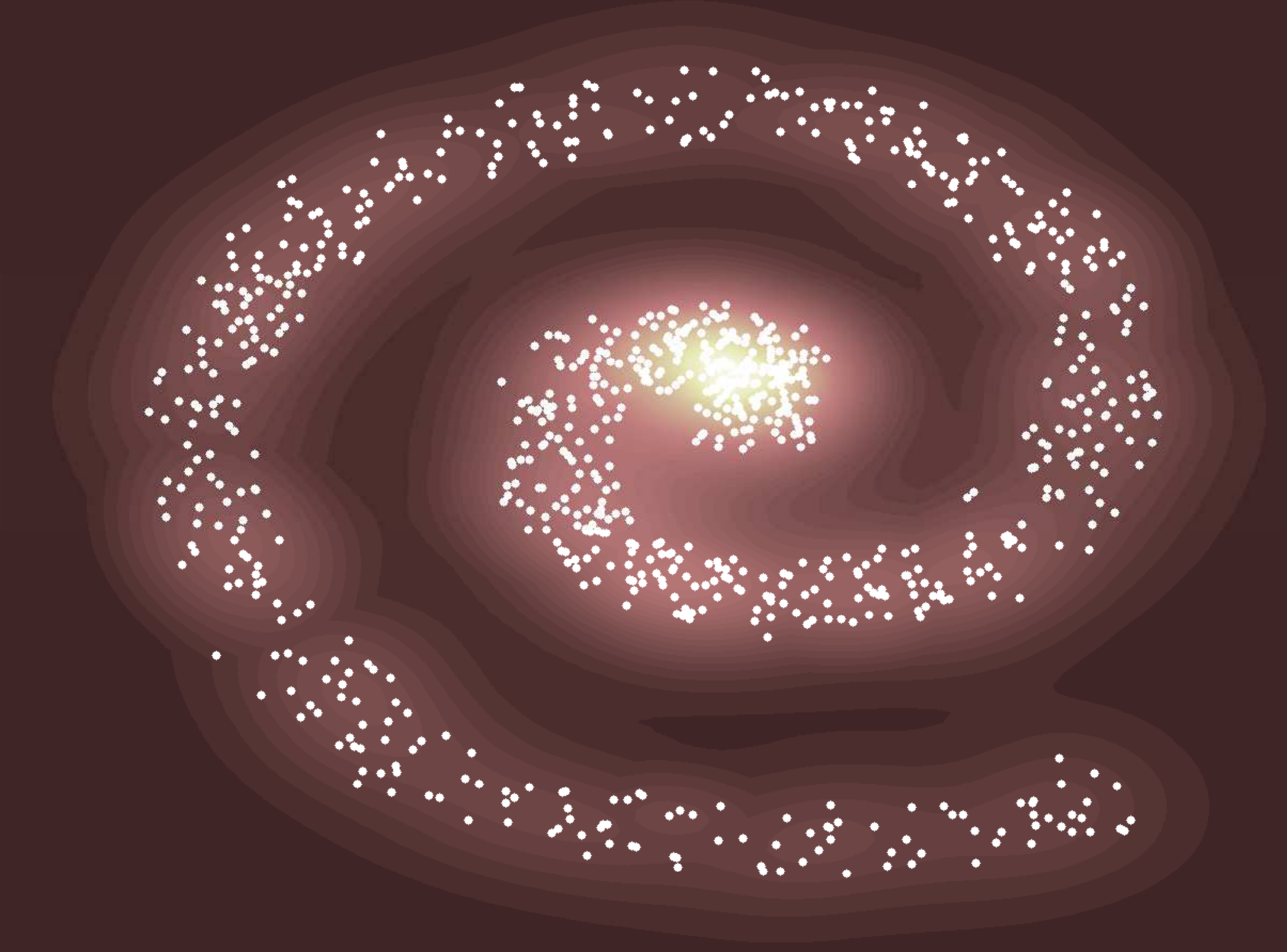}
  \caption{Illustration of the density level estimation using a
    centroid model with a non-linear kernel.}
  \label{fig:rbf-centroid}
\end{figure}

It has been
successfully used in a variety of anomaly detection applications such
as intrusion detection
\citep{HofForSom98,YeuCho02,LasSchKot04,WanSto04,RieLas06,WanParSto06,RieLas07},
wireless sensor networks \citep{RajLecPalBez07} and jet engine
vibration data analysis \citep{NaiTowCarKinCowTar99}. It can be shown
(cf. \cite{ShaCri04}, Section 4.1) that even in high-dimensional
spaces induced by nonlinear feature maps, the empirical estimator of
the center of mass of the data is stable and the radius of a sphere
anchored at the center of mass is related to a level set of the
corresponding probability density.

\subsection{Online Anomaly Detection}
\label{sec:online}

The majority of anomaly detection applications have to deal with
non-stationary data. This is especially typical for computer
security, as usually the processes being monitored change over time: e.g.,
network traffic profile is strongly influenced by the time of the day and
system call sequences depend on the applications running on a
computer. Hence the model of normality constructed by anomaly
detection algorithms usually needs to be updated during their
operations. In the extreme case, such an update can be performed after
the arrival of each data point resulting in the online operation.
Obviously, re-training the model from scratch every time is
computationally infeasible; however, incorporation of new data
points and the removal of irrelevant ones can be done with acceptable
effort \citep{LasGehKruMue06}.

For the centroid anomaly detection, re-calculation of the center of
mass is straightforward and requires $O(1)$ work. If all examples are
``memorized'', i.e., the index $n$ is growing with the arrival of each
example, the index $n$ is incremented for every
new data point, and the update is computed as\footnote{The update
  formula can be generalized to $\v c' = \v c + \frac{\kappa}{n}(\v x
  - \v x_i)$, with fixed $\kappa\geq 1$. The bounds in the analysis
  change only by a constant factor, which is negligible.}
\begin{equation}
\label{eq:update-infinite}
\v c' = \left (1 - \frac{1}{n} \right) \v c + \frac{1}{n} \v x.
\end{equation}  
For the finite horizon, i.e. constant $n$, some previous example $\v
x_i$ is replaced by a new one, and the update is performed as
\begin{equation}
\label{eq:update-finite}
\v c' = \v c + \frac{1}{n}(\v x - \v x_i).
\end{equation}

Various strategies can be used to determine the ``least relevant''
point $\v x_i$ to be removed from a working set:
\renewcommand{\descriptionlabel}[1]{\hspace{\labelsep}\textsf{#1:}}
\begin{itemize}
\item[(a)] \textsf{oldest-out}: The point with the oldest timestamp is
  removed.
\item[(b)] \textsf{random-out}: A randomly chosen point is removed.
\item[(c)] \textsf{nearest-out}: The nearest-neighbor of the new point
  $\v x$ is removed.
\item[(d)] \textsf{average-out}: The center of mass is removed.  The
  new center of mass is recalculated as $\v c'= \v c +\frac{1}{n}(\v
  x-\v c)$, which is equivalent to Eq.~(\ref{eq:update-infinite}) with
  constant $n$.
\end{itemize}
The strategies (a)--(c) require the storage of all points in the
working set, whereas the strategy (d) can be implemented by holding
only the center of mass in memory.

\subsection{Poisoning attack}
\label{sec:poisoning}

The goal of a poisoning attack is to force an anomaly detection
algorithm to accept an attack point $\v A$ that lies outside of the
normal ball, i.e., $|| \v A - \v c || > r$. It is assumed that an
attacker knows the anomaly detection algorithm and all the training
data. However, an attacker cannot modify any existing data except for
adding new points. These assumptions model a scenario in which an
attacker can sniff data on the way to a particular host and can send
his own data, while not having write access to that host.  As
illustrated in Fig.~\ref{fig:poisoning}, the poisoning attack attempts
to inject specially crafted points that are accepted as innocuous and
push the center of mass in the direction of an attack point until the
latter appears innocuous.


\begin{figure}
\centering
\includegraphics[width=0.50\textwidth]{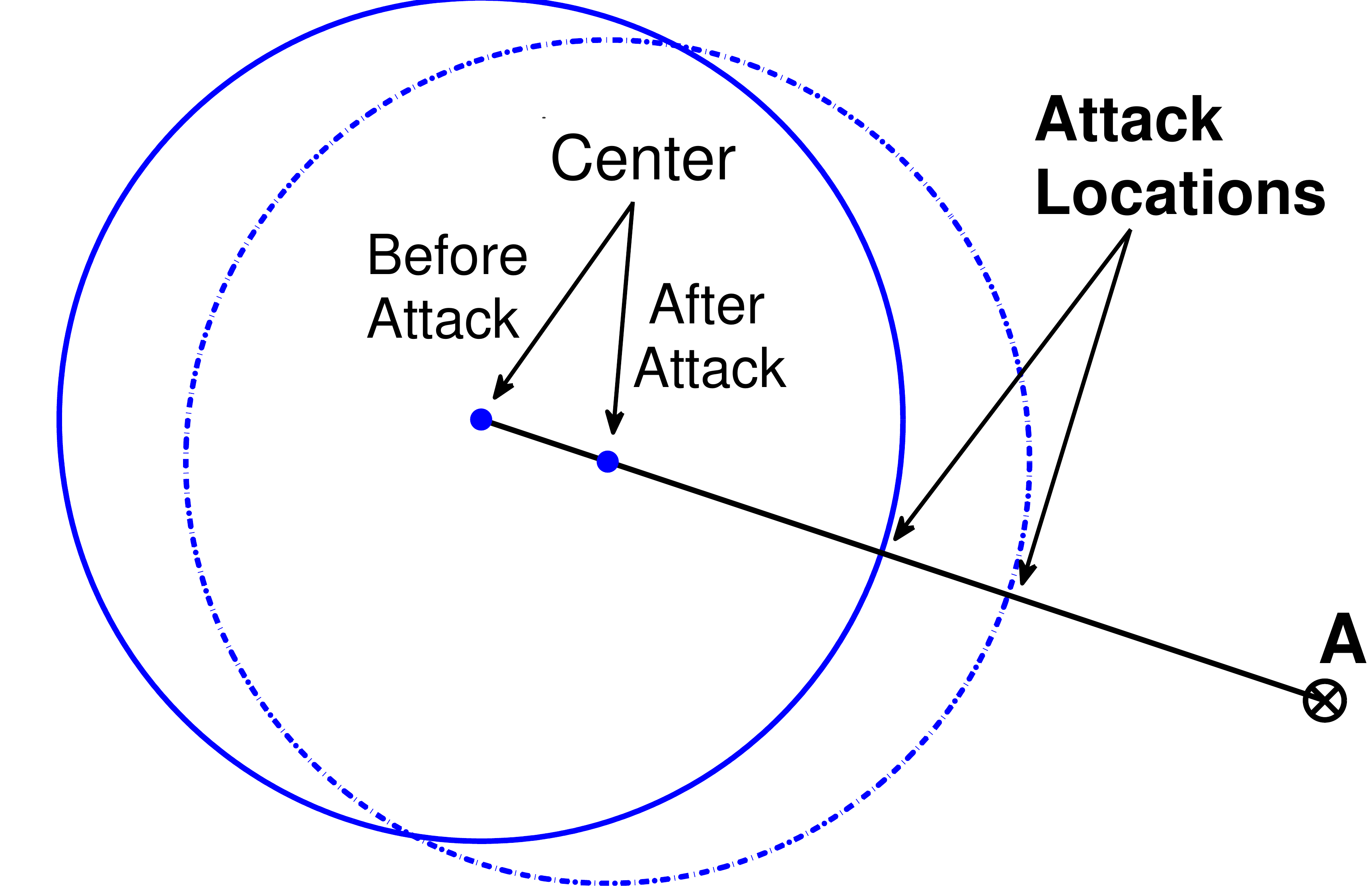}
\caption{Illustration of a poisoning attack. By iteratively inserting malicious training points
an attacker can gradually corrupt ``drag'' the centroid into a
direction of an attack. }
\label{fig:poisoning}
\end{figure}

What points should be used by an attacker in order to subvert online
anomaly detection? Intuitively one can expect that the optimal
one-step displacement of the center of mass is achieved by placing
attack point $\v x_i$ at the line connecting $\v c$ and $\v A$ such
that $||\v x_i - \v c|| = r$. A formal proof of the optimality of such
strategy and estimation of its efficiency constitutes the main
objective of security analysis of online anomaly detection.

In order to quantify the effectiveness of a poisoning attack, we
define the $i$-th relative displacement of the center of mass.
This quantity
measures the relative length of the projection of $\v c_i$ onto the ``attack direction'' $\v a$
in terms of the radius of the normality ball. 

\begin{definition}[Relative displacement]
\hspace{0cm}

 (a)  Let $\v A$ be an attack point and define  by $\v a=\frac{\v A -\v c_0}{||\v A -\v c_0||}$ the according attack direction vector. The \emph{$i$-th relative displacement},
  denoted by $D_i$, is defined as 
  $$D_i = \frac{( \v c_i - \v c_0) \cdot \v a}{r}$$.
  W.l.o.g. we assume that $\v c_0= \v 0$.

(b)
	Attack strategies maximizing the 
  displacement $D_i$ in each iteration $i$ are referred to as \emph{greedy optimal attack
    strategies}.
\end{definition}

%

\section{Attack Effectiveness for Infinite Horizon Centroid Learner}\label{sec:infiniteHorizon}

The effectiveness of a poisoning attack for an infinite horizon has
been analyzed in \cite{NelJos06}. We provide an alternative proof that
follows the framework proposed in the introduction.
\begin{theorem}\label{th:infinite}
  The $i$-th relative displacement $D_i$ of the online centroid learner with an infinite
  horizon under the poisoning attack is bounded by
  \begin{equation}
    \label{eq:nelson-bound}
    D_i \leq  \, {\rm ln}\left(1+\frac{i}{n}\right) ~ ,
  \end{equation}
  where $i$ is the number of attack points and $n$ the number of
  initial training points.
\end{theorem}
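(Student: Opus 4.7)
My plan is to follow the three conceptual steps outlined in the paper's methodology: identify the greedy optimal point analytically, write the resulting one-step recursion for the displacement, then telescope and bound the sum by an integral.

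First I would fix notation: at iteration $i$ the centroid is $\v c_i$, and because the horizon is infinite the update (\ref{eq:update-infinite}) with running index becomes $\v c_i = \frac{n+i-1}{n+i}\v c_{i-1} + \frac{1}{n+i}\v x_i$, equivalently $\v c_i - \v c_{i-1} = \frac{1}{n+i}(\v x_i - \v c_{i-1})$. The constraint that the attacker's injected point be accepted is $\|\v x_i - \v c_{i-1}\| \leq r$. Projecting the update onto the unit vector $\v a$ and applying Cauchy--Schwarz gives $(\v c_i - \v c_{i-1})\cdot \v a \leq \frac{r}{n+i}$, with equality iff $\v x_i = \v c_{i-1} + r\v a$. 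This identifies the greedy optimal attack point and shows that no other choice can yield a larger single-step gain along $\v a$.

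Next I would translate the one-step gain into a recursion for the displacement itself. Dividing by $r$ and using the definition of $D_i$ gives the clean telescoping identity $D_i - D_{i-1} \leq \frac{1}{n+i}$, with $D_0 = 0$. Summing over $k = 1,\dots,i$ yields
\begin{equation*}
D_i \;\leq\; \sum_{k=1}^{i}\frac{1}{n+k}.
\end{equation*}
The final step is an elementary integral comparison: since $1/x$ is decreasing, $\sum_{k=1}^{i}\frac{1}{n+k} \leq \int_{n}^{n+i}\frac{dx}{x} = \ln\!\left(1+\frac{i}{n}\right)$, which is exactly the claimed bound.

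The only subtlety I anticipate is justifying that the local greedy choice is globally optimal for maximizing $D_i$, not just the $i$-th increment. This is actually automatic here because the feasible set $\|\v x_i - \v c_{i-1}\| \leq r$ at step $i$ depends only on $\v c_{i-1}$, and the per-step contribution $\v c_i \cdot \v a - \v c_{i-1}\cdot \v a$ is bounded above by $r/(n+i)$ independently of the history; hence any deviation from the greedy point at some step leaves no way to recoup the lost projection later, so the greedy bound dominates every policy. The rest of the argument is routine, so the real content is just identifying the correct recursion and applying the harmonic--logarithm estimate.
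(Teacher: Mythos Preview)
Your proof is correct and follows essentially the same route as the paper: derive the one-step recursion $D_i - D_{i-1} \le 1/(n+i)$ from the acceptance constraint, identify the greedy optimal point $\v x_i = \v c_{i-1} + r\v a$, telescope, and bound the resulting harmonic-type sum by $\ln(1+i/n)$. The only cosmetic differences are that the paper phrases the maximization via the substitution $\v a_i = \v c_i + \v b_i$ rather than invoking Cauchy--Schwarz, and it bounds $\sum_{k=1}^i 1/(n+k)$ using the monotonicity of the error term in $\sum_{k=1}^m 1/k = \ln m + \epsilon_m$ rather than your (equivalent and arguably cleaner) integral comparison; your added remark on why the greedy policy is globally optimal is a nice touch the paper leaves implicit.
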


\begin{proof}
  We first determine an optimal attack strategy and then bound the
  attack progress.

  (a) Let $\v A$ be an attack point and denote by $\v a$ the corresponding attack direction vector. Let $\{\v a_i|i\in\mathbb
  N\}$ be adversarial training points. The center of mass at the
  $i$-the iteration is given in the following recursion:
\begin{equation}
\label{eq:recursion-center} 
\v c_{i+1} =  \left(1-\frac{1}{n+i}\right)\v c_i+\frac{1}{n+i}\v
a_{i+1},
\end{equation}
with initial value $\v c_0=0$. By the construction of the poisoning
attack, $||\v a_i-\v c_i || \leq r$, which is equivalent to $\v
a_{i}=\v c_i+ \v b_i$ with $|| \v b_i || \leq r$.
Eq.~(\ref{eq:recursion-center}) can thus be transformed into
$$ 
\v c_{i+1} =  \v c_i+\frac{1}{n+i}\v b_i .
$$
Taking scalar product with $\v a$ and using the definition of a relative
displacement, we obtain:
\begin{equation}
\label{eq:recursion-disp}
  D_{i+1} =  D_i+\frac{1}{n+i} \cdot \frac{\v b_i \cdot \v a}{r} ,
\end{equation}
with $D_0=0$. The right-hand side of the Eq.~(\ref{eq:recursion-disp}) is
clearly maximized under $||\v b_i||\leq 1$ by setting $\v b_i =r \v a$. Thus
the optimal attack is defined by 
\begin{equation}
\v a_i = \v c_i + r \v a.
\label{eq:opt-attack}
\end{equation}
(b) Plugging the optimal strategy $\v b_i = r\v a$ into
Eq~\eqref{eq:recursion-disp}, we have:
$$ 
D_{i+1} =  D_i+\frac{1}{n+i} .
$$
This recursion can be explicitly solved, taking into account that $d_0
= 0$, resulting in:
$$ 
D_i = \sum_{k=1}^i\frac{1}{n+k} = \sum_{k=1}^{n+i}\frac{1}{k}
-\sum_{k=1}^n\frac{1}{k} ~ .
$$
Inserting the upper bound on the harmonic series, $\sum_{k=1}^m \frac{1}{k}=\ln(m)+\epsilon_m$ with $\epsilon_m\geq 0$ into the above formula, and noting that $\epsilon_m$ is monotonically decreasing, we obtain
$$  D_i \leq {\rm ln}(n+i) - {\rm ln}(n) = {\rm ln}\left(\frac{n+i}{n}\right) 
  = {\rm ln}\left(1+\frac{i}{n}\right) ,$$
which completes the proof.
\end{proof}
\label{app:proofs-prelims}

Since the bound in Eq.~(\ref{eq:nelson-bound}) is monotonically
increasing, we can invert it to obtain the estimate of the effort
needed by an attacker to achieve his goal:
$$ i\geq n\cdot\left(\exp \left(D^{*} \right)-1\right) ~ .$$
It can be seen that an effort need to poison a online centroid learner is
\emph{exponential} in terms of the relative displacement of the center
of mass.\footnote{Even constraining a maximum number of online update steps cannot remove the
bound's exponential growth \citep{NelJos06}.}
In other words, an attacker's effort grows prohibitively fast
with respect to the separability of an attack from the innocuous data.
However, this is not surprising since due the infinitely growing training window
the contribution of new points to the computation of the center of mass is steadily decreasing.




\section{Poisoning Attack against Finite Horizon Centroid Learner}
\label{sec:full}

As it was shown in Section~\ref{sec:poisoning}, the poisoning attack
is ineffective against online centroid anomaly detection if all points
are kept ``in memory''. Unfortunately, memorizing the points defeats
the main purpose of online algorithms, i.e., their ability to adjust to
non-stationarity\footnote{Once again we remark that the data need not
  be physically stored, hence the memory consumption is not the main
  bottleneck in this case.}. Hence it is important to understand how
the removal of data points from a working set affects the security of
online anomaly detection. For that, the specific removal strategies
presented in Section~\ref{sec:online} must be considered.

It will turn out that for the average- and random-out rules the analysis can be carried out theoretically.
For the nearest-out rule the analysis is more complicated but an optimal attack can be stated as mathematical optimization problem, 
and the attack effectiveness can be analyzed empirically.

\subsection{Poisoning Attack for Average- and Random-out Rules}\label{subsec:average-out}

We begin our analysis with the average-out learner which follows
exactly the same update rule as the infinite-horizon online centroid learner
with the exception that the window size $n$ remains fixed instead of
growing indefinitely (cf. Section~\ref{sec:online}). 
Despite the similarity to the infinite-horizon case, the
result presented in the following theorem is surprisingly pessimistic. 

\begin{theorem}
\label{th:average-out}
  The $i$-th relative displacement $D_i$ of the online centroid learner with the average-out
  update rule under an worst-case optimal poisoning attack is 
  \begin{equation}
    \label{eq:average-bound}
    D_i = \frac{i}{n},
  \end{equation}
  where $i$ is the number of attack points and $n$ is the training window size. 
\end{theorem}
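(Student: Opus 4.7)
The plan is to mirror the structure of the proof of Theorem~\ref{th:infinite}, exploiting the fact that the average-out update differs from the infinite-horizon update only in that the denominator $n+i$ is replaced by the constant window size $n$. First I would write the center-of-mass recursion induced by the average-out rule (Eq.~\eqref{eq:update-infinite} with constant $n$), namely
$$\v c_{i+1} = \left(1-\frac{1}{n}\right)\v c_i + \frac{1}{n}\v a_{i+1},$$
and impose the stealth constraint $\|\v a_i - \v c_i\|\le r$ from the poisoning-attack model by writing $\v a_{i+1} = \v c_i + \v b_i$ with $\|\v b_i\|\le r$. Substitution collapses the recursion to $\v c_{i+1} = \v c_i + \frac{1}{n}\v b_i$.

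Next I would take the scalar product with the attack direction $\v a$ and divide by $r$, obtaining the scalar recursion
$$D_{i+1} = D_i + \frac{1}{n}\cdot\frac{\v b_i\cdot\v a}{r},\qquad D_0 = 0.$$
The crucial observation is that the per-step increment depends only on $\v b_i$ and not on the history, so a greedy maximization of each step is automatically globally optimal; by Cauchy–Schwarz the maximum under $\|\v b_i\|\le r$ is attained at $\v b_i = r\v a$, yielding the optimal attack $\v a_{i+1} = \v c_i + r\v a$ (exactly the geometric strategy anticipated in Section~\ref{sec:poisoning}).

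Finally I would substitute this optimum into the scalar recursion to obtain $D_{i+1} = D_i + 1/n$ and telescope from $D_0 = 0$ to conclude $D_i = i/n$. The equality (rather than an inequality) follows because the greedy attack is realizable for every $i$, so the bound is tight.

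Compared with Theorem~\ref{th:infinite}, the analysis requires no harmonic-series estimate and no logarithm: the arithmetic collapses because the denominator no longer grows with $i$. There is no real obstacle — the only thing to verify carefully is that greedy optimization coincides with the worst-case optimum over the whole horizon, and this is immediate from the fact that, after the $\v b_i$-reparameterization, the single-step gains decouple completely from the current state $\v c_i$. The resulting linear growth (in contrast with the logarithmic growth of the infinite-horizon case) is precisely what motivates the additional constraints studied in Sections~\ref{sec:limited} and~\ref{sec:defense}.
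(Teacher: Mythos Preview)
Your proposal is correct and follows essentially the same route as the paper: the paper's proof simply notes that the argument of Theorem~\ref{th:infinite} carries over with the constant denominator $n$, identifies the optimal attack $\v a_i = \v c_i + r\v a$, obtains the recurrence $D_{i+1} = D_i + 1/n$, and sums. Your write-up spells out the intermediate $\v b_i$-reparameterization and the greedy-equals-global justification more explicitly than the paper does, but the underlying argument is identical.
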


\begin{proof}
The proof is similar to the proof of
Theorem~\ref{th:infinite}. By
explicitly writing out the recurrence between subsequent
displacements, we conclude that the optimal attack is also attained by
placing an attack point on the line connecting $\v c_i$ and $\v a$ at
the edge of the sphere (cf. Eq.~(\ref{eq:opt-attack})):
$$
\v a_i = \v c_i + r \v a.
$$
It follows that the relative displacement under the optimal attack is
$$ 
D_{i+1} =  D_i+\frac{1}{n} .
$$
Since this recurrence is independent of the running index $i$, the
displacement is simply accumulated over each iteration, which yields
the bound of the theorem.  
\end{proof}

One can see, that unlike the logarithmic bound in
Theorem~\ref{th:infinite}, the average-out learner is characterized
by a linear bound on the displacement. As a result, an attacker only
needs a linear amount of injected points -- instead of an exponential
one -- in order to subvert an average-out learner. This cannot be
considered secure.

We obtain a similar result for the random-out removal strategy.

\begin{theorem}
\label{th:random-out}
  For the $i$-th relative displacement $D_i$ of the online centroid learner with the random-out
  update rule under an worst-case optimal poisoning attack it holds
  \begin{equation}
    E(D_i) = \frac{i}{n},
  \end{equation}
  where $i$ is the number of attack points, $n$ is the training window size, and the expectation is drawn over the choice of the removed data points. 
\end{theorem}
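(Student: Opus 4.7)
The plan is to mimic the proof of Theorem~\ref{th:average-out}, replacing the deterministic centroid-removal with a stochastic one and exploiting the fact that, in expectation, a uniformly random point of the working set behaves like the centroid. The only new ingredient is a conditioning argument that lets us reduce, in expectation, to the average-out recursion.

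First I would write out the recursion explicitly. Let $\mathcal{F}_i$ denote the history (working set, removals, and attacker's choices) up through iteration $i$, and let $\v X_{i+1}$ be the point chosen uniformly at random from the current $n$-point working set and removed at step $i+1$. By the finite-horizon update rule in Eq.~(\ref{eq:update-finite}),
\begin{equation*}
\v c_{i+1} \;=\; \v c_i + \frac{1}{n}\bigl(\v a_{i+1} - \v X_{i+1}\bigr).
\end{equation*}
Since $\v X_{i+1}$ is uniform on a set of $n$ points whose mean is exactly $\v c_i$, we have $E[\v X_{i+1}\mid \mathcal{F}_i] = \v c_i$, so
\begin{equation*}
E\bigl[\v c_{i+1}\mid \mathcal{F}_i\bigr] \;=\; \v c_i + \frac{1}{n}\bigl(\v a_{i+1}-\v c_i\bigr).
\end{equation*}
Taking the scalar product with $\v a$, dividing by $r$, and using the constraint $\|\v a_{i+1}-\v c_i\|\le r$, the conditional expected displacement is maximized exactly as in Eq.~(\ref{eq:opt-attack}) by the greedy choice $\v a_{i+1}=\v c_i+r\v a$, yielding
\begin{equation*}
E\bigl[D_{i+1}\mid \mathcal{F}_i\bigr] \;=\; D_i + \frac{1}{n}.
\end{equation*}

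Finally I would apply the tower property: $E[D_{i+1}]=E[D_i]+1/n$, and since $D_0=0$, this unrolls to $E[D_i]=i/n$, matching the claimed bound.

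The only delicate point is justifying that the greedy choice is still optimal in the randomized setting: because the attacker's constraint $\|\v a_{i+1}-\v c_i\|\le r$ and the current centroid $\v c_i$ are both $\mathcal{F}_i$-measurable, maximizing the conditional expectation of $D_{i+1}$ pointwise reduces to the deterministic linear maximization handled in Theorem~\ref{th:average-out}, so no subtlety about adapting to the realization of $\v X_{i+1}$ arises. This is the step that could be easy to mis-state, but once the filtration is fixed it is a one-line argument.
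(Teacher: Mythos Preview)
Your proof is correct and follows the same idea the paper sketches: the paper's one-line argument is that ``the random-out rule in expectation boils down to average-out,'' which is precisely your observation that $E[\v X_{i+1}\mid\mathcal F_i]=\v c_i$, after which the recursion and optimal-attack derivation coincide with Theorem~\ref{th:average-out}. You have simply fleshed out the filtration and tower-property details that the paper leaves implicit.
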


\begin{proof}
The proof is based on the observation that the random-out rule in 
expectation boils down to average-out, and hence is reminiscent to the proof of Th.~\ref{th:average-out}.
\end{proof}

\subsection{Poisoning Attack for Nearest-out Rule}
\label{subsec:nearest-out}

Let us consider the alternative update strategies mentioned in
Section~\ref{sec:centroid}. The update rule $\v c' = \v c +
\frac{1}{n} (\v x - \v x_{0})$ of the oldest-out strategy is
essentially equivalent to the update rule of the average-out except
that the outgoing center $\v c$ is replaced by the oldest point $\v
x_{0}$. In both cases the point to be removed is fixed in
advance regardless of an attacker's moves, hence the pessimistic
result developed in Section~\ref{subsec:average-out} remains valid for
this case. On average, the random-out update strategy is -- despite its
nondeterministic nature -- equivalent to the average-out strategy.
Hence, it also cannot be considered secure against a poisoning attack.

One might expect that the nearest-out strategy poses a stronger
challenge to an attacker, as it tries to keep as much of a working set
diversity as possible by retaining the most similar data to a new
point. It turns out, however, that even this strategy can be broken
with a feasible amount of work if an attacker follows a greedy optimal
strategy. The latter is a subject of our investigation in this
section.

\subsubsection{An optimal attack}
\label{subsec:optimal-attack}

Our investigation focuses on a \emph{greedy} optimal attack, i.e., an
attack that provides a maximal gain for an attacker in a single
iteration.
For the infinite-horizon learner (and hence also for the
average-out learner, as it uses the same recurrence in a proof), it is
possible to show that the optimal attack yields the maximum gain for
the entire sequence of attack iterations. For the nearest-out learner,
it is hard to analyze a full sequence of attack iterations, hence we
limit our analysis to a single-iteration gain. Empirically, even a
greedy optimal attack turns out to be effective.

To construct a greedy optimal attack, it suffices to determine for
each point $\v x_i$ the location of an optimal attack point $\v x_i^{*}$ to
replace $\v x_i$. This can be formulated as the following optimization
problem:
%

\begin{op}[greedy optimal attack]\label{op:greedy}
\renewcommand{\minalignsep}{6pt}
\refstepcounter{equation}
\label{eq:greedy-raw} 
\begin{alignat}{3}
  \{\v x_i^{*}, f_i\} \; =\;\; && \max_{\v x}\;\; &&& (\v x-\v x_i)\cdot\v
  a \tag{\theequation.a}\\
  &&\text{s.t.}\;\; &&& \Vert\v x-\v x_i\Vert \leq \Vert\v x-\v x_j\Vert,
  \quad
  \forall j=1,...,n \label{eq:greedy-raw-cell}\tag{\theequation.b}\\
  &&&&& \Vert\v x - \textstyle\frac{1}{n}\sum_{j=1}^n\v x_j \Vert \leq
  r.\label{eq:greedy-raw-ball}\tag{\theequation.c}
\end{alignat}
\end{op}
The objective of the optimization problem \ref{op:greedy}
reflects an attacker's goal of maximizing the projection of $\v x - \v
x_i$ onto the attack direction vector $\v a$. The constraint
(\ref{eq:greedy-raw-cell}) specifies the condition that the point $\v
x_i$ is the nearest neighbor of $\v x$ (i.e., $\v x$ falls into a
\emph{Voronoi cell} induced by $\v x_i$). The constraint
(\ref{eq:greedy-raw-ball}), when active, enforces that no solution
lies outside of the sphere. Hence the geometric intuition behind an
optimal attack, illustrated in Figure~\ref{fig:nearestout}, is to
replace some point with an attack point placed at the ``corner'' of
the former's Voronoi cell (including possibly a round boundary of the
centroid) that provides a highest displacement of the center in the
attack point's direction.
\begin{figure}
\centering
\includegraphics[width=0.6\textwidth]{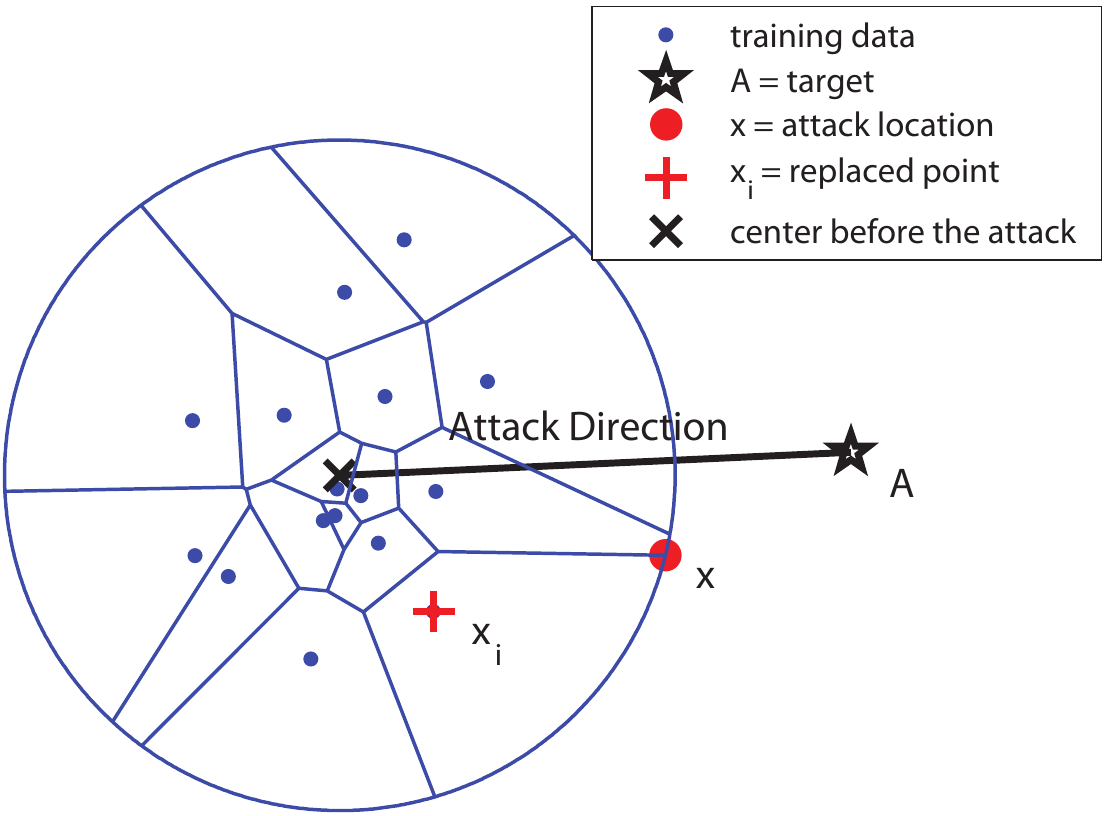}
\caption{The geometry of a poisoning attack for the nearest-out rule. An optimal attack 
is achieved at the boundary of a Voronoi cell.}
\label{fig:nearestout}
\end{figure}

The maximization of Eq.~(\ref{eq:greedy-raw}) over all points in a
current working set yields the index of the point to be replaced by an
attacker:
\begin{equation}
  \label{eq:outer-loop}
  \alpha = \text{argmax}\,_{i\in{1,...,n}} \;f_i
\end{equation}

By plugging the definition of a Euclidean norm into the inner
optimization problem (\ref{eq:greedy-raw}) and multiplying out the
quadratic constraints, all but one norm constraints reduce to simpler linear constraints:
\refstepcounter{equation}
\label{eq:greedy-final} 
\begin{alignat}{3}
  \{\v x_i^{*}, f_i\} \; =\;\; && \max_{\v x}\;\; &&& (\v x-\v
  x_i)\cdot\v
  a \tag{\theequation.a}\\
  &&\text{s.t.}\;\; &&& 2(\v x_j-\v x_i)\cdot \v x \leq \v x_j\cdot \v
  x_j - \v x_i\cdot \v x_i , \quad
  \forall j=1,...,n \label{eq:greedy-final-cell}\tag{\theequation.b}\\
  &&&&& \v x\cdot\v x -\textstyle\frac{2}{n}\sum_{j=1}^n\v x\cdot\v
  x_j \leq r^2- \textstyle\frac{1}{n^2} \sum_{j,k=1}^n\v x_j\cdot\v
  x_k.\label{eq:greedy-final-ball}\tag{\theequation.c}
\end{alignat}
Due to the quadratic constraint (\ref{eq:greedy-final-ball}), the
inner optimization task is not as simple as a linear or a quadratic
program. However, several standard optimization packages, e.g., CPLEX
or MOSEK, can handle such so-called quadratically constrained 
linear programs (QCLP) rather efficiently, especially 
when there is only one quadratic constraint. Alternatively,
one can use specialized algorithms for linear programming with a
single quadratic constraint \citep{Pan66,MarSch05} or convert the
quadratic constraint to a second-order cone (SOC) constraint and use
general-purpose conic optimization methods.

\subsubsection{Implementation of a greedy optimal attack}
\label{subsec:implem}

Some additional work is needed for a practical implementation of a
greedy optimal attack against a nearest-out learner.

A point can become ``immune'' to a poisoning attack, if its Voronoi
cell does not overlap with the hypersphere of radius $r$ centered at
$\v c_k$, at some iteration $k$. The quadratic constraint
(\ref{eq:greedy-raw-ball}) is never satisfied in this case, and the
inner optimization problem (\ref{eq:greedy-raw}) becomes
infeasible. From then on, a point remains in the working set forever
and slows down the attack progress. To avoid this awkward situation, an attacker must
keep track of all optimal solutions $\v x_i^{*}$ of the inner
optimization problems.  If any $\v x_i^{*}$ slips out of the
hypersphere after replacing the point $\v x_\alpha$ with $\v
x_\alpha^{*}$, an attacker should ignore the outer loop decision
(\ref{eq:outer-loop}) and instead replace $\v x_i$ with $\v x_i^{*}$.

A significant speedup can be attained by avoiding the solution of
unnecessary QCLP problems. Let $S = \{1, \ldots, i-1\}$ and
$\alpha_S$ be the current best solution of the outer loop problem
  (\ref{eq:outer-loop}) over the set $S$. Let $f_{\alpha_S}$ be the
corresponding objective value of an inner optimization problem
(\ref{eq:greedy-final}). Consider the following auxiliary quadratic
program (QP):
\refstepcounter{equation}
\label{eq:greedy-aux} 
\begin{alignat}{3}
  && \text{max}_{\v x}\;\; &&& \Vert\v x - \textstyle\frac{1}{n}\sum_{j=1}^n\v
  x_j \Vert \tag{\theequation.a}\\
  &&\text{s.t.}\;\; &&& 2(\v x_j-\v x_i)\cdot \v x \leq \v x_j\cdot \v
  x_j - \v x_i\cdot \v x_i , \quad
  \forall j=1,...,n \label{eq:greedy-aux-cell}\tag{\theequation.b}\\
  &&&&& (\v x - \v x_i) \cdot \v a \geq
  f_{\alpha_S}.\label{eq:greedy-aux-obj}\tag{\theequation.c}
\end{alignat}
Its feasible set comprises the Voronoi cell of $\v x_i$, defined by
constraints (\ref{eq:greedy-aux-cell}), further reduced by constraint
(\ref{eq:greedy-aux-obj}) to the points that improve the current value
$f_{\alpha_S}$ of the global objective function. If the objective
function value provided by the solution of the auxiliary QP
(\ref{eq:greedy-aux}) exceeds $r$ then the solution of the local QCLP
(\ref{eq:greedy-final}) does not provide an improvement of the global
objective function $f_{\alpha_S}$. Hence an expensive QCLP
optimization can be skipped.

\subsubsection{Attack Effectiveness}
\label{subsec:greedy-emp}

To evaluate the effectiveness of a greedy optimal attack, we perform a
simulation on an artificial geometric data. The goal of this
simulation is investigate the behavior of the relative displacement
$D_i$ during the progress of a greedy optimal attack. 

An initial working set of size $n = 100$  is sampled from a
$d$-dimensional Gaussian distribution with unit covariance
(experiments are repeated for various values of
$d\in\{2,...,100\}$). The radius $r$ of the online centroid learner is
chosen such that the expected false positive rate is bounded by
$\alpha=0.001$. An attack direction $\v a$, $\Vert \v a \Vert = 1$ is
chosen randomly, and 500 attack iterations ($5 * n$) are generated
using the procedure presented in Sections \ref{subsec:optimal-attack}
-- \ref{subsec:implem}. The relative displacement of the center in the
direction of attack is measured at each iteration. For statistical
significance, the results are averaged over $10$ runs.

Figure~\ref{fig:res-greedy-dim} shows the observed progress of the
greedy optimal attack against the nearest-out learner and compares it
to the behavior of the theoretical bounds for the infinite-horizon
learner (the bound of Nelson et al.) and the average-out learner. The
attack effectiveness is measured for all three cases by the relative
displacement as a function of the number of iterations. Plots for the
nearest-out learner are presented for various dimensions $d$ of the
artificial problems tested in simulations. The following two
observations can be made from the plots provided in
Figure~\ref{fig:res-greedy-displ}:

Firstly, the attack progress, i.e., the functional dependence of the
relative displacement of the greedy optimal attack against the
nearest-out learner with respect to the number of iterations, is
\emph{linear}. Hence, contrary to the initial intuition, the removal
of nearest neighbors to incoming points does not add security
against a poisoning attack.

Secondly, the slope of the linear attack progress \emph{increases with the
dimensionality of the problem}. For low dimensionality, the relative
displacement of the nearest-out learner is comparable, in absolute
terms, with that of the infinite-horizon learner. For high
dimensionality, the nearest-out learner becomes even less secure
than the simple average-out learner.
By increasing the dimensionality beyond $d>n$ the attack effectiveness cannot be increased.
 Mathematical reasons for such
behavior are investigated in Section~\ref{subsec:greedy-theory}.

A further illustration of the behavior of the greedy optimal attack is
given in Figure~\ref{fig:res-greedy-dim}, showing the dependence of
the average attack slope on the dimensionality. One can see that the
attack slope increases logarithmically with the dimensionality and
wanes out to a constant factor after the dimensionality exceeds the
number of training data points. A theoretical explanation of the observed 
experimental results is given in the next section.


\begin{figure}
  \subfigure[]{
    \includegraphics[width=0.49\textwidth]{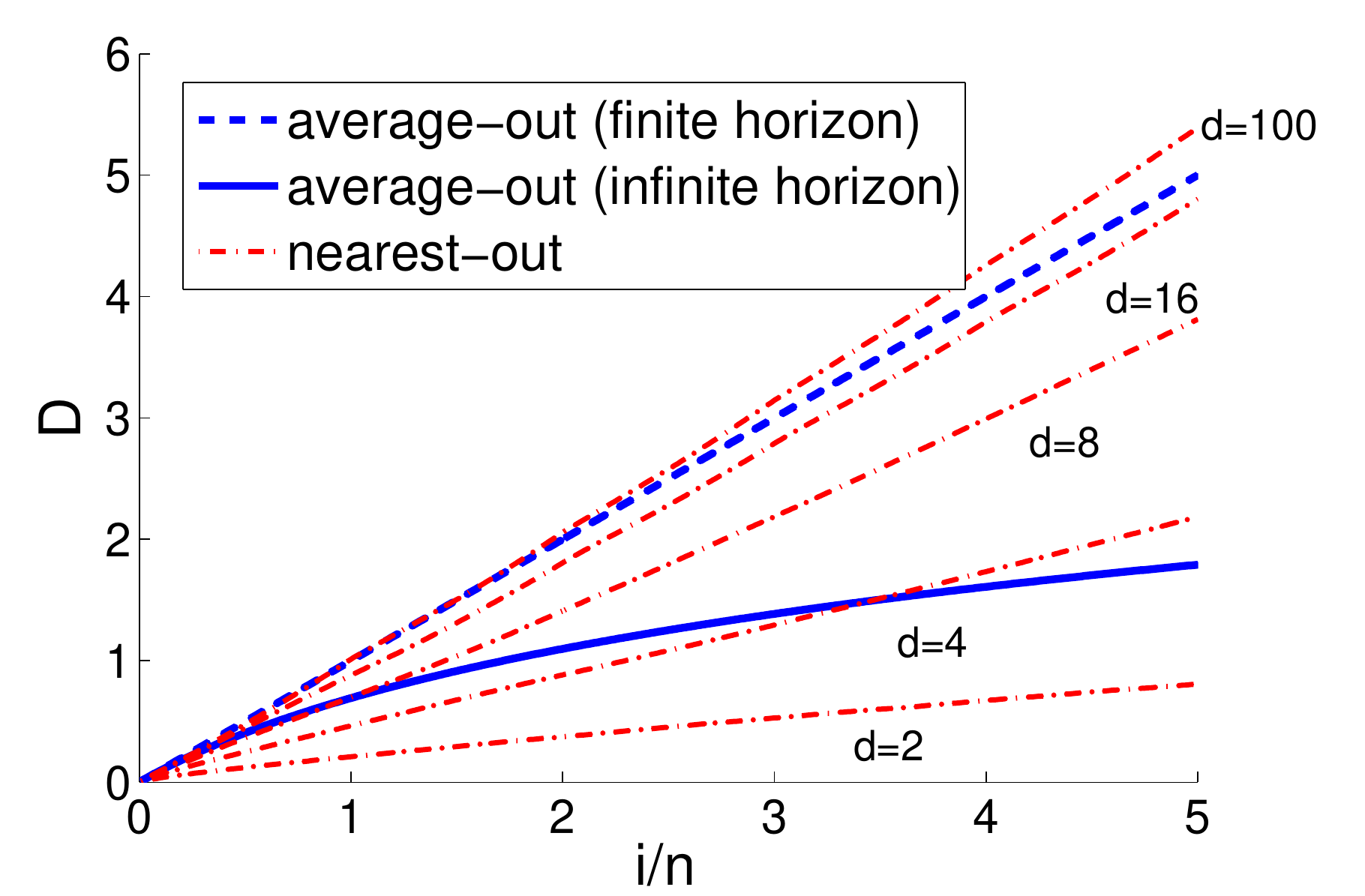}
    \label{fig:res-greedy-displ}
  }
  \subfigure[]{
    \includegraphics[width=0.49\textwidth]{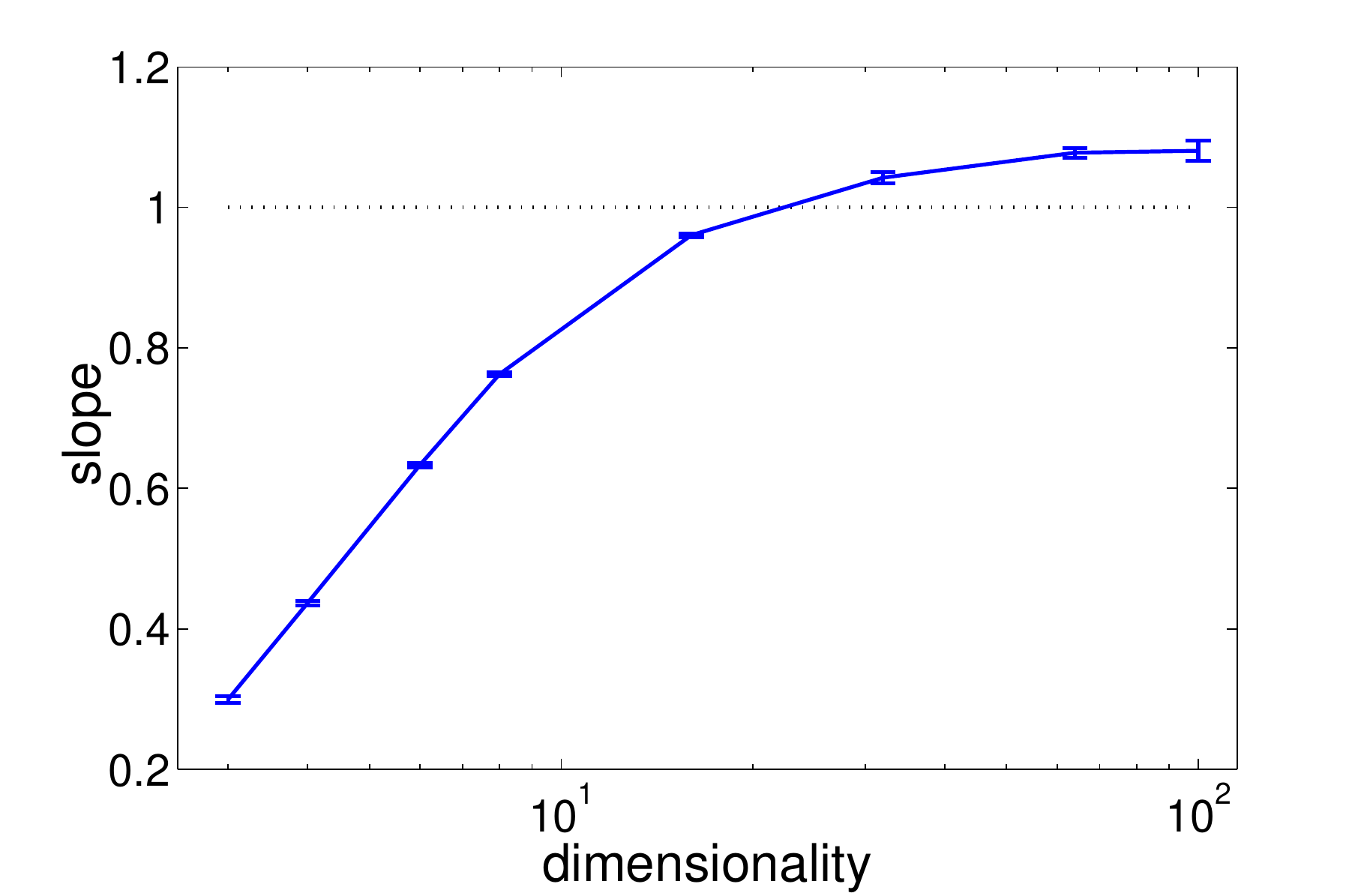}

   \label{fig:res-greedy-dim}
  }
  \caption{Effectiveness of a poisoning attack for the nearest-out
    rule as a function of input space dimensionality.  The
    displacement of a centroid into a direction of an attack grows
    linearly with the number of injected points. The slope of the
    linear growth increases with the input space dimensionality.
    Upper bounds on the displacement of the average-out rule
    rule are plotted for comparison.}
\end{figure}

\subsection{Concluding Remarks}

To summarize our analysis for the case of attacker's full control over
the data, we conclude that an optimal poisoning attack can
successfully subvert a finite-horizon online centroid learner for all
outgoing point selection rules. This conclusion contrasts with the
analysis of the infinite-horizon learner carried out in
\cite{BarNelSeaJosTyg06} that yields a logarithmic attack progress. As
a compromise, one can in practice choose a large working set size $n$,
which reduces the slope of a linear attack progress.

Among the different outgoing point selection rules, the nearest-out
rule presents some challenges to the implementation of an optimal
attack; however, some approximations can make such an attack feasible
while still maintaining a reasonable progress rate. The key factor for
the success of a poisoning attack in the nearest-out case lies in the
high dimensionality of the feature space. The progress of an optimal
poisoning attack depends on the size of Voronoi cells induced by
the training data points. The size of Voronoi cells is related linearly to
the volume of the sphere corresponding to attack's feasible
region. The increasing dimensionality of a feature space blows up the
volume of the sphere and hence causes a higher attack progress rate.

In the following sections we analyze two additional factors that can
affect the progress of a poisoning attack. First, we consider the case
of an attacker being able to control only a fixed fraction $\nu$ of
the training data. Subsequently we analyze a scenario in which an
attacker is not allowed to exceed a certain false positive rate
$\alpha$, e.g., by stopping online learning when a high false positive
rate is observed. In will be shown that both of these possible
constraints significantly reduce the effectiveness of a poisoning
attack.

\section{Poisoning Attack with Limited Bandwidth Constraint}
\label{sec:limited}

We now proceed with investigation of a poisoning attack under a limited bandwidth
constraint imposed on an attacker. We assume that an attacker can
only inject up to a fraction of $\nu$ of the training data. In
security applications, such an assumption is natural, as it may be
difficult for an attacker to surpass a certain amount of innocuous
traffic. For simplicity, we restrict ourselves to the average-out
learner, as we have seen that it only differs by a constant from a
nearest-out one and in expectation equals a random-out one.


\subsection{Learning and Attack model}
The initial online centroid learner is centered at the position $\v X_0$ and has the
radius $r$ (w.l.o.g. assume $X_0=0$ and $r=1$).  At each iteration a
new training point arrives which is either inserted by an adversary or
is drawn independently from the distribution of innocuous points, and a
new center of mass $\v X_i$ is calculated\footnote{To emphasize the
  probabilistic model used in this section, we denote the location of
  a center and the relative displacement by capital letters.}. The
mixing of innocuous and attack points is modeled by a Bernoulli random
variable with the parameter $\nu$. 
Adversarial points $\v A_i$ are
chosen according to an attack function $f$ depending on the actual state of
the learner $\v X_i$. 
The innocuous pool is modeled by a probability
distribution, from which the innocuous points $\vepsilon_i$ are
independently drawn. We assume that the expectation of
innocuous points $\vepsilon_i$ coincides with the initial center of mass:
$E(\vepsilon_i) = \v X_0$. Furthermore, we assume that all innocuous
points are accepted by the initial learner, i.e., $\Vert \vepsilon_i - X_0\Vert\leq
r$.


Moreover, for didactical reasons, we make a rather artificial assumption, which we will drop in the next chapter:
\emph{all innocuous points are accepted by the learner, at any time of the attack, independent of their actual distance to the center of mass}.
In the next section we drop this
assumption, such that the learner only accept points which fall within the actual radius.

The described probabilistic model  is formalized by the following axiom.

\begin{axiom}\label{axiom:limited}
  $\{B_i|i\in\mathbb N\}$ are independent Bernoulli random variables
  with parameter $\nu>0$.  $\vepsilon_i$ are i.i.d. random variables
  in a reproducing kernel Hilbert space $\mathcal{H}$, drawn from a fixed but unknown distribution
  $P_\vepsilon$, satisfying $E(\vepsilon_i)= \vzero$ and
  $\Vert\vepsilon_i\Vert\leq r=1$ for each $i$. $B_i$ and
  $\vepsilon_j$ are mutually independent for each $i,j$.  $f:\mathcal H
  \rightarrow\mathcal H$ is an attack strategy satisfying $\Vert
  f(x)-x\Vert\leq r$.  $\{X_i|i\in\mathbb N\}$ is a collection of random
  vectors such that $\v X_0=0$ and
\begin{equation}\label{eq:axiom-limited}
  \v X_{i+1} = \v X_i + \frac{1}{n}\left(B_if(\v X_i) + (1-B_i)
    \vepsilon_i - \v X_i\right).
\end{equation}
\end{axiom}



  For simplicity of notation, we in this section refer to a collection
  of random vectors $\{\v X_i|i\in\mathbb N\}$ satisfying
  Axiom~\ref{axiom:limited} as \emph{online centroid learner} denoted by
  $\mathcal C$. Furthermore we denote $\epsilon:=\vepsilon\cdot\v a$. 
  Any function $f$ satisfying Ax.~\ref{axiom:limited} is
  called \emph{attack strategy}. 
  
According to the above axiom an adversary's attack strategy is
formalized by an \emph{arbitrary} function $f$. This raises the
question which attack strategies are optimal in the sense that an
attacker reaches his goal of concealing a predefined attack direction vector in
a minimal number of iterations. An attack's progress is measured by
projecting the current center of mass onto the attack direction vector:

\begin{definition}\label{def:displ}
\hspace{0cm}

 (a)  Let $\v a$ be an attack direction vector (w.l.o.g. $||\v a||=1$), 
  and let $\mathcal C=\{\v X_i|i\in\mathbb N\}$ be a
  online centroid learner. The \emph{$i$-th displacement} of $\mathcal C$,
  denoted by $D_i$, is defined by
$$ D_i=\frac{X_i\cdot\v a}{R} ~ .$$

(b)
	Attack strategies maximizing the 
  displacement $D_i$ in each iteration $i$ are referred to as \emph{optimal attack
    strategies}.
\end{definition}

\subsection{An Optimal Attack}

The following result characterizes an optimal attack strategy for the
model specified in Axiom~\ref{axiom:limited}.

\begin{proposition}\label{prop:limited-optattack}
  Let $\v a$ be an attack direction vector and let $\mathcal C$ be a centroid
  learner. Then the \emph{optimal attack strategy} $f$ is given by
  \begin{equation}\label{eq:limited-optattack}
    f(X_i):=X_i+\v a ~ .
  \end{equation}
\end{proposition}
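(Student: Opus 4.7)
The plan is to reduce the proposition to a one-step optimization: since only the adversarial summand in the recurrence~\eqref{eq:axiom-limited} depends on $f$, maximizing the displacement in each step decouples into independently maximizing $f(\v X_i)\cdot\v a$ subject to the geometric constraint $\|f(\v X_i)-\v X_i\|\le 1$. Because the feasibility constraint on $f$ at step $i+1$ depends only on $\v X_{i+1}$ (not on the entire past choice of $f$), a choice that is greedy optimal at each step is also globally optimal in the sense of Definition~\ref{def:displ}(b), so it is enough to carry out the single-step argument.

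First, I would project the update~\eqref{eq:axiom-limited} onto the attack direction, dividing by $R=r=1$, to obtain the scalar recurrence
\begin{equation*}
D_{i+1} \;=\; D_i + \frac{1}{n}\Bigl(B_i\,(f(\v X_i)\cdot\v a) + (1-B_i)\,\epsilon_i - D_i\Bigr),
\end{equation*}
where $\epsilon_i=\vepsilon_i\cdot\v a$. The terms $B_i$, $\epsilon_i$, and $D_i$ are beyond the adversary's control; the attack strategy enters only through the quantity $g_i:=f(\v X_i)\cdot\v a$, and it enters monotonically (its coefficient $B_i/n$ is nonnegative). Maximizing $D_{i+1}$ pointwise (or in conditional expectation given $\mathcal F_i$) therefore amounts to maximizing $g_i$.

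Second, I would invoke the attack-ball constraint $\|f(\v X_i)-\v X_i\|\le 1$ and Cauchy--Schwarz with the unit vector $\v a$:
\begin{equation*}
g_i \;=\; \v X_i\cdot\v a + (f(\v X_i)-\v X_i)\cdot\v a \;\le\; \v X_i\cdot\v a + \|f(\v X_i)-\v X_i\|\,\|\v a\| \;\le\; D_i + 1.
\end{equation*}
Equality in the first inequality holds iff $f(\v X_i)-\v X_i$ is a nonnegative multiple of $\v a$, and equality in the second holds iff $\|f(\v X_i)-\v X_i\|=1$. Both are achieved simultaneously precisely when $f(\v X_i)=\v X_i+\v a$, which is the strategy claimed in~\eqref{eq:limited-optattack}.

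Finally, I would justify the step-by-step nature of the optimality claim: the upper bound $g_i\le D_i+1$ shows that to keep maximizing future displacements one also wants $D_i$ itself as large as possible, so the choice $f(\v X_i)=\v X_i+\v a$ at iteration $i$ does not hurt any later iteration. Formally, the conditional expectation $E[D_{i+1}\mid\mathcal F_i] = D_i(1-1/n) + (\nu g_i + (1-\nu)\,E[\epsilon_i])/n$ is strictly increasing in $g_i$, and since $E(\vepsilon_i)=\vzero$ implies $E[\epsilon_i]=0$, induction over $i$ extends the one-step optimality to optimality of $D_i$ for every $i$. The only real obstacle is this last bookkeeping point—making sure that the greedy choice is not merely a local maximizer—but it is resolved by observing that the constraint set for $f$ at every step is the same translated ball, and hence the greedy and the globally optimal strategies coincide.
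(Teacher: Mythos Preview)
Your proof is correct and follows essentially the same route as the paper: both project the recurrence~\eqref{eq:axiom-limited} onto $\v a$, write $f(\v X_i)=\v X_i+g(\v X_i)$ with $\Vert g\Vert\le 1$, and observe that since $B_i\ge 0$ the maximization reduces to choosing $g(\v X_i)\cdot\v a$ maximal, which Cauchy--Schwarz identifies as $g(\v X_i)=\v a$. Your additional discussion of greedy versus global optimality is not needed here, since by Definition~\ref{def:displ}(b) an ``optimal attack strategy'' is already defined as one that maximizes $D_i$ in each iteration.
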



\begin{proof}
  Since by Axiom~\ref{axiom:limited} we have  $ \Vert
  f(x)-x\Vert \leq r,$ any valid attack strategy can be written as $
  f(x) = x + g(x)$, such that $\Vert g\Vert \leq r = 1.$ It follows
  that
\begin{equation*}
\begin{split}
  D_{i+1} & \leq 
  \v X_{i+1}\cdot\v a \\
  & =  
  \left(\v X_i +\frac{1}{n}\left(B_if(\v X_i) + (1-B_i)\vepsilon_i 
  - \v X_i\right)
  \right) \cdot \v a\\
  &= D_i + \frac{1}{n} ( B_i D_i + B_ig(\v X_i)\!\cdot\! \v a + (1\!-\!B_i) \epsilon_i -
  D_i ) \, .
\end{split}
\end{equation*}
Since $B_i \geq 0$, the optimal attack strategy should maximize $g(\v
X_i)\cdot \v a$ subject to $||g(\v X_i)|| \leq 1$. The maximum is clearly
attained by setting $ g(\v X_i) = \v a$.
\end{proof}

\subsection{Attack Effectiveness}

The estimate of an optimal attack's effectiveness in the limited
control case is given in the following theorem.


\begin{theorem}\label{th:limited}
  Let $\mathcal C$ be a centroid learner under an optimal poisoning
  attack. Then, for the displacement $D_i$ of $\mathcal C$, it holds:
\begin{eqnarray*}
  {\rm (a)} ~ ~ ~ & E(D_i) & =  ~ (1-c_i)\frac{\nu}{1-\nu} \\
  {\rm (b)} ~ ~ ~ & {\rm Var}(D_i) & \leq ~ \gamma_i\left(\frac{\nu}
  {1-\nu}\right)^2+\delta_n
\end{eqnarray*}
where $\gamma_i =c_i-d_i$, $c_i:=\left(1-\frac{1-\nu}{n}\right)^i$, $d_i=\left(1-\frac{1-\nu}{n}\left(2-\frac{1}{n}\right)\right)^i$ and
$\delta_n:=\frac{\nu^2 +(1-d_i)}{(2n-1)(1-\nu)^2}$.
\end{theorem}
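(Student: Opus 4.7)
I combine the optimal-attack characterization from Proposition~\ref{prop:limited-optattack} with the recursion in Axiom~\ref{axiom:limited} to reduce the dynamics to a scalar stochastic recurrence. Substituting $f(\v X_i) = \v X_i + \v a$ into~\eqref{eq:axiom-limited} and projecting onto $\v a$ (with $\epsilon_i := \vepsilon_i\cdot\v a$) yields
$$ D_{i+1} \;=\; \alpha_i D_i + \tfrac{Y_i}{n},\qquad \alpha_i := 1 - \tfrac{1-B_i}{n},\quad Y_i := B_i + (1-B_i)\epsilon_i, $$
with $D_0=0$. By Axiom~\ref{axiom:limited}, $(\alpha_i,Y_i)$ is independent of $D_i$; using $B_i^2 = B_i$ I compute $E(\alpha_i) = \tilde c := 1 - \tfrac{1-\nu}{n}$ and $E(\alpha_i^2) = \tilde d := 1 - \tfrac{1-\nu}{n}\bigl(2-\tfrac{1}{n}\bigr)$, so that $c_i = \tilde c^{\,i}$ and $d_i = \tilde d^{\,i}$.

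For part (a), taking expectation in the recurrence and using $E(\epsilon_i)=0$ yields the linear recursion $\mu_{i+1} = \tilde c\,\mu_i + \tfrac{\nu}{n}$ with fixed point $\tfrac{\nu}{1-\nu}$; solving with $\mu_0 = 0$ gives $E(D_i) = \tfrac{\nu}{1-\nu}(1 - c_i)$, as claimed.

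For part (b), I write $D_i = \mu_i + \tilde D_i$ with $\tilde D_i$ centered, substitute into the scalar recurrence and square. Since $\tilde D_i$ has mean zero and is independent of $(B_i,\vepsilon_i)$, all cross terms vanish in expectation, leaving
$$ V_{i+1} \;=\; \tilde d\,V_i + \frac{R_i}{n^2},\qquad R_i := (1-\nu)\bigl[\nu(\mu_i+1)^2 + E(\epsilon_i^2)\bigr], $$
using the elementary moment identities $E((B_i-\nu)^2) = E((B_i-\nu)(Y_i-\nu)) = \nu(1-\nu)$ and $E((Y_i-\nu)^2) = (1-\nu)(\nu + E(\epsilon_i^2))$. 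The hypothesis $\Vert\vepsilon_i\Vert\leq 1$ gives $E(\epsilon_i^2)\leq 1$, hence $R_i \leq (1-\nu)[\nu(\mu_i+1)^2+1]$. I then prove $V_i \leq \gamma_i\bigl(\tfrac{\nu}{1-\nu}\bigr)^2 + \delta_n$ by induction on $i$; the base case $V_0 = 0$ is immediate. For the inductive step, the identities $1-\tilde d = \tfrac{(1-\nu)(2n-1)}{n^2}$, $\tilde c - \tilde d = \tfrac{(1-\nu)(n-1)}{n^2}$, $c_{i+1} = \tilde c\,c_i$ and $d_{i+1} = \tilde d\,d_i$ give by direct computation
$$ \Bigl[\gamma_{i+1}\bigl(\tfrac{\nu}{1-\nu}\bigr)^{2}+\delta_{n}^{(i+1)}\Bigr] - \tilde d\,\Bigl[\gamma_i\bigl(\tfrac{\nu}{1-\nu}\bigr)^{2}+\delta_{n}^{(i)}\Bigr] \;=\; \frac{\nu^2\bigl[(n-1)c_i+1\bigr]+1}{n^2(1-\nu)}, $$
where $\delta_n^{(i)}$ denotes $\delta_n$ evaluated at step $i$ (i.e., with $d_i$). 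By the induction hypothesis, the claim therefore reduces to verifying $R_i \leq \tfrac{\nu^2[(n-1)c_i+1]+1}{1-\nu}$; substituting $\mu_i+1 = \tfrac{1-\nu c_i}{1-\nu}$ and simplifying, this becomes the elementary polynomial inequality $\nu^2 c_i^2 \leq 1 + \nu c_i(n+1)$, trivially true for $\nu,c_i \in [0,1]$.

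The main obstacle is choosing the correct inductive target: the particular form of $\delta_n$ is what makes the two identities for $1-\tilde d$ and $\tilde c-\tilde d$ combine so that the increment of the target collapses into $\tfrac{\nu^2[(n-1)c_i+1]+1}{n^2(1-\nu)}$, which is precisely the form one needs to match the one-step bound on $R_i$ after using $E(\epsilon_i^2)\leq 1$. Once this combinatorial matching is in place the remaining algebra is routine.
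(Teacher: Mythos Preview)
Your proof is correct. Part~(a) is exactly the paper's argument: substitute the optimal attack, project onto $\v a$, take expectations, and solve the resulting first-order linear recursion via the geometric series.

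For part~(b) your route differs from the paper's in a useful way. The paper works with the \emph{second moment}: it squares the recursion for $D_{i+1}$, takes expectations to obtain a linear recursion for $E(D_i^2)$ driven by $2\tfrac{\nu}{n}E(D_i)+\tfrac{1}{n^2}$, inserts the formula for $E(D_i)$ from (a), appeals to the geometric-series proposition, and only at the end subtracts $(E(D_i))^2$, using an auxiliary algebraic inequality $\tfrac{(1-c_i)(1-d_i)}{1-1/(2n)}-(1-c_i)^2 \le \tfrac{1}{2n-1}+c_i-d_i$ to reach the stated bound. You instead center first, derive a recursion \emph{directly} for $V_i=\mathrm{Var}(D_i)$, and close it by induction. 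This buys two things: (i) you avoid the paper's auxiliary inequality entirely, and (ii) your argument sidesteps the slightly informal step in the paper where the geometric-series formula (stated for a constant driving term $p$) is applied with the $i$-dependent factor $(1-c_i)$ still present. Your identification $R_i=(1-\nu)\bigl[\nu(\mu_i+1)^2+E(\epsilon_i^2)\bigr]$ and the one-step identity for the increment of the target are both correct, and the final reduction to $\nu^2 c_i^2\le 1+\nu c_i(n+1)$ is indeed trivial on $[0,1]$. The paper's approach has the minor advantage of yielding an explicit intermediate bound on $E(D_i^2)$, which is reused later in the more general setting of Theorem~\ref{th:limited-main}; your variance-only route is more streamlined for the present statement.
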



\begin{proof}
(a) Inserting the optimal attack strategy of Eq.~\eqref{eq:limited-optattack} into
Eq.~\eqref{eq:axiom-limited} of Ax.~\ref{axiom:limited}, we have:
$$ \v X_{i+1} = \v X_i + \frac{1}{n}\left(B_i\left(\v X_i+\v a\right) 
  + (1-B_i)\vepsilon_i - \v X_i\right) ~ , $$
which can be rewritten as:
\begin{equation}\label{eq:lim-main-th}
  \v X_{i+1} = \left(1-\frac{1-B_i}{n}\right)\v X_i + \frac{B_i}{n}\v a 
  + \frac{(1-B_i)}{n}\vepsilon_i ~ .
\end{equation}
Taking the expectation on the latter equation, and noting that by
Axiom~\ref{axiom:limited} $E(\vepsilon)=0$ and $E(B_i)=\nu$ holds, we have
$$  E\left(\v X_{i+1}\right) = \left(1-\frac{1-\nu}{n}\right)E(\v X_i) 
  + \frac{\nu}{n}\v a ~ ,$$
which by Def.~\ref{def:displ} translates to
$$ E(D_{i+1})=\left(1-\frac{1-\nu}{n}\right)E(D_i) + \frac{\nu}{n} ~ .$$
The statement (a) follows from the latter recursive eequation by Prop.~\ref{prop:geom-series} (formula of the geometric series).
For the more demanding proof of (b), see Appendix~\ref{app:proofs-limited}.
\end{proof}

The following corollary shows the asymptotic behavior of the above
theorem. 

\begin{corollary}\label{cor:limited}
  Let $\mathcal C$ be a centroid learner satisfying under an optimal
  poisoning attack. Then. for the displacement $D_i$ of $\mathcal C$,
  it holds:
\begin{eqnarray*}
   {\rm (a)} \quad\quad E(D_i) & \leq & \frac{\nu}{1-\nu} 
     ~ ~\text{for all} ~ i \\
   {\rm (b)} \hspace{2pt}\quad {\rm Var}(D_i) & \rightarrow & 0  \quad \quad
     ~ ~\text{for} ~ i,n\rightarrow\infty .
\end{eqnarray*}
\end{corollary}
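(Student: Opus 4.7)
The plan is to derive both parts of the corollary as direct consequences of the exact formula and the variance bound of Theorem~\ref{th:limited}, by analyzing the asymptotic behavior of the sequences $c_i$, $d_i$, $\gamma_i$, and $\delta_n$ appearing there.

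For part~(a) the argument is immediate. Since $\nu\in(0,1)$ and $n\geq 1$, the base $1-(1-\nu)/n$ lies strictly between $0$ and $1$, so $c_i=(1-(1-\nu)/n)^i\in(0,1)$ for every finite $i$. Substituting into the exact identity $E(D_i)=(1-c_i)\,\nu/(1-\nu)$ from Theorem~\ref{th:limited}(a) yields $E(D_i)<\nu/(1-\nu)$, which is the claimed bound. This is essentially a one-line observation once the theorem is granted.

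For part~(b), I would split the inequality $\mathrm{Var}(D_i)\leq \gamma_i(\nu/(1-\nu))^2+\delta_n$ into its two summands and show that each vanishes. The $\delta_n$ term is controlled by the trivial estimate $1-d_i\leq 1$, which gives $\delta_n\leq (1+\nu^2)/((2n-1)(1-\nu)^2)\to 0$ as $n\to\infty$. For the $\gamma_i$ term, observe that $\gamma_i=c_i-d_i\leq c_i=(1-(1-\nu)/n)^i$, and this tends to $0$ whenever the attack length $i$ grows sufficiently fast relative to the window (e.g., $i\to\infty$ for fixed $n$, or more generally $i/n\to\infty$). Combining the two estimates gives the required convergence of the variance to zero.

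The only delicate point---and the main issue in turning this sketch into a careful proof---is to pin down precisely what ``$i,n\to\infty$'' means here. Along the diagonal $i=tn$ with $t>0$ fixed, a direct computation using $(1-x/n)^n\to e^{-x}$ yields $c_i\to e^{-t(1-\nu)}$ and $d_i\to e^{-2t(1-\nu)}$, so $\gamma_i$ has the \emph{strictly positive} limit $e^{-t(1-\nu)}(1-e^{-t(1-\nu)})$ (attaining $1/4$ at $t=\ln 2/(1-\nu)$), and the variance bound does not vanish in that regime. The corollary therefore has to be read either in the iterated sense (first let $i\to\infty$ for fixed $n$ to kill $\gamma_i$, then $n\to\infty$ to kill $\delta_n$), or along joint paths on which $i/n\to\infty$. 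Under either reading the two estimates above combine to give $\mathrm{Var}(D_i)\to 0$, completing the proof.
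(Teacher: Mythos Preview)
Your argument is correct and follows exactly the route the paper takes: the paper's entire proof of this corollary is the single sentence ``The corollary follows by $\gamma_i,\delta_n\to 0$ for $i,n\to\infty$,'' so your derivation from Theorem~\ref{th:limited} via $0<c_i<1$ for part~(a) and $\gamma_i,\delta_n\to 0$ for part~(b) is precisely what is intended, only spelled out in more detail.

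Your closing paragraph, however, goes further than the paper and raises a genuine point the paper does not address. Your computation that along $i=tn$ one has $\gamma_i\to e^{-t(1-\nu)}(1-e^{-t(1-\nu)})>0$ is correct, so the variance bound of Theorem~\ref{th:limited}(b) indeed fails to vanish in that regime; the paper's one-line proof implicitly reads the double limit in the iterated sense (or along $i/n\to\infty$), and your explicit identification of this is a useful clarification rather than a defect in your argument.
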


\begin{proof}
The corollary follows by ~ $\gamma_i, \delta_n\rightarrow
0$ ~ for $i,n\rightarrow\infty$.
\end{proof}

\begin{figure}
  \centering
  \includegraphics[width=0.57\textwidth]{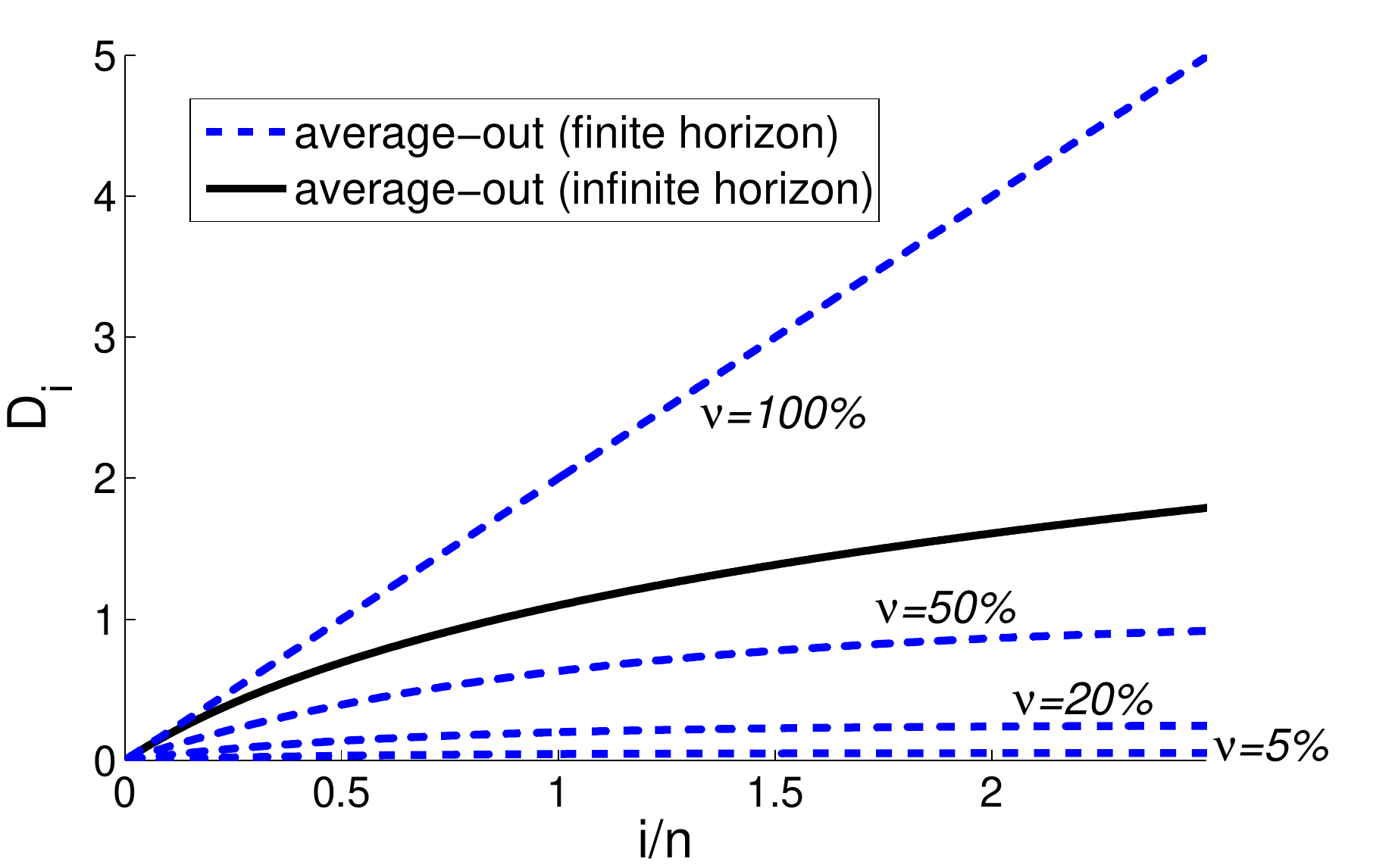}
  \label{fig:bound_limited} 
  \caption{Theoretical behavior of the displacement of a centroid
    under a poisoning attack for a bounded fraction of traffic under
    attacker's control. The infinite horizon bound of Nelson et al. is
    shown for comparison (solid line).}
\end{figure}

The growth of the above bounds as a function of an number of attack
iterations is illustrated in Fig.~\ref{fig:bound_limited}. One can see
that the attack's success strongly depends on the fraction of the
training data controlled by an attacker. For small $\nu$, the attack
progress is \emph{bounded by a constant}, which implies that an attack
fails even with an infinite effort. This result provides a \emph{much
  stronger security guarantee} than the exponential bound for the
infinite horizon case.

\begin{figure}
  \centering
  \includegraphics[width=0.57\textwidth]{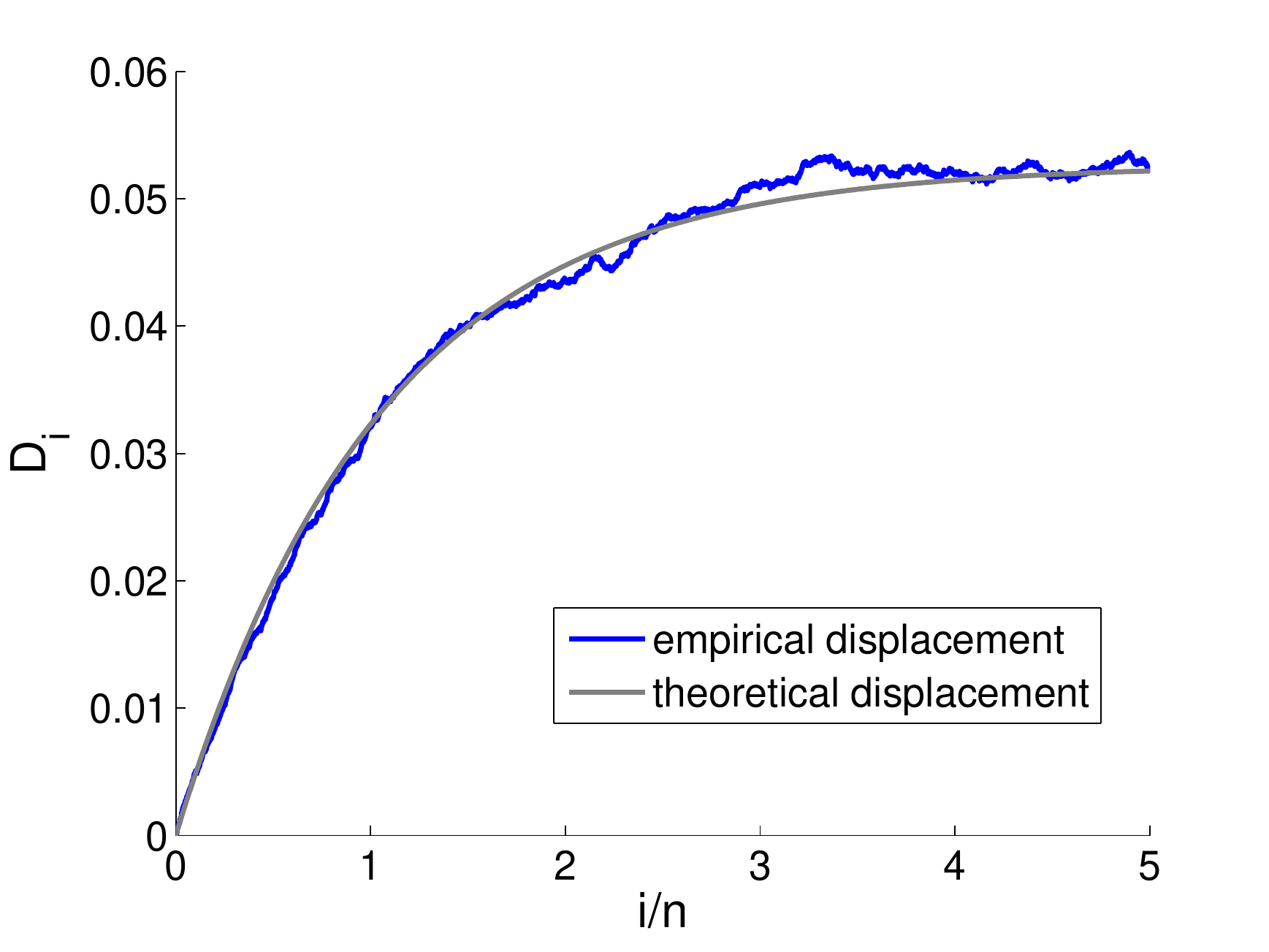}
  \label{fig:montecarlo_axiom6}
  \caption{Comparison of empirical displacemend of the centroid under
    poisoning attack with attacker's limited control ($\nu=0.05$) with
    a theoretical bound for the same setup. Emprical results are
    averaged over 10 runs; standard deviation is shown by vertical
    bars.}
\end{figure}

To empirically investigate the tightness of the derived bound we compute a Monte Carlo simulation of Axiom 6 with the
parameters $\nu=0.05$, $n=100000$, $\mathcal H=\mathbb R^2$, and $\epsilon$ being a uniform distribution over the unit circle.
Fig.~\ref{fig:montecarlo_axiom6} shows a typical displacement curve over the first $500,000$ attack iterations. 
Errorbars are computed over $10$ repetitions of the simulation.

\section{Poisoning Attack under False Positive Constraints}\label{sec:defense}

In the last section we have assumed, that innocuous training points
$\vepsilon_i$ are always accepted by the online centroid learner. But while an
attacker displaces the hypersphere, it may happen that some innocuous
points drop out of the hypersphere's boundary. 
We have seen that an
attacker's impact highly depends on the fraction of points he places.
If an attacker succeeds in pushing the hypersphere far enough such that
sufficiently many innocuous points drop out, he can quickly displace the
hypersphere.


\subsection{Learning and Attack Model}

Motivated by the above considerations we modify the probabilistic model of the last section as
follows.
Again we consider a online centroid learner initially anchored at a
position $\v X_0$ having a radius $r$, for the sake of simplicity and without loss of generality $\v X_0=0$ and $r=1$.
Then innocuous and adversarial
points are mixed into the training data according to a fixed fraction,
controlled by a binary valued random variable $B_i$. But now, in contrast to the last section, innocuous
points $\vepsilon_i$ are only accepted if and only if they fall
within a radius of $r$ of the hypersphere's center $\v X_i$. In addition, to avoid the learner
being quickly displaced, we require that the false alarm rate
is bounded by $\alpha$. If the latter is exceeded, we assume the
adversary's attack to have failed, i.e., a safe state of the learner is
loaded and the online update mechanism is temporarily switched off.

We formalize the probabilistic model as follows:

\begin{axiom}\label{axiom:defense}
  $\{B_i|i\in\mathbb N\}$ are independent Bernoulli random variables
  with parameter $\nu>0$.  $\vepsilon_i$ are i.i.d. random variables
  in a reproducing kernel Hilbert space $\mathcal{H}$, drawn from a fixed but unknown distribution
  $P_\vepsilon=P_{-\vepsilon}$, satisfying $E(\vepsilon_i)= \vzero$, and
  $\Vert\vepsilon_i\Vert\leq r=1$ for each $i$. $B_i$ and
  $\vepsilon_j$ are mutually independent for each $i,j$.  $f:\mathcal
  H\rightarrow\mathcal H$ is an attack strategy satisfying $\Vert
  f(x)-x\Vert\leq r$.  $\{X_i|i\in\mathbb N\}$ is a collection of random
  vectors such that $\v X_0=\v 0$ and
  \begin{eqnarray}\label{eq:axiom-defense}
    \v X_{i+1} = \v X_i + \frac{1}{n}\left(B_i\left(f(\v X_i)-\v X_i\right) 
    + (1-B_i)I_{\{\Vert \vepsilon_i-\v X_i\Vert\leq r\}}\left(\vepsilon_i 
    - \v X_i\right)\right) ~ ,
 \end{eqnarray}
 if $E_{\vepsilon_i}\left(I_{\{\Vert \vepsilon_i-\v X_i\Vert\leq
    r\}}\right)\leq 1-\alpha$ and by 
$\v X_{i+1}=\vzero$
elsewise. 
\end{axiom}



  For simplicity of notation, we in this section refer to a
	collection of random vectors $\{\v X_i|i\in\mathbb N\}$ satisfying
  Ax.~\ref{axiom:defense} as \emph{online centroid learner with maximal false positive rate $\alpha$} denoted by
  $\mathcal C$. Any function $f$ satisfying Ax.~\ref{axiom:defense} is
  called \emph{attack strategy}. Optimal attack strategies are characterized in term of the displacement as in the previous section (see Def.~\ref{def:displ}).

\subsection{Optimal Attack and Attack Effectiveness}

The following result characterizes an optimal attack strategy for the
model specified in Axiom~\ref{axiom:defense}.


\begin{proposition}\label{prop:defense-optattack}
  Let $\v a$ be an attack direction vector and let $\mathcal C$ be a centroid learner
  with maximal false positive rate $\alpha$.
  Then an \emph{optimal} attack strategy $f$ is given by
$$
    f(X_i):=X_i+\v a ~ .
$$
\end{proposition}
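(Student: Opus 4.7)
The plan is to mirror the greedy one‑step argument used for Proposition~\ref{prop:limited-optattack}, and to show that the two new features of Axiom~\ref{axiom:defense}---the indicator $I_{\{\Vert\vepsilon_i-\v X_i\Vert\leq r\}}$ that gates acceptance of innocuous points, and the reset branch $\v X_{i+1}=\vzero$ triggered when the induced false‑positive rate exceeds $\alpha$---do not depend on the attacker's choice at step $i$, and therefore do not change the optimum.

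First I would rewrite any admissible attack strategy as $f(x)=x+g(x)$ with $\Vert g\Vert\leq 1$, which is permissible because of the constraint $\Vert f(x)-x\Vert\leq r=1$ in Axiom~\ref{axiom:defense}. Substituting into Eq.~\eqref{eq:axiom-defense} and taking the inner product with the attack direction $\v a$ gives, in the non‑reset branch,
$$
D_{i+1}=D_i+\frac{1}{n}\Bigl(B_i\,g(\v X_i)\!\cdot\!\v a+(1-B_i)\,I_{\{\Vert\vepsilon_i-\v X_i\Vert\leq r\}}\,(\epsilon_i-D_i)\Bigr),
$$
while in the reset branch $D_{i+1}=0$ irrespective of $g$.

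The key observation is that the predicate selecting between the two branches, namely $E_{\vepsilon_i}(I_{\{\Vert\vepsilon_i-\v X_i\Vert\leq r\}})\leq 1-\alpha$, is a function of $\v X_i$ only, and the second summand in the non‑reset branch is a function of $\v X_i$ and $\vepsilon_i$ only. Hence the only $g$‑dependent term in $D_{i+1}$ is $\tfrac{B_i}{n}\,g(\v X_i)\!\cdot\!\v a$. Since $B_i\geq 0$ and $\Vert g(\v X_i)\Vert\leq 1$, Cauchy--Schwarz yields the pointwise maximum at $g(\v X_i)=\v a$, i.e.\ $f(\v X_i)=\v X_i+\v a$, matching the claimed strategy.

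The main potential obstacle is conceptual rather than computational: because the reset condition depends on $\v X_i$, a short‑sighted choice at step $i$ could in principle steer $\v X_{i+1}$ into or away from a reset at step $i+2$, so the one‑step optimum need not be the long‑horizon optimum. However, the proposition adopts the greedy notion of optimality from Def.~\ref{def:displ}(b), which asks only to maximize $D_{i+1}$ given $\v X_i$; under this definition the reset at the current step is already fixed by $\v X_i$ before the attacker acts, and no multi‑step analysis is required. This is the sole delicate point of the proof; the rest is a direct reprise of the calculation in Proposition~\ref{prop:limited-optattack}.
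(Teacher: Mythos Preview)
Your proposal is correct and follows essentially the same approach as the paper: rewrite $f(x)=x+g(x)$ with $\Vert g\Vert\leq 1$, project the update onto $\v a$, and observe that the only $g$-dependent term is $\tfrac{B_i}{n}\,g(\v X_i)\!\cdot\!\v a$, which is maximized pointwise at $g=\v a$. If anything, your version is more careful than the paper's, which essentially reuses the Axiom~\ref{axiom:limited} recursion verbatim (writing $D_{i+1}$ without the indicator $I_{\{\Vert\vepsilon_i-\v X_i\Vert\leq r\}}$) and handles the reset branch only via the remark ``either $D_i=0$, in which case the optimal $f$ is arbitrary''; your explicit observation that both the reset predicate and the innocuous term depend on $\v X_i$ alone, and hence are unaffected by the attacker's current move, is the cleaner justification.
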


\begin{proof}
  Since by Axiom~\ref{axiom:defense} we have  $ \Vert
  f(x)-x\Vert \leq r,$ any valid attack strategy can be written as $
  f(x) = x + g(x)$, such that $\Vert g\Vert \leq r = 1.$ It follows
  that either $D_i=0$, in which case the optimal $f$ is arbitrary, or we have
\begin{eqnarray*}
  D_{i+1} & = & 
  \v X_{i+1}\cdot\v a \\
  & = &  
  \left(\v X_i +\frac{1}{n}\left(B_if(\v X_i) + (1-B_i)\vepsilon_i 
  - \v X_i\right)
  \right) \cdot \v a\\
  &= &  D_i 
  +\frac{1}{n}\left(B_i\left(D_i+g(\v X_i)\right) 
  + (1-B_i)\vepsilon_i - D_i\right) 
\end{eqnarray*}
Since $B_i \geq 0$, the optimal attack strategy should maximize $g(\v
X_i)\cdot \v a$ subject to $||g(\v X_i)|| \leq 1$. The maximum is clearly
attained by setting $ g(\v X_i) = \v a$.
\end{proof}

The estimate of an optimal attack's effectiveness in the limited
control case is given in the following main theorem of this paper.

\begin{theorem}\label{th:limited-main}
 Let $\mathcal C$ be a centroid learner with maximal false positive rate $\alpha$  
 under a poisoning attack. 
 Then, for the displacement $D_i$ of $\mathcal C$, it holds:
\begin{eqnarray*}
  {\rm (a)} ~ ~ ~ &E(D_i)&\leq (1-c_i)\frac{\nu+\alpha(1-\nu)}{(1-\nu)
  (1-\alpha)} \\
  {\rm (b)} ~ ~ ~ &{\rm Var}(D_i)&\leq \gamma_i\frac{\nu^2}{(1-\alpha)^2(1-\nu)^2}  + \rho(\alpha) + \delta_n
\end{eqnarray*}
where $c_i:=\left(1-\frac{(1-\nu)(1-\alpha)}{n}\right)^i$, ~
$d_i=\left(1-\frac{1-\nu}{n}(2-\frac{1}{n})(1-\alpha)\right)^i$, ~
$\gamma_i=(c_i-d_i)$, ~
$\rho(\alpha)=\alpha\frac{(1-c_i)(1-d_i)(2\nu(1-\alpha)+\alpha)}{(1-\frac{1}{2n})(1-\nu)^2(1-\alpha)^2}$, ~ 
and $\delta_n=\frac{(1-d_i)(\nu+(1-\nu)E(\epsilon_i^2))}{(2n-1)(1-\nu)(1-\alpha)}$.
\end{theorem}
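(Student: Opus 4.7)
The plan is to parallel the proof of Theorem~\ref{th:limited}: derive a recursion for $E(D_i)$ for part~(a), then one for $E(D_i^2)$, and combine them to obtain $\mathrm{Var}(D_i)$ for part~(b). Substituting the optimal attack $f(\v X_i)=\v X_i+\v a$ from Prop.~\ref{prop:defense-optattack} into Eq.~\eqref{eq:axiom-defense} and projecting onto $\v a$ (with $r=1$, $\v X_0=\v 0$) gives the scalar recursion
\begin{equation*}
D_{i+1} \;=\; \Bigl(1-\tfrac{(1-B_i)\xi_i}{n}\Bigr)D_i + \tfrac{B_i}{n} + \tfrac{(1-B_i)\xi_i\,\vepsilon_i\cdot\v a}{n},
\end{equation*}
where $\xi_i := I_{\{\Vert\vepsilon_i-\v X_i\Vert\le r\}}$. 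Iterations on which the false-positive safeguard fires and resets $\v X_{i+1}=\v 0$ can only decrease $|D_{i+1}|$, so for an upper bound it suffices to work under the constraint $p_i := E(\xi_i\mid\v X_i)\ge 1-\alpha$.

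For part~(a) the key observation is that independence of $\vepsilon_i$ from $\v X_i$ together with $E(\vepsilon_i)=\v 0$ gives
$E(\xi_i\,\vepsilon_i\!\cdot\!\v a\mid\v X_i) = -E((1-\xi_i)\vepsilon_i\!\cdot\!\v a\mid\v X_i)$,
so $|E(\xi_i\,\vepsilon_i\!\cdot\!\v a\mid\v X_i)|\le(1-p_i)\le\alpha$ using $|\vepsilon_i\!\cdot\!\v a|\le 1$. Plugging this in and replacing $p_i$ by its worst-case value $1-\alpha$ in the contraction coefficient produces
\begin{equation*}
E(D_{i+1}) \;\le\; \Bigl(1-\tfrac{(1-\nu)(1-\alpha)}{n}\Bigr) E(D_i) + \tfrac{\nu+(1-\nu)\alpha}{n},
\end{equation*}
whence Prop.~\ref{prop:geom-series} with $D_0=0$ yields the claimed closed form.

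For part~(b) I would square the scalar recursion and take the conditional expectation given $\v X_i$. Using the idempotencies $B_i^2=B_i$, $(1-B_i)^2=1-B_i$, $B_i(1-B_i)=0$, and $\xi_i^2=\xi_i$, the cross products collapse cleanly to
\begin{equation*}
E(D_{i+1}^2\mid\v X_i) = \Bigl(1-\tfrac{(1-\nu)p_i(2-1/n)}{n}\Bigr)D_i^2 + 2 D_i T_i + S_i,
\end{equation*}
with $T_i=\tfrac{\nu}{n}+\tfrac{(1-\nu)(n-1)}{n^2}E(\xi_i\vepsilon_i\!\cdot\!\v a\mid\v X_i)$ and $S_i=\tfrac{\nu+(1-\nu)E(\xi_i(\vepsilon_i\!\cdot\!\v a)^2\mid\v X_i)}{n^2}$. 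Since $D_i^2\ge 0$ and the random coefficient is pointwise at most $\tilde d := 1-(1-\nu)(1-\alpha)(2-1/n)/n$ (by $p_i\ge 1-\alpha$), the contraction bound passes through the expectation. Using $|E(\xi_i\vepsilon_i\!\cdot\!\v a\mid\v X_i)|\le\alpha$ on the cross term and $E(\xi_i(\vepsilon_i\!\cdot\!\v a)^2\mid\v X_i)\le E(\epsilon_i^2)$ on the noise term produces a linear recursion of the form $E(D_{i+1}^2)\le \tilde d\,E(D_i^2) + A\,E(D_i) + B$ with constants $A,B$ depending only on $\nu,\alpha,n,E(\epsilon_i^2)$. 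Solving by Prop.~\ref{prop:geom-series}, substituting the bound on $E(D_i)$ from~(a), and subtracting $(E(D_i))^2$ produces the three-piece decomposition of $\mathrm{Var}(D_i)$: the $\gamma_i=c_i-d_i$ contribution arising from the gap between the two contraction rates $c_i,d_i$, the $\rho(\alpha)$ piece absorbing the $\alpha$-scaled cross terms, and the $O(1/n)$ stochastic-noise term $\delta_n$.

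The main obstacle lies in part~(b): $p_i$ is a random function of $\v X_i$ correlated with $D_i^2$, so contraction must be applied pointwise on the nonnegative quantity $D_i^2$ before taking expectations; and tracking how $\alpha$ enters through both the contraction coefficient $\tilde d$ and the cross-term bound, in a way that exactly reproduces the constants of $c_i$, $d_i$, $\rho(\alpha)$, and $\delta_n$, is a bookkeeping exercise rather than a conceptual one --- which is presumably why the full calculation is relegated to Appendix~\ref{app:proofs-limited}.
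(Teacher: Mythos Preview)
Your overall strategy matches the paper's: derive recursive inequalities for $E(D_i)$ and $E(D_i^2)$ under the optimal attack, solve each via the geometric-series formula, and combine. Part~(a) is essentially the paper's argument, and your conditional treatment of $\xi_i=I_{\{\|\vepsilon_i-\v X_i\|\le r\}}$ is the right device.

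There is, however, a real gap in part~(b). To upper-bound $\mathrm{Var}(D_i)=E(D_i^2)-(E(D_i))^2$ you need an \emph{upper} bound on $E(D_i^2)$ together with a \emph{lower} bound on $(E(D_i))^2$. Part~(a) supplies only an upper bound on $E(D_i)$, which you correctly feed into the cross term $A\,E(D_i)$ of the second-moment recursion; but when you then ``subtract $(E(D_i))^2$'' you cannot reuse that same upper bound --- doing so would give a lower bound on the variance, not an upper one. The paper closes this by separately establishing the companion estimate
\[
E(D_i)\;\ge\;(1-c_i)\,\frac{\nu}{1-\nu},
\]
obtained from the same recursion via the lower halves $E(I_{>}D_i)\ge 0$ and $E(I_{\le}\epsilon_i)\ge 0$ of Lemma~\ref{lemma:aux-bound} (the latter relying on the symmetry hypothesis $P_\vepsilon=P_{-\vepsilon}$ in Axiom~\ref{axiom:defense}). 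It is precisely the pairing of the upper bound $(1-c_i)\frac{\nu+\alpha(1-\nu)}{(1-\nu)(1-\alpha)}$ inside the second-moment computation with this lower bound in the subtraction step that produces both the leading term $\gamma_i\,\nu^2/((1-\alpha)^2(1-\nu)^2)$ and the $\rho(\alpha)$ remainder. Without deriving this lower bound, the three-piece decomposition you describe cannot be reached.

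A secondary point: several of your inequalities --- replacing $p_i$ by $1-\alpha$ in the contraction factor multiplying $D_i$, and the estimate $E(I_{>}D_i^q)\le\alpha E(D_i^q)$ --- tacitly require $D_i\ge 0$ pointwise. The paper enforces this via the $\max(0,\cdot)$ formulation of the recursion; your remark that the reset ``can only decrease $|D_{i+1}|$'' does not by itself guarantee nonnegativity along the trajectory.
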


The proof is technically demanding and is given in App.~\ref{app:defense}. 
Despite the more general proof reasoning, we recover the tightness of the bounds of the previous section
for the special case of $\alpha=0$, as shown by the following corollary.

\begin{corollary}\label{cor:tightness}
  Suppose a maximal false positive rate of $\alpha=0$. 
  Then, the bounds on the expected displacement $D_i$, as given by Th.~\ref{th:limited}~and~Th.~\ref{th:limited-main}, coincident. Furthermore,
  the variance bound of Th.~\ref{th:limited-main} upper bounds the one of Th.~\ref{th:limited}.
\end{corollary}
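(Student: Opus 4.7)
The plan is to set $\alpha=0$ in every expression of Theorem~\ref{th:limited-main} and check, term by term, that each formula either reduces to (for part~(a)) or dominates (for part~(b)) the corresponding expression in Theorem~\ref{th:limited}. First I would verify that the defining constants agree after substitution: at $\alpha=0$ the factor $(1-\alpha)$ in both $c_i=(1-(1-\nu)(1-\alpha)/n)^i$ and $d_i=(1-(1-\nu)(2-1/n)(1-\alpha)/n)^i$ collapses to $1$, so these coincide with the counterparts in Theorem~\ref{th:limited}, and hence so does $\gamma_i=c_i-d_i$. For part~(a), the numerator $\nu+\alpha(1-\nu)$ reduces to $\nu$ and the denominator $(1-\nu)(1-\alpha)$ to $(1-\nu)$, giving exactly $(1-c_i)\nu/(1-\nu)$, the expression of Theorem~\ref{th:limited}(a). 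This step is a pure algebraic identity.

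For the variance in part~(b), the coefficient $\nu^2/[(1-\alpha)^2(1-\nu)^2]$ of $\gamma_i$ simplifies to $(\nu/(1-\nu))^2$, matching Theorem~\ref{th:limited}, and $\rho(\alpha)$ carries a prefactor $\alpha$ and so vanishes at $\alpha=0$. The only non-identity step is the comparison of the two $\delta_n$ expressions. Here I would invoke the standing assumption $\Vert\vepsilon_i\Vert\leq 1$, which by Cauchy--Schwarz implies $E(\epsilon_i^2)\leq 1$, and then clear denominators to verify the required inequality between the two forms of $\delta_n$ using $0\leq 1-d_i\leq 1$ and $0<\nu<1$.

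The main obstacle is precisely this $\delta_n$ comparison: the two expressions have structurally different forms --- Theorem~\ref{th:limited} absorbs a $\nu^2$ contribution into its $\delta_n$ and carries an extra power of $1/(1-\nu)$, whereas Theorem~\ref{th:limited-main} keeps $E(\epsilon_i^2)$ explicit --- so matching them is not automatic. The key device is the bound $E(\epsilon_i^2)\leq 1$, which is presumably used implicitly during the proof of Theorem~\ref{th:limited} but is retained explicitly in Theorem~\ref{th:limited-main}; once it is invoked, the remaining inequality reduces to a routine calculation in $\nu$, $n$, and $(1-d_i)\in[0,1]$, concluding the proof.
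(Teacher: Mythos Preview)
Your proposal is correct and follows essentially the same route as the paper: set $\alpha=0$, observe that $c_i$, $d_i$, $\gamma_i$ and the coefficient of $\gamma_i$ coincide with their counterparts in Theorem~\ref{th:limited}, note that $\rho(\alpha)$ vanishes, and then compare the two $\delta_n$ terms by invoking $E(\epsilon_i^2)\le 1$ (which you correctly justify via Cauchy--Schwarz from $\Vert\vepsilon_i\Vert\le 1$). The paper's proof is slightly terser about the final $\delta_n$ inequality, but your explicit reduction to an elementary inequality in $\nu$ and $1-d_i\in[0,1]$ is exactly the intended verification.
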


\begin{proof}
We start by setting $\alpha=0$ in Th.~\ref{th:limited-main}(a). Then, clearly the latter bound coincidents with its counterpart in Th.~\ref{th:limited}.
For the proof of the second part of the corollary, we observe that $\rho(\alpha)=0$ and that the quantities $c_i,d_i,$ and $\gamma_i$ coincident with its counterparts in Th.~\ref{th:limited}. Moreover, removing the distribution dependence by upper bounding $E(\epsilon_i)\leq 1$ reveals that $\delta_i$ is upper bounded by its counter part of Th.~\ref{th:limited}. Hence, the whole expression on the right hand side of Th.~\ref{th:limited-main}(b) is upper bounded by its counterpart in Th.~\ref{th:limited}(b).
\end{proof}

The following corollary shows the asymptotic behavior of the above theorem. It follows from 
 $\gamma_i, \delta_n,\rho(\alpha)\rightarrow 0$ for
$i,n\rightarrow\infty$,  and $\alpha\rightarrow 0$, respectively.
\begin{corollary}
Let $\mathcal C$ be a centroid learner
  with maximal false positive rate $\alpha$ satisfying the
optimal attack strategy. Then for the displacement of $\mathcal C$,
denoted by $D_i$, we have:
\begin{eqnarray*}
  {\rm (a)} ~ ~ ~ & E(D_i) &   \leq ~ ~ \frac{\nu+\alpha(1-\nu)}
  {(1-\nu)(1-\alpha)} ~ ~ ~ ~ ~ ~  ~ ~ ~ ~ \text{for all} ~ i 
  \hspace{3.8cm}\\
  {\rm (b)} ~ ~ ~ & {\rm Var}(D_i) &   \rightarrow ~ 0  
   \quad\quad\quad\quad\quad\quad\quad\quad\quad ~ \text{for} ~ i,n\rightarrow\infty, \alpha\rightarrow 0 ~ .
\end{eqnarray*}
\end{corollary}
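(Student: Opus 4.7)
The plan is to deduce both statements of the corollary as direct consequences of Theorem~\ref{th:limited-main}, since the corollary merely extracts the asymptotic content of the finite-sample bounds stated there. Accordingly, the proof reduces to a few limit computations involving the auxiliary quantities $c_i$, $d_i$, $\gamma_i$, $\rho(\alpha)$, and $\delta_n$.

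For part (a), I would first observe that for $\nu, \alpha \in [0,1]$ and $n \geq 1$ the base $1 - \frac{(1-\nu)(1-\alpha)}{n}$ lies in $[0,1]$, so $c_i \in [0,1]$ and hence $1 - c_i \leq 1$. Substituting this elementary bound into Theorem~\ref{th:limited-main}(a) immediately yields the uniform estimate $E(D_i) \leq \frac{\nu + \alpha(1-\nu)}{(1-\nu)(1-\alpha)}$ claimed in the corollary.

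For part (b), I would verify that each of the three summands in the variance bound of Theorem~\ref{th:limited-main}(b) vanishes under the stated limit. First, for any $n \geq 2$ with $\nu < 1$ and $\alpha < 1$, the bases of $c_i$ and $d_i$ lie strictly inside $(0,1)$, so $c_i, d_i \to 0$ as $i \to \infty$, and therefore $\gamma_i = c_i - d_i \to 0$; this kills the first summand since $\frac{\nu^2}{(1-\alpha)^2(1-\nu)^2}$ is bounded as $\alpha \to 0$. Second, $\rho(\alpha)$ carries an explicit factor of $\alpha$, while the remaining fraction is uniformly bounded (its numerator involves only the bounded quantities $(1-c_i), (1-d_i), 2\nu(1-\alpha)+\alpha$, and its denominator is bounded away from zero for $\alpha \to 0$), so $\rho(\alpha) \to 0$ as $\alpha \to 0$. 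Third, using $\|\vepsilon_i\| \leq 1$ one has $E(\epsilon_i^2) \leq 1$, so the numerator of $\delta_n$ is uniformly bounded while the denominator grows linearly in $n$; hence $\delta_n = O(1/n) \to 0$ as $n \to \infty$. Combining the three limits gives $\operatorname{Var}(D_i) \to 0$.

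There is essentially no obstacle beyond these routine limit manipulations; the only mild subtlety is the order of limits in (b), but since each summand vanishes in its own parameter ($\gamma_i$ in $i$, $\rho(\alpha)$ in $\alpha$, $\delta_n$ in $n$), any iterated or joint limit consistent with the three parameters separately tending to their stated values suffices.
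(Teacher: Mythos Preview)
Your proposal is correct and follows exactly the paper's route: the paper's proof is the single sentence ``It follows from $\gamma_i, \delta_n, \rho(\alpha) \to 0$ for $i, n \to \infty$, and $\alpha \to 0$, respectively,'' and you have simply spelled out why each of those three limits holds and why $1-c_i \le 1$ gives part~(a).

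One small overreach: your closing remark that ``any iterated or joint limit \ldots\ suffices'' is not quite true for $\gamma_i$. Since $\gamma_i = c_i - d_i$ depends on both $i$ and $n$, a joint limit along, say, $i = n$ gives $c_i \to e^{-(1-\nu)(1-\alpha)}$ and $d_i \to e^{-2(1-\nu)(1-\alpha)}$, so $\gamma_i$ tends to a strictly positive constant, not zero. The paper's ``respectively'' signals the intended reading: each summand is killed by its own parameter (so an iterated limit with $i \to \infty$ taken for fixed $n$, or $n \to \infty$ for fixed $i$, works), and that is precisely what your main argument establishes. Just drop or weaken the joint-limit claim.
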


From the previous theorem, we can see that for small false positive rates $\alpha\approx 0$, which are common in many applications, e.g., Intrusion Detection (see Sect.~\ref{sec:ids} for an extensive analysis),
the bound approximately equals the one of the previous section, i.e., we have
$E(D_i)\leq\frac{\nu}{1-\nu}+\delta$ where $\delta> 0$ is a small constant with $\delta\rightarrow 0$. 
Inverting the bound we obtain the useful formula
\begin{equation}\label{eq:nu-crit}
 \nu\geq\frac{E(D_i)}{1+E(D_i)}
\end{equation}
which gives a lower bound on the minimal $\nu$ an adversary has to employ for an attack
to succeed.

The bound of Th.~\ref{th:limited-main} is shown in 
Fig.~\ref{fig:bound_limited} for different levels
of false positive protection $\alpha\in[0,0.025]$. 
We are especially interested in low positive rates
which are common in anomaly detection applications.
One can see that much of the 
tightness of the bounds of the previous section is preserved.
In the extreme case $\alpha=0$ the bounds coincident, as been shown in Cor.~\ref{cor:tightness}.

\begin{figure}
  \begin{center}
  \includegraphics[width=0.65\textwidth]{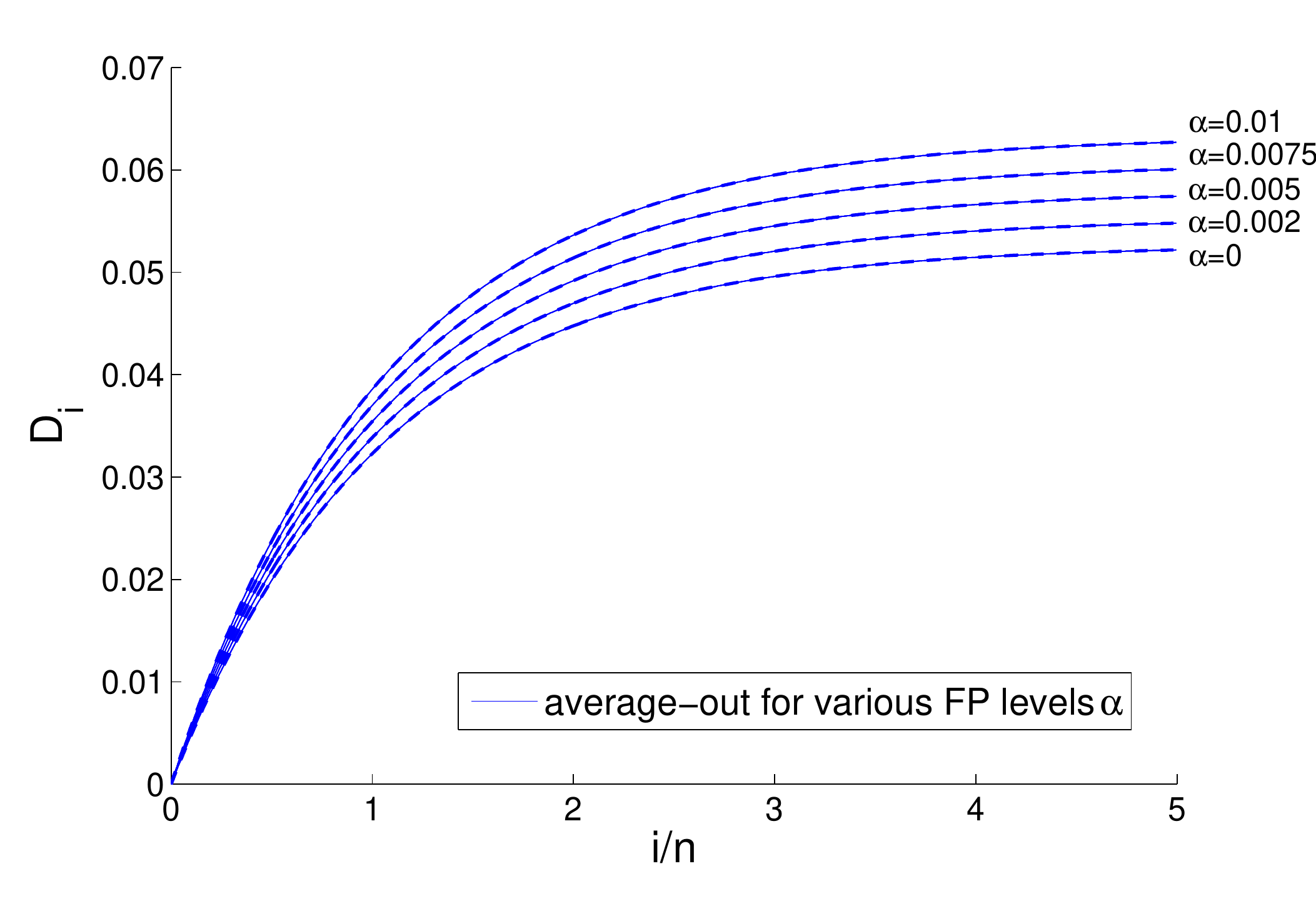}
  \end{center}
  \label{fig:bound-defense} 
  \caption{Theoretical behavior of the displacement of a centroid
    under a poisoning attack for different levels of false positive
    protection $\alpha$. The predicted displacement curve for
    $\alpha=0$ coincides with the one shown in
    Fig.~\ref{fig:montecarlo_axiom6}.}
\end{figure}

\section{Case Study: Application to Intrusion Detection}\label{sec:ids}

In this section we present the experimental evaluation of the
developed analytical instruments in the context of a particular
computer security application: intrusion detection. Centroid anomaly
detection has been previously used in several intrusion detection
systems
\cite[e.g.,][]{HofForSom98,LazErtKumOzgSri03,WanSto04,LasSchKotMue04,WanCreSto05,RieLas06,WanParSto06,RieLas07}. After
a short presentation of data collection, preprocessing and model
selection, 
our experiments aim at verification of the theoretically obtained
growth rates for attack progress as well as computation of constant
factors for specific exploits.

\subsection{Data Corpus and Preprocessing}\label{subsec:corpus}

The data to be used in our case study represents real HTTP traffic
recorded at Fraunhofer FIRST. We consider the intermediate granularity
level of requests which are the basic application-layer syntactic
elements of the HTTP protocol. Packet headers have been stripped, and
requests spread across multiple packets have been merged together. The
resulting benign dataset consists of 2950 byte strings containing
payloads of inbound HTTP requests. The malicious dataset consists of
69 attack instances from 20 classes generated using the Metasploit
penetration testing framework\footnote{{http://www.metasploit.com/}}. 
All exploits were normalized to match the frequent 
attributes of innocuous HTTP requests such that the malicious payload provides 
the only indicator for identifying the attacks. 

As byte sequences are not directly suitable for application of machine
learning algorithms, 
we deploy a $k$-gram spectrum kernel
\citep{LesEskNob02,ShaCri04} for the computation of the inner
products. To enable fast comparison of large byte
sequences (a typical sequence length 500-1000 bytes),
efficient algorithms using sorted arrays \citep{RieLas08} have been
implemented. Furthermore, kernel values are normalized according to
\begin{equation} 
  \label{eq:norm} 
  k(\v x,\bar{\v x}) 
  \longmapsto 
  \frac{k(\v x,\bar{\v x})}
  {\sqrt{ k(\v x,\v x) k(\bar{\v x}, \bar{\v x})}} ~ ,
\end{equation}
to avoid a dependence on the length of a request payload.
The resulting inner products subsequently have been processed by an
RBF kernel.

\subsection{Learning Model}

The feature space selected for our experiments depends on two
parameters: the $k$-gram length and the RBF kernel width
$\sigma$. Prior to the main experiments aimed at the validation of
proposed security analysis techniques, we investigate optimal model
parameters in our feature space. The parameter range considered is 
$k=1,2,3$ and $\sigma=2^{-5},2^{-4},...,2^{5}$.

To carry out model selection, we randomly partitioned the innocuous
corpus into disjoint training, validation and test sets (of sizes
$1000$, $500$ and $500$). The training partition is comprised of the innocuous data
only, as the online centroid learner assumes clean training data. The
validation and test partitions are mixed with $10$ attack instances
randomly chosen from \emph{different} attack classes.\footnote{The
  latter requirement reflects the goal of anomaly detection to
  recognize \emph{previously unknown} attacks.} For each partition,
different online centroid learner models are trained on a training set and
evaluated on a validation and a test sets using the 
normalized\footnote{such that an AUC of $1$ is the highest achievable value} AUC$_{[0,0.01]}$ as a
performance measure. For statistical significance, model selection is
repeated $1000$ times with different randomly drawn partitions. The
average values of the normalized AUC$_{[0,0.01]}$ for the different
$k$ values on test partitions are given in Table~\ref{tab:accur}.

It can be seen that the $3$-gram model consistently shows better AUC
values for both the linear and the best RBF kernels. We have chosen
the linear kernel for the remaining experiments, since it allows to
carry out computations directly in input space with only a marginal
penalty in detection accuracy. 



\addtolength{\extrarowheight}{2pt}
\begin{table}[h]
\centering
\begin{tabular}{lccc}
  & linear & \text{best RBF kernel} &  optimal $\sigma$  \\
  \text{1-grams} & $0.913\pm 0.051$ & $0.985\pm 0.021$ & $2^{-2.5}$ \\
  \text{2-grams} & $0.979\pm 0.026$ & $0.985\pm 0.025$ & $2^{-1.5}$ \\
  \text{3-grams} & $0.987\pm 0.018$ & $0.989\pm 0.017$ & $2^{-0.5}$
\end{tabular} 
\caption{Accuracy of the linear kernel and the best RBF kernel as well as the optimal bandwidth $\sigma$.}
\label{tab:accur}
\end{table}

\subsection{Intrinsic HTTP Data Dimensionality}\label{subsec:dim}

Dimensionality of training data makes an important contribution to
the (in)security of the online centroid learner when using the nearest-out update rule.
Simulations on artificial
data (cf. Section~\ref{subsec:greedy-emp}) show that the slope of a
linear progress rate of a poisoning attack increases for larger
dimensionalities $d$. This can be also explained theoretically
(cf. Section~\ref{subsec:greedy-theory}) by the fact that radius of
Voronoi cells induced by training data is proportional to
$\sqrt[d]{1/n}$, which increases with growing $d$.

For the intrusion detection application at hand, the dimensionality of
the chosen feature space ($k$-grams with $k = 3$) is $256^3$. In view
of Th.~\ref{th:span}, the dimensionality of the relevant
subspace in which attack takes place is bounded by the size of
the training data $n$, which is much smaller, in the range of 100 -- 1000
for realistic applications. Yet the real progress rate depends on the
\emph{intrinsic} dimensionality of the data. When the latter is
smaller than the size of the training data, an attacker can compute a
PCA of the data matrix \citep{SchSmoMue98} and project the original
data into a subspace spanned by a smaller number of informative
components. 

To determine the intrinsic dimensionality of possible training sets
drawn from HTTP traffic, we randomly drew $1000$ elements from the
training set, calculate a linear kernel matrix in the space of
$3$-grams and compute its eigenvalue decomposition. We then determine
the number of leading eigen-components preserving as a function of the
percentage of variance preserved. The results averaged over 100
repetitions are shown in Fig.~\ref{fig:ids-dim}.

\begin{figure}[tbhp]
\begin{center}
\includegraphics[width=0.65\textwidth]{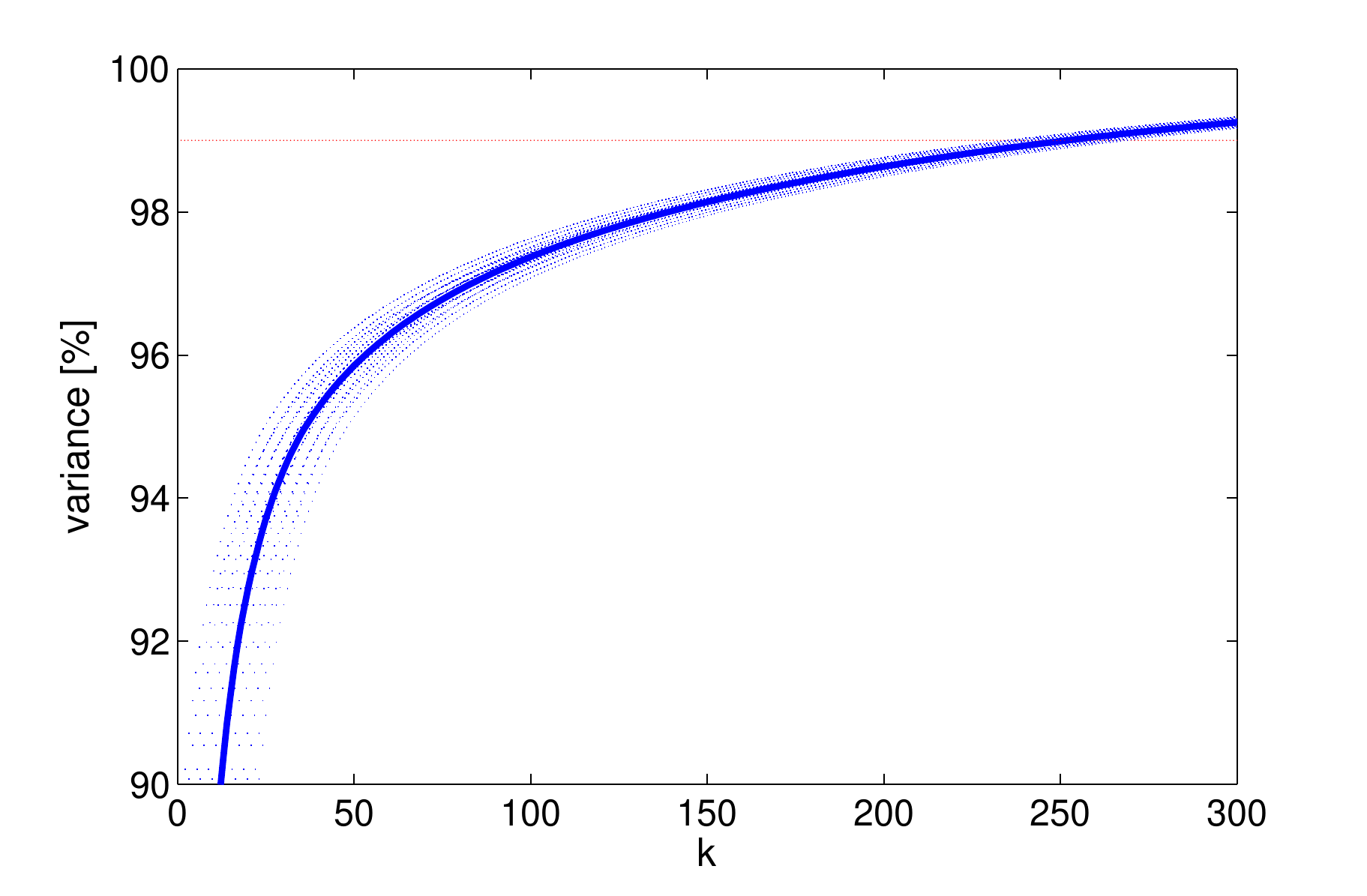}
\end{center}
\caption{Intrinsic dimensionality of the embedded HTTP data. The
  preserved variance is plotted as a function of the number of
  eigencomponents, $k$, employed for calculation of variance (solid
  blue line). The tube indicates standard deviations.}
\label{fig:ids-dim}
\end{figure}
It can be seen that $250$ kernel PCA components are needed to preserve
99\% of the variance. This implies that, although effective
dimensionality of HTTP traffic is significantly smaller that the
number of training data points, it still remains sufficiently high so
that the rate of attack progress approaches 1, which is similar to the
simple average-out learner.

\subsection{Geometrical Constraints of HTTP Data}
\label{subsec:ids-greedy}

Several technical difficulties arising from  data geometry have to be overcome in launching a
poisoning attack in practice. It turns out, however, that the consideration of the training data
geometry provides an attacker with efficient
tools for finding reasonable approximations for the above mentioned
tasks.

(1) First, we cannot directly simulate a 
poisoning attack in the $3$-gram input space due to
its high dimensionality. 
An approximately equivalent explicit feature space can be
constructed by applying kernel PCA to the kernel matrix $K$. 
By pruning the eigenvalues ``responsible'' for dimensions with low
variance one can reduce the size of the feature space to the implicit
dimensionality of a problem if the kernel matches the
data \citep{BraBuhMue08}. In all subsequent experiments we used $d
= 256$ as suggested by the experiments in Section~\ref{subsec:dim}.

(2) Second the crucial
normalization condition~(\ref{eq:norm}) 
requires that a solution lies on a unit sphere.\footnote{In the absence
of normalization, the high variability of the byte sequence
lengths leads to poor accuracy of the centroid anomaly detection.}
Unfortunately, this renders the calculation of an optimal attack point
non-convex. Therefore we
pursue the following heuristic procedure to enforce normalization:
we explicitly project local solutions (for each Voronoi
cell) to a unit sphere, verify their feasibility (the radius and the
cell constraints), and remove infeasible points from the outer
loop~(\ref{eq:outer-loop}).

(3) In general one cannot expect each feature space vector to correspond to
a valid byte sequence since not all combinations of $k$-grams can be
``glued'' to a valid byte sequence. In fact, finding a sequence with the
best approximation to a given $k$-gram feature vector has been shown
to be NP-hard \citep{FogLee06}. Fortunately
by the fact that an optimal
attack lies in the span of training data, i.e. Th.~\ref{th:span}, 
we construct an attack's
byte sequence by concatenating original sequences of basis points
with rational coefficients that approximately match the coefficients
of the linear combination.  A potential
disadvantage of this method is the large increase in the sequence lengths.
Large requests are conspicuous and may consume significant
resources on the attacker's part.

(4) An attack byte sequence must be embedded in a valid HTML
protocol frame.
Building a valid HTTP request with arbitrary content is, in
general, a non-trivial task, especially if it is required that a
request does not cause an error on a server. An HTTP request consists
of fixed format headers and a variable format body. A most
straightforward way to stealthily introduce arbitrary content is to
provide a body in a request whose method (e.g., GET) does not require
one. According to an RFC specification of the HTTP protocol, a request
body should be ignored by a server in this case.

\subsection{Poisoning Attack for Finite Horizon Centroid Learner}

The analysis carried out in Section~\ref{sec:full} shows that an
online centroid learner, in general, does not provide sufficient
security if an attacker fully controls the data. Practical efficiency
of a poisoning attack, however, depends on the dimensionality and geometry of training
data analyzed in the previous section. 
Theoretical results have been illustrated in simulations on
artificial data presented in
Section~\ref{subsec:greedy-emp}. Experiments in this section are
intended to verify whether these findings hold for real attacks
against HTTP applications. 
Our experiments focus on the nearest-out learner, as other update
rules can be easily attacked with trivial methods. 

We are now in the position to evaluate the progress rate of a
poisoning attack on real network traffic and exploits. The goal of
these experiments is to verify simulations carried out in Section
\ref{subsec:implem} on real data. 

Our experimental protocol is as follows. We randomly draw $n=250$
training points from the innocuous corpus, calculate the center of mass
and fix the radius such that the false positive rate on the training data is
$\alpha=0.001$. Then we draw a random instance from each of the 20
attack classes, and for each of these 20 attack instances generate a
poisoning attack as described in Section~\ref{subsec:ids-greedy}. An
attack succeeds when the attack point is accepted as innocuous by a
learning algorithm.

For each attack instance, the number of iterations needed for an
attack to succeed and the respective displacement of the center of
mass is recorded. Figure~\ref{fig:ids-evasion} shows, for each attack
instance, the behavior of the relative displacement at the point of
success as a function of a number of iterations. We interpolate a
``displacement curve'' from these pointwise values by a linear
least-squares regression. For comparison, the theoretical upper bounds
for the average-out and all-in cases are shown. Notice that the bound
for the all-in strategy is also almost linear for the small $i/n$
ratios observed in this experiment.

\begin{figure}
\begin{center}
\includegraphics[width=0.6\textwidth]{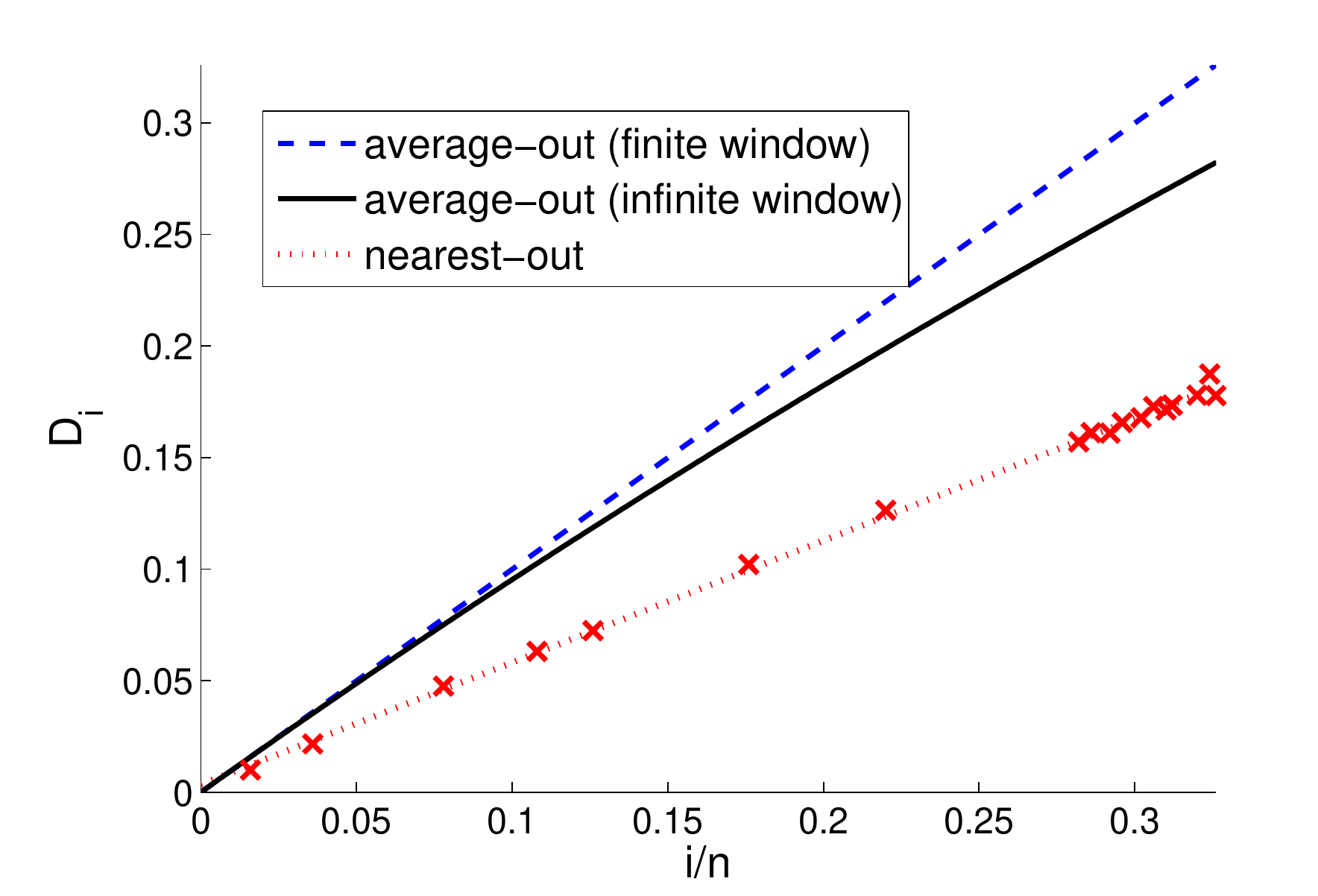}
\end{center}
\caption{Empirical displacement of the nearest-out centroid for $20$
  different exploits (crosses, linear fit shown by a red dotted line).
  Displacement values are shown at the point of success for each
  attack.  Theoretical bounds are shown for comparison (blue and black
  lines).}
\label{fig:ids-evasion}
\end{figure}

The observed results confirm that the linear progress rate in the full
control scenario can be attained in practice for real data. Compared to the
simulations of Section~\ref{subsec:ids-greedy}, the progress rate of
an attack is approximately half the one for the average-out
case. Although this somewhat contradicts our expectation that for a
high-dimensional space (of the effective dimensionality $d\sim 256$ as it was found in
Section~\ref{subsec:dim}) the progress rate to the average-out case
should be observed, this can be attributed to multiple approximations
performed in the generation of an attack for real byte sequences. The
practicality of a poisoning attack is further emphasized by a small
number of iterations needed for an attack to succeed: from 0 to only 35 percent
of the initial number of points in the training data have to be overwritten by an attacker.

%

\subsection{Critical Traffic Ratios of HTTP Attacks}\label{sec:nu-crit}

For the case of attacker's limited control, the success of the
poisoning attack largely depends on attacker's constraints, as shown
in the analysis in Sections~\ref{sec:limited} and
\ref{sec:defense}. The main goal of the experiments in this section is
therefore to investigate the impact of potential constraints in
practice. In particular, we are interested in the impact of the traffic
ratio $\nu$ and the false positive rate $\alpha$.

The analysis in Section~\ref{sec:limited}
(cf. Theorem~\ref{th:limited} and Figure~\ref{fig:bound_limited})
shows that the displacement of a poisoning attack is bounded from above by
a constant, depending on the traffic ratio $\nu$ controlled by an
attacker. Hence the susceptibility of a learner to a particular attack
depends on the value of this constant. If an attacker does not control
a sufficiently large traffic portion and the potential displacement is
bounded by a constant smaller than the distance from the initial
center of mass to the attack point, then an attack is bound to
fail. To illustrate this observation, we compute critical traffic
rates needed for the success of each of the 20 attack classes in our
malicious pool.

We randomly draw a $1000$-elemental training set from the innocuous pool
and calculate its center of mass (in the space of 3-grams). The radius
is fixed such the false positive rate $\alpha=0.001$ on innocuous data is
attained. For each of the 20 attack classes we compute the class-wise
median distance to the centroid's boundary. Using these distance
values we calculate the ``critical value'' $\nu_{\text{crit}}$ by
solving Th.~\ref{th:limited}(c) for $\nu$ (cf. Eq.~\eqref{eq:nu-crit}). 
The experiments have been repeated $10$ times
results are shown in Table~\ref{tab:crit-nu}. 

\begin{table}[h]
\small
\centering
\begin{tabular}{lcc}
  \textbf{Attacks} & \textbf{Rel. dist.} & $\boldsymbol\nu_{\textbf{crit}}$ \\
  
  ALT-N WebAdmin Overflow & $0.058\pm 0.002$ & $0.055\pm 0.002$ \\ ApacheChunkedEncoding & $0.176\pm 0.002$ & $0.150\pm 0.001$ \\
  AWStats ConfigDir Execution & $0.067\pm 0.002$ & $0.063\pm 0.002$ \\ Badblue Ext Overflow & $0.168\pm 0.002$ & $0.144\pm 0.001$ \\
  
  Barracuda Image Execution & $0.073\pm 0.002$ & $0.068\pm 0.002$ \\ Edirectory Host & $0.153\pm 0.002$ & $0.132\pm 0.001$ \\
  IAWebmail & $0.178\pm 0.002$ & $0.151\pm 0.001$ \\ IIS 5.0 IDQ exploit & $0.162\pm 0.002$ & $0.140\pm 0.001$ \\
  
  Pajax Execute & $0.107\pm 0.002$ & $0.097\pm 0.002$ \\ PEERCAST URL & $0.163\pm 0.002$ & $0.140\pm 0.001$ \\
  PHP Include & $0.097\pm 0.002$ & $0.088\pm 0.002$ \\ PHP vBulletin & $0.176\pm 0.002$ & $0.150\pm 0.001$ \\
  
  PHP XML RPC  & $0.172\pm 0.002$ & $0.147\pm 0.001$ \\ HTTP tunnel & $0.160\pm 0.002$ & $0.138\pm 0.001$ \\
  IIS 4.0 HTR exploit & $0.176\pm 0.002$ & $0.149\pm 0.002$ \\ IIS 5.0 printer exploit & $0.161\pm 0.002$ & $0.138\pm 0.001$ \\
  
  IIS unicode attack & $0.153\pm 0.002$ & $0.133\pm 0.001$ \\ IIS w3who exploit & $0.168\pm 0.002$ & $0.144\pm 0.001$ \\
  IIS 5.0 WebDAV exploit & $0.179\pm 0.002$ & $0.152\pm 0.001$ \\ rproxy exploit & $0.155\pm 0.002$ & $0.134\pm 0.001$ \\
  
\end{tabular} 
\caption{Relative distances (in radii) of exploits to the boundary of a
  centroid enclosing all training points and critical values of
  parameter $\nu$.}
\label{tab:crit-nu}
\end{table}

The results indicate that in order to subvert a online centroid learner an
attacker needs to control from 5 to 20 percent of traffic. This could
be a significant limitation on highly visible sites. 
Note that an attacker usually aims at earning money by hacking computer systems.
However generating competitive bandwidths at highly visible site 
is likely to drive the attacker's cost to exorbitant numbers.

On the other
hand, one can see that the traffic rate limiting alone cannot be seen
as sufficient protection instrument due to its passive nature. In the
following section we investigate a different protection scheme using
both traffic ratio and the false positive rate control.

\subsection{Poisoning Attack against Learner with False Positive Protection}

The analysis in Section~\ref{sec:limited}
(cf. Theorem~\ref{th:limited} and Figure~\ref{fig:bound_limited})
shows that the displacement of a poisoning attack is bounded from above by
a constant, depending on a traffic ratio $\nu$ and a maximal false positive rate
$\alpha$. Hence a detection system can be protected by 
observing the system's false positive rate and 
switching off the online updates if a defined threshold is
exceeded.

\subsubsection{Experiment 1: Practicability of False Positive Protection}

\begin{figure}
\begin{center}
\includegraphics[width=0.65\textwidth]{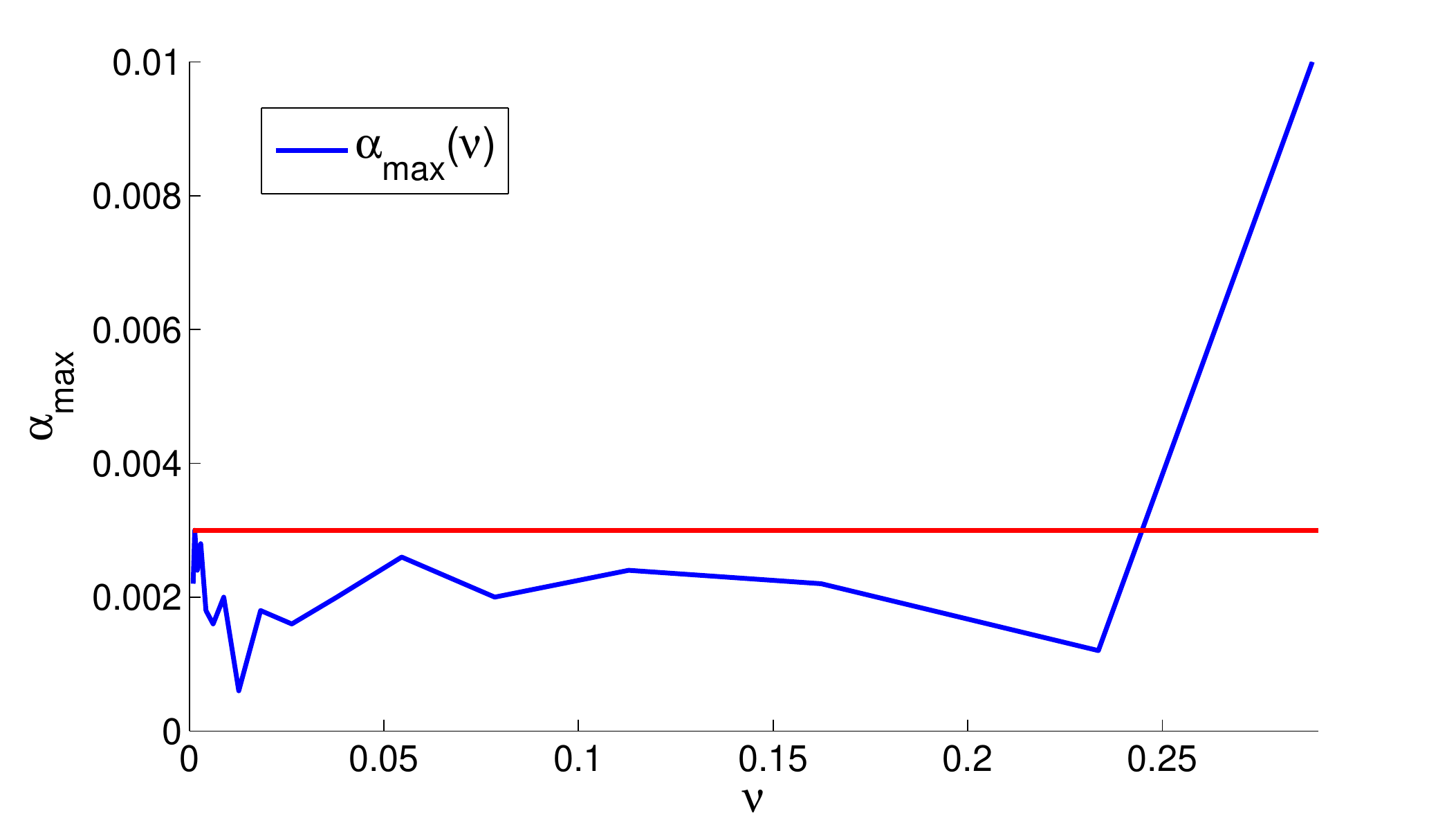}
\end{center}
\caption{Maximal false positive rate within 10000 attack iterations as
  a function of $\nu$ (maximum taken over 10 runs). }
\label{fig:alpha_max}
\end{figure}

However in practice the system should be as silent as possible, i.e., 
an administrator should be only alarmed if a fatal danger to 
the system is given. We hence in this
section investigate how sensible the false positive rate
is to small adversarial perturbations of the learner, caused by 
poisoning attack with small $\nu$.

Therefore the following experiment investigates 
the rise in the false positive rate $\alpha$ as a function of $\nu$.
From the innocuous pool we randomly drew a $1000$-elemental training set
on base of which a centroid is calculated. Thereby the radius is fixed
to the empirical estimate of the $0.001$-quantile of the innocuous pool
based on $100$ randomly drawn subsamples, i.e., we expect the
centroid having a false positive rate of $\alpha=0.001$ on the innocuous
pool. Moreover we randomly drew a second $500$-elemental training set
from the innocuous pool which is reserved for online training and
and a $500$-elemental hold out set on base of which a false positive rate can be
estimated for a given centroid.
Then we iteratively calculated poisoning attacks
with fixed IIS 5.0 WebDAV exploit as attack point by subsequently presenting 
online training points to the centroid learner which are rejected or
accepted based on whether they fall within the learner's radius.
For each run of a poisoning attack the false positiv rate is observed on base of the
hold out set.

In Fig.~\ref{fig:alpha_max} we plot for various values of $\nu$
the maximal observed false positive rate as a function of $\nu$, 
where the maximum is taken over all
attack iterations and $10$ runs.
One can see from the plot that $\alpha=0.005$ is a reasonable
threshold in our setting to ensure the systems's silentness. 

\subsubsection{Experiment 2: Attack Simulation for False Positive Protection}

\begin{figure}[bp]
  \centering
  \vspace*{-0.4cm}
  \includegraphics[width=0.6\linewidth]{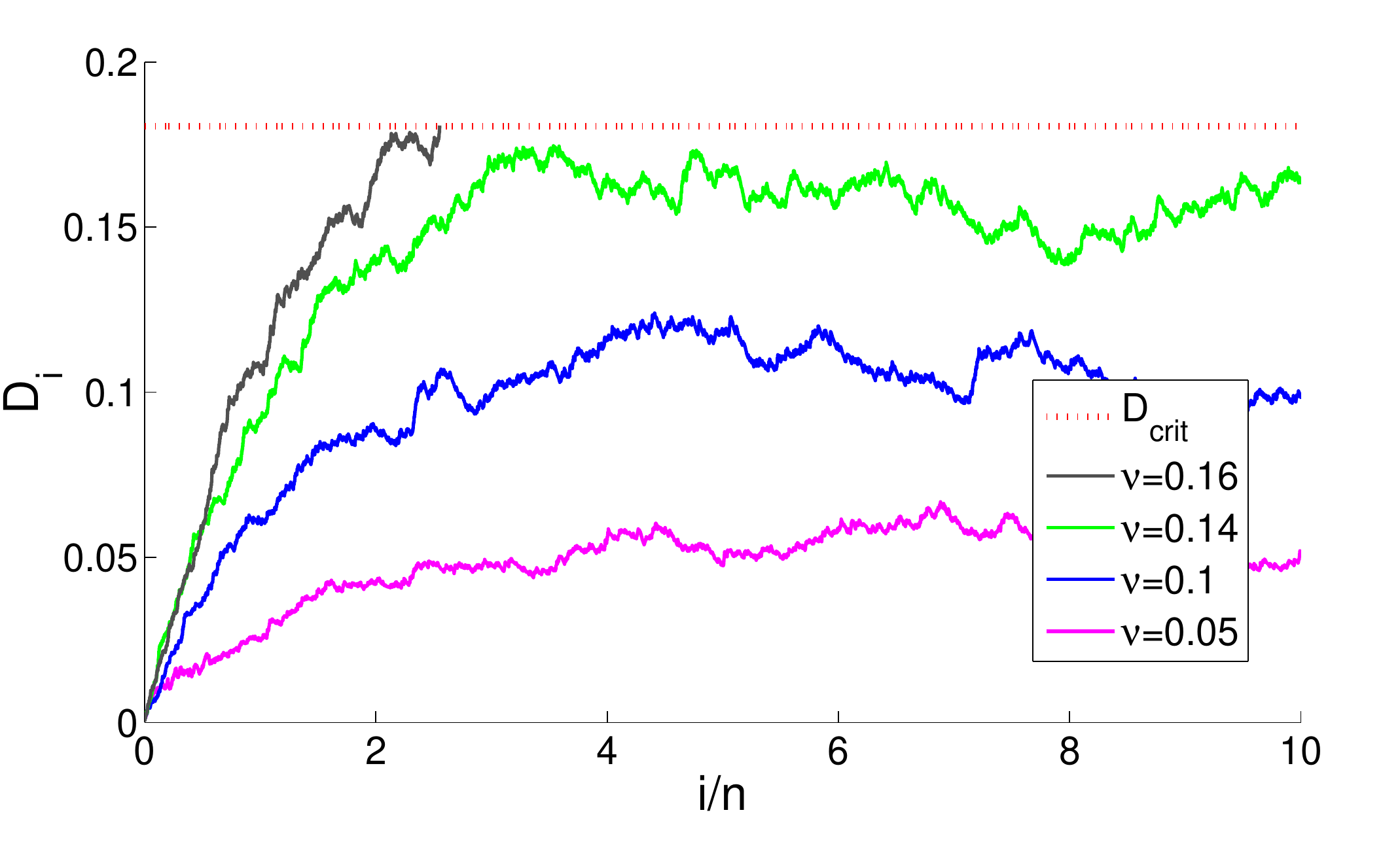}
  \vspace*{-0.4cm}
  \caption{A simulation of a poisoning attack under limited control.}
  \label{fig:attack_sim}
\end{figure}

In the previous experiment we have seen that $\alpha=0.005$ is a reasonable threshold
for a false positive protection to ensure a systems silentness. 
We in this section illustrate that the critical values from Section~\ref{sec:nu-crit} 
computed on base of Th.~\ref{th:limited} 
for maximal false positive rate of $\alpha=0.005$ still give a good approximation of
the true impact of a poisoning attack.

We fix a particular exploit in our malicious
corpus (IIS WebDAV 5.0 exploit) and run a poisoning attack against
the average-out centroid for various values of $\nu\in[0.05, 0.10, 0.14, 0.16]$,
recording the actual displacement curves. One can see from Fig.~\ref{fig:attack_sim}
that the attack succeeds for $\nu = 0.16$ but fails to reach the
required relative displacement of $D_{crit}=0.18$ for $\nu =
0.14$. The theoretically computed critical traffic ratio for
this attack according to Table~\ref{tab:crit-nu} 
is $\nu_{\rm crit} =0.152$.  The experiment shows that the derived bounds
are surprisingly tight in practice.

\subsubsection{Implementation of Poisoning Protection}

In Section~\ref{sec:limited} we have seen, that an attacker's
impact on corrupting the training data highly depends on the fraction
of adversarial points in the training data stream. This implies that a
high amount of innocuous training points constantly has to come in. In
Section~\ref{sec:defense} we have seen, that we can secure the learner
by setting a threshold on the false positive rate $\alpha$. Exceeding
the latter enforces further defense processes such as switching off
the online training process. Hence an confident estimation of $\alpha$
has to be at hand. How can we achieve the latter?

In practice, this can e.g. be done by caching the training data. When
the cache exceeds a certain value at which we have a confident
estimation of $\alpha$ (e.g., after 24 hours), the cached training data
can be applied to the learner. Since in applications including
intrusion detection, we usually deal with a very high amount of
training data, a confident estimation is already possible after short
time period.

\section{Discussion and Conclusions}

Understanding of security properties of learning algorithms is
essential for their protection against abuse. The latter can take
place when learning is used in applications with competitive interests
at stake, e.g., security monitoring, games, spam protection, reputation
systems, etc. Certain security properties of a learning algorithm must
be \emph{proved} in order to claim its immunity to abuse. To this
end, we have developed a methodology for security analysis and applied
it for a specific scenario of online centroid anomaly detection. The
results of our analysis highlight conditions under which an attacker's
effort to subvert this algorithm is prohibitively high.

Several issues discussed in this contribution have appeared in related work
albeit not in the area of anomaly detection.  Perhaps the most
consummate treatment of learning under an adversarial impact has been
carried out by \citet{DalDomMauSanVer04}. In this work, Bayesian
classification is analyzed for robustness against adversarial
impact. The choice of their classifier is motivated by widespread
application of the naive Bayes classification in the domain of spam
detection where real examples of adversarial impact have been observed
for a long time. The adversarial classification is considered as a
game between an attacker and a learner. Due to the complexity of
analysis, only one move by each party can be analyzed. Similar to our
approach, \cite{DalDomMauSanVer04} formalize the problem by defining
cost functions of an attacker and a learner (Step 1) and determine an
optimal adversarial strategy (Step 3).  Although the attacker's
constraints are not explicitly treated theoretically, several
scenarios using specific constraints have been tested
experimentally. No analysis of the attacker's gain is carried out;
instead, the learner's direct response to adversarial impact is
considered.

A somewhat related approach has been developed for handling worst-case
random noise, e.g., random feature deletion
\citep{GloRow06,DekSha08}. Similar to \cite{DalDomMauSanVer04}, both
of these methods construct a classifier that automatically reacts to
the worst-case noise or, equivalently, the optimal adversarial
strategy. In both methods, the learning problem is formulated as a
large-margin classification using a specially constructed risk
function. An important role in this approach is played by the
consideration of constraints (Step 2), e.g., in the form of the maximal
number of corruptible features. Although these approaches do not
quantitatively analyze attacker's gain, \citep{DekSha08} contains an
interesting learning-theoretic argument that relates classification
accuracy, sparseness, and robustness against adversarial noise.

To summarize, we believe that despite recent evidence of possible
attacks against machine learning and the currently lacking theoretical
foundations for learning under adversarial impact, machine learning
algorithms \emph{can} be protected against such impact. The key to
such protection lies in quantitative analysis of security of machine
learning. We have shown that such analysis can be rigorously carried
out for specific algorithms and attacks. Further work should extend
such analysis to more complex learning algorithms and a wider attack
spectrum.

\begin{acks}
The authors wish to thank Ulf Brefeld, Konrad Rieck, Vojtech Franc, Peter Bartlett and Klaus-Robert M\"uller for fruitful
discussions and helpful comments. Furthermore we thank Konrad Rieck for providing the network traffic.
This work was supported in part by the 
German Bundesministerium f\"ur Bildung und Forschung (BMBF) under the project REMIND (FKZ 01-IS07007A), by the German Academic Exchange Service, 
and by the FP7-ICT Programme of the European Community, under the PASCAL2  Network of Excellence, ICT-216886. 
\end{acks}

\appendix
\normalsize

\section{Notation Summary}
In this paper we use the following notational conventions.
 \begin{table*}[h]
 \begin{center}
 \begin{tabular}{lp{7cm}}
 $\mathcal C, r, \v c$ & centroid $\mathcal C$ with radius $r$ and center $\v c$\\
 $i$ & $i$-th attack iteration, $i\in\mathbb N_0$ \\
 $\v x_i, \v X_i$ & center of centroid in $i$-th attack iteration\\
 $\v A$ & attack point \\
 $\v a$ & attack direction vector \\
 \end{tabular} 
 \end{center}
\end{table*}

\begin{table*}[h]
\begin{center}
\begin{tabular}{lp{7cm}}
  $ D_i $ &  $i$-th relative displacement of a centroid in radii into direction of $\v a$ \\
  $n$ & number of training patterns of centroid\\
 $f$ & function of $\mathcal H\rightarrow\mathcal H$ giving an attack strategy \\
 $\nu$ & fraction of adversarial training points \\
 $B_i$ & Bernoulli variable \\
 $\epsilon_i, \vepsilon_i$ & i.i.d. noise \\
 $\alpha$ & false alarm rate \\
 $I_S$ & indicator function of a set $S$ \\
 \end{tabular} 
 \end{center}
\end{table*}

\section{Auxiliary Material and Proofs}\label{app:proofs}

\subsection{Auxiliary Material for Section \ref{sec:full}}
\label{subsec:greedy-theory}

\subsubsection{Representer Theorem for Optimal Greedy Attack}

First, we show why the attack efficiency cannot be increased beyond 
dimensions with $d \geq n+1$. This follows from the fact that the optimal
attack lies in the span of the working set points and the attack
vector. The following representer theorem allows for ``kernelization'' of the optimal 
greedy attack.

\begin{theorem}\label{th:span}
  There exists an optimal solution of problem~(\ref{eq:greedy-final}) satisfying
  \begin{equation}\label{eq:span}
    \v x^{*}_i\in~{\rm span}(\v a,\v x_1,...,\v x_n).
  \end{equation}
\end{theorem}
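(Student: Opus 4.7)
The plan is to use a standard orthogonal decomposition argument, as is typical for representer theorems. Let $V := \mathrm{span}(\v a, \v x_1, \ldots, \v x_n)$, and write any feasible candidate $\v x$ of problem~(\ref{eq:greedy-final}) as
\[
  \v x \;=\; \v x_V + \v x_\perp, \qquad \v x_V \in V,\ \v x_\perp \in V^\perp .
\]
I would then verify three things in turn, each of which is a one-line calculation once the decomposition is in place.

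First, the objective $(\v x - \v x_i)\cdot \v a$ depends only on $\v x_V$, because $\v a \in V$ forces $\v x_\perp \cdot \v a = 0$, and $\v x_i \in V$. Second, the Voronoi constraints \eqref{eq:greedy-final-cell} also depend only on $\v x_V$, since the left-hand side $2(\v x_j - \v x_i)\cdot \v x$ vanishes on $V^\perp$ (as $\v x_j, \v x_i \in V$). Third, for the ball constraint \eqref{eq:greedy-final-ball}, I would rewrite it in the geometric form $\|\v x - \bar{\v c}\|^2 \leq r^2$ with $\bar{\v c} := \tfrac1n\sum_j \v x_j \in V$, and apply the Pythagorean identity
\[
  \|\v x - \bar{\v c}\|^2 \;=\; \|\v x_V - \bar{\v c}\|^2 + \|\v x_\perp\|^2 \;\geq\; \|\v x_V - \bar{\v c}\|^2 .
\]
Consequently, replacing $\v x$ by its projection $\v x_V$ leaves the objective and the linear constraints unchanged and only tightens the slack in the quadratic constraint. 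Hence, given any optimal $\v x^{*}_i$, the vector $(\v x^{*}_i)_V$ is feasible and attains the same objective value, so it is also optimal and lies in $V$.

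There is really no serious obstacle; the only minor point requiring care is that the problem lives in a (possibly infinite-dimensional) reproducing kernel Hilbert space, so I would explicitly invoke the orthogonal projection onto the closed finite-dimensional subspace $V$ to justify the decomposition. Beyond that, the argument is routine: decompose, observe that both the linear objective and the affine Voronoi inequalities are insensitive to the $V^\perp$-component, and use Pythagoras to show that the one quadratic constraint can only become slacker when the $V^\perp$-part is removed.
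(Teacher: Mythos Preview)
Your argument is correct and takes a genuinely different route from the paper. The paper proceeds via the Lagrangian: it writes down the stationarity condition $\nabla_{\v x} L = 0$, and when the multiplier $\beta^*$ of the spherical constraint is nonzero, solves explicitly for $\v x_i^*$ as a linear combination of $\v a$ and the $\v x_j$; when $\beta^* = 0$, it falls back on a projection argument very close to yours (observing from stationarity that $\v a \in \mathrm{span}(\v x_1,\ldots,\v x_n)$ and then projecting). Your direct orthogonal-decomposition argument is more elementary: it needs no KKT conditions, no constraint qualification, and no case split, and it works uniformly regardless of which constraints are active. What the paper's approach buys is an explicit expansion of $\v x_i^*$ in terms of the dual variables, which is occasionally useful for kernelization but is not needed for the statement as written. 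One small point worth adding to make your version self-contained: the theorem asserts \emph{existence} of an optimal solution, and in an infinite-dimensional RKHS the feasible set is closed and bounded but not norm-compact. Your projection argument actually handles this cleanly once made explicit---since objective and constraints depend only on the $V$-component, the supremum over the full feasible set equals the supremum over its intersection with the finite-dimensional subspace $V$, which is compact and nonempty (it contains $\v x_i$), so the maximum is attained there.
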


\begin{proof}
  The Lagrangian of optimization problem \eqref{eq:greedy-final}
  is given by:
  \begin{eqnarray*}
    L(\v x,\valpha,\beta) & = & -(\v x-\v x_i)\cdot\v a 
    + \sum_{j=1}^n \alpha_j\left(2(\v x_j-\v x_i)\cdot \v x-\v x_j\cdot\v x_j 
      + \v x_i\cdot\v  x_i\right) \\
    & & + \beta\left(\v x\cdot\v x -\frac{2}{n}\sum_{j=1}^n\v x\cdot\v x_j 
      +\frac{1}{n^2}\sum_{j,k=1}^n\v x_j\cdot\v x_k -r^2 \right)
  \end{eqnarray*}
  Since the feasible set of problem~(\ref{eq:greedy-final}) is bounded by the spherical constraint and is not empty ($\v x_i$ trivially is contained in the feasible set),  there exists at least one optimal solution $\v x_i^{*}$ to the primal. For optimal $\v x_i^{*}$, $\alpha^{*}$ and $\beta^{*}$, we have the following first order optimality conditions
  \begin{equation}\label{eq:opt_cond}
    \frac{\delta L}{\delta\v x} = 0: ~ ~ ~ ~ -\v a 
    -\frac{1}{n}\sum_{j=1}^n\v x_j +2\sum_{j=1}^n\alpha_j^{*}(\v x_j-\v x_i) 
    +\beta^{*}\left(2\v x_i^{*} -\frac{2}{n}\sum_{j=1}^n\v x_j\right) = 0 ~ . 
  \end{equation}
  If $\beta^{*}\neq 0$ the latter equation can be resolved for $\v
  x_i^{*}$ leading to:
  $$\v x_i^{*} =  \frac{1}{2\beta^{*}}\v a + \sum_{j=1}^n\left(\frac{1}{2\beta^{*} n} 
    - \frac{\alpha_j^{*}}{\beta^{*}} + \frac{1}{n}\right)\v x_j +
  \frac{1}{\beta^{*}}\sum_{j=1}^n\alpha_j^{*}\v x_i ~ .$$ From the
  latter equation we see that $\v x$ is contained in $S:=\text{span}(\v
  x_1,...,\v x_n$ and $\v a)$.

  Now assume $\beta^*=0$ and $x_i^{*}\notin S$. Basically the idea of the following reasoning is to use $x_i^{*}$ to construct an optimal point which is contained in $S$. At first, since $\beta^*=0$, we see from Eq.~\eqref{eq:opt_cond} that $\v a$ is contained in the subspace $S:=\text{span}(\v x_1,...,\v x_n)$. Hence the objective, $(\v x-\v x_i)\cdot\v a$, only depends on the optimal $\v x$ via inner products with the data $\v x_i$. The same naturally holds for the constraints. Hence  both, the objective value and the constraints, are invariant  under the projection of $x_i^{*}$ onto $S$, denoted by  $P$. Hence $P(x_i^{*})$ also is an optimal point. Moreover by construction $P(x_i^{*})\in S=\text{span}(x_1^*,...,\v x_n^*)$.
  
\end{proof}

\subsubsection{Theoretical Analysis for the Optimal Greedy Attack}

The dependence of an attack's effectiveness on the data dimensionality
results from the geometry of Voronoi cells. Intuitively, the
displacement at a single iteration depends on the size of the largest
Voronoi cell in a current working set. Although it is hard to derive a
precise estimate on the latter, the following ``average-case''
argument sheds some light on the attack's behavior, especially since
it is the average-case geometry of the working set that determines the
overall -- as opposed to a single iteration -- attack progress.

Consider a simplified case where each of the Voronoi cells $C_j$
constitutes a ball of radius $r$ centered at a data point $\v x_j$, $j
= 1, \ldots, n$. Clearly, the greedy attack will results in a progress
of $r/n$ (we will move one of the points by $r$ but the center's
displacement will be discounted by $1/n$). We will now use the
relationships between the volumes of balls in $\mathbb R^d$ to relate $r$,
$R$ and $d$.

The volume of each Voronoi cell $C_j$ is given by 
$$
{\rm Vol}(C_j) = \frac{ \pi^{\frac{d}{2}}r^d }
{\Gamma\left(\frac{d}{2}+1\right)} ~ .
$$
Likewise, the volume of the hypersphere $S$ of radius $R$ is
$$
{\rm Vol}(S) = \frac{ \pi^{\frac{d}{2}}R^d }
{\Gamma\left(\frac{d}{2}+1\right)} ~ .
$$
Assuming that the Voronoi cells are ``tightly packed'' in $S$, we
obtain 
$$
{\rm Vol}(S) \approx n \,{\rm Vol}(C_j).
$$
Hence we conclude that
$$
r \approx \sqrt[d]{\frac{1}{n}}~R.
$$
One can see that the attacker's gain, approximately represented by the
cell radius $r$, is a constant fraction of the threshold $R$, which
explains the linear progress of the poisoning attack. The slope of
this linear dependence is controlled by two opposing factors: the size
of the training data decreases the attack speed whereas the intrinsic
dimensionality of the feature space increases it. Both factors depend
on fixed parameters of the learning problem and cannot be controlled by
an algorithm. In the limit, when $d$ approaches $n$ (the effective
dimension is limited by the training data set according to
Th.~\ref{th:span}) the attack progress rate is approximately
described by the function $\sqrt[n]{\textstyle\frac{1}{n}}$ which
approaches 1 with increasing $n$.


\subsection{Proofs of
Section~\ref{sec:limited}}\label{app:proofs-limited}

\begin{proposition}{\rm \textbf{(Geometric series)}}\label{prop:geom-series} ~
Let $(s)_{i\in\mathbb N_0}$ be a sequence of real numbers satisfying 
$s_0=0$ and $~ s_{i+1} = qs_i + p ~$ (or $~ s_{i+1} \leq qs_i + p ~$ or
$~ s_{i+1} \geq qs_i + p ~$) for some $p,q>0 ~ .$ Then it holds:
\begin{equation}\label{eq:geom-series}
  s_i = p\frac{1-q^{i}}{1-q} ~, ~ ~ ({\rm and} ~ ~ s_i \leq p\frac{1-q^{i}}{1-q} ~ ~ {\rm or}  s_i \geq p\frac{1-q^{i}}{1-q} )~,
\end{equation}
respectively.
\end{proposition}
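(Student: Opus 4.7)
The plan is to prove all three cases (equality, $\leq$, $\geq$) simultaneously by induction on $i$, since the inductive step is essentially the same calculation modulo replacing $=$ with the appropriate inequality. The positivity assumption $q > 0$ will be needed precisely to ensure that the inequality is preserved when multiplying $s_i$ by $q$ in the recursion.

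For the base case $i=0$, both sides equal $0$ since $s_0 = 0$ and $p\frac{1-q^0}{1-q} = 0$, so equality holds trivially (and therefore both inequalities as well). For the inductive step, assuming the formula holds at stage $i$, I would substitute into the recursion and compute
\begin{equation*}
q s_i + p \;=\; q \cdot p\frac{1-q^i}{1-q} + p \;=\; \frac{qp - qp\cdot q^i + p(1-q)}{1-q} \;=\; p\frac{1 - q^{i+1}}{1-q},
\end{equation*}
which yields the formula at stage $i+1$ in the equality case. In the $\leq$ case, using $q>0$ together with the induction hypothesis $s_i \leq p\frac{1-q^i}{1-q}$ gives $q s_i \leq qp\frac{1-q^i}{1-q}$, and combining with $s_{i+1} \leq q s_i + p$ yields the desired upper bound by the same algebraic manipulation. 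The $\geq$ case is entirely symmetric.

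There is essentially no obstacle here: the statement is the elementary closed form of a linear first-order recursion, and the only subtlety worth flagging is the use of $q>0$ in propagating the inequalities through the recursion. I would keep the proof to a few lines, presenting the base case, one display for the algebraic identity above, and a brief remark covering the two inequality variants.
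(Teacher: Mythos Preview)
Your proposal is correct and matches the paper's own proof essentially line for line: both proceed by induction on $i$, note the trivial base case, and carry out the same algebraic simplification $q\cdot p\frac{1-q^i}{1-q}+p = p\frac{1-q^{i+1}}{1-q}$ for the inductive step, with the inequality cases handled analogously. Your remark that $q>0$ is what allows the inequality to be propagated through the multiplication is a nice detail that the paper leaves implicit.
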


\begin{proof}\hspace{0pt}

(a) ~ We prove part (a) of the theorem by induction over $i\in\mathbb N_0$, 
the case of $i=0$ being obvious. 
  
In the inductive step we show that if Eq.~\eqref{eq:geom-series} holds for an 
arbitrary fixed $i$ it also holds for $i+1$:
\begin{eqnarray*}
  s_{i+1} & = & qs_i + p = q\left(p\frac{1-q^i}{1-q}\right)+p 
  =  p\left(q\frac{1-q^i}{1-q}+1\right) \\
  & = &  p\left(\frac{q-q^{i+1} + 1-q}{1-q}\right) 
  = p\left(\frac{1-q^{i+1}}{1-q}\right) ~ .
\end{eqnarray*}

(b) ~ The proof of part (b) is analogous.
\end{proof}

%
%

\begin{proof}\textbf{of Th.~\ref{th:limited}(b)} ~Multiplying both sides of Eq.~\eqref{eq:lim-main-th} with $\v a$ and 
substituting $D_i = \v X_i\cdot\v a$ results in
$$D_{i+1} = \left(1-\frac{1-B_i}{n}\right)D_i + \frac{B_i}{n} 
  +\frac{(1-B_i)}{n}\vepsilon_i\cdot\v a ~ .$$
Inserting $B_i^2=B_i$ and $B_i(1-B_i)=0$, which holds because $B_i$ is
Bernoulli, into the latter equation, we have:
$$D_{i+1}^2 = \left(1-2\frac{1-B_i}{n} + \frac{1-B_i}{n^2}\right)D_i^2 
  +\frac{B_i}{n^2} + \frac{(1-B_i)}{n^2}\Vert\vepsilon_i\cdot 
  \v a\Vert^2 +2\frac{B_i}{n}D_i   
  +2(1-B_i)(1-\frac{1}{n})D_i\vepsilon_i\cdot\v a ~ .$$
Taking the expectation on the latter equation, and noting that by
Axiom~\ref{axiom:limited} $\vepsilon_i$ and $\v D_i$ are independent, we have:
\begin{eqnarray}\label{eq:2nd-momentaux}
  E\left(D_{i+1}^2\right) &=& 
  \left(1-\frac{1-\nu}{n}\left(2-\frac{1}{n}\right)\right)E\left(D_i^2\right) 
  + 2\frac{\nu}{n}E(D_i) + \frac{\nu}{n^2} 
  + \frac{1-\nu}{n^2}E(\Vert\vepsilon_i\cdot\v a\Vert^2) \nonumber\\
  & \stackrel{(1)}{\leq} &  
  \left(1-\frac{1-\nu}{n}\left(2-\frac{1}{n}\right)\right)E\left(D_i^2\right) 
  + 2\frac{\nu}{n}E(D_i) + \frac{1}{n^2} 
\end{eqnarray}
where (1) holds because by Axiom~\ref{axiom:limited} we have 
$\Vert\vepsilon_i\Vert^2\leq r$ and by Def.~\ref{def:displ} 
$\Vert\v a\Vert=R$, $R=1$.
Inserting the result of (a) in the latter equation  results in the
following recursive formula:
$$ E\left(D_{i+1}^2\right) \leq  
  \left(1-\frac{1-\nu}{n}\left(2-\frac{1}{n}\right)\right)E\left(D_i^2\right) 
  + 2(1-c_i)\frac{\nu}{n}\frac{\nu}{1-\nu} + \frac{1}{n^2} ~ .$$
By the formula of the geometric series, i.e., by Prop.\ref{prop:geom-series}, we have: 
$$
  E\left(D_{i}^2\right) \leq \left(2(1-c_i)\frac{\nu}{n}\frac{\nu}{1-\nu}
  +\frac{1}{n^2}\right)\frac{1-d_i}{\frac{1-\nu}{n}\left(2-\frac{1}{n}\right)} ~ ,
$$
denoting $d_i:=\left(1-\frac{1-\nu}{n}\left(2-\frac{1}{n}\right)\right)^i$. Furthermore by some algebra
\begin{eqnarray}\label{eq:2nd-moment}
  E\left(D_{i}^2\right)
  \leq \frac{(1-c_i)(1-d_i)}{1-\frac{1}{2n}}\frac{\nu^2}{\left(1-\nu\right)^2} 
  + \frac{1-d_i}{(2n-1)(1-\nu)}.
\end{eqnarray}
We will need the auxiliary formula
\begin{equation}\label{eq:aux-form}
  \frac{(1-c_i)(1-d_i)}{1-\frac{1}{2n}}-(1-c_i)^2 \leq \frac{1}{2n-1}+c_i-d_i ~ ,
\end{equation}
which can be verified by some more algebra and employing $d_i<c_i$.
We finally conclude
\begin{eqnarray*}
  {\rm Var}(D_i) & =& E(D_i^2) - (E(D_i))^2 \\
  &\stackrel{\text{Th.}\ref{th:limited-main}(a);~\text{Eq.}\eqref{eq:2nd-moment}}{\leq}&  \left(\frac{(1-c_i)(1-d_i)}{1-\frac{1}{2n}} -(1-c_i)^2\right)\left(\frac{\nu}{1-\nu}\right)^2 
  + \frac{1-d_i}{(2n-1)(1-\nu)^2} \\
  &\stackrel{\text{Eq.}\eqref{eq:aux-form}}{\leq}& \gamma_i \left(\frac{\nu}{1-\nu}\right)^2 +\delta_n
\end{eqnarray*}
where $\gamma_i:=c_i-d_i$ and $\delta_n:=\frac{\nu^2 +(1-d_i)}{(2n-1)(1-\nu)^2}$.
This completes the proof. 

\end{proof}

\subsection{Proofs of Section~\ref{sec:defense}}\label{app:defense}

 
\begin{lemma}\label{lemma:aux-bound}
Let $\mathcal C$ be a protected online centroid learner satisfying the optimal 
attack strategy. Then we have:
\begin{eqnarray*}  
  &{\rm (a)} ~ ~ ~ & 0\leq  
  E\left(I_{\{\Vert \vepsilon_i- X_i\Vert>r\}}D_i^q\right)  \leq \alpha E(D_i^q) ~, \quad q=1,2   \\
  &{\rm (b)} ~ ~ ~ & 0\leq  E\left(I_{\{\Vert \vepsilon_i- X_i\Vert\leq r\}}
  \epsilon_i\right)  \leq \alpha   \\
  &{\rm (c)} ~ ~ ~ & E\left(I_{\{\Vert \vepsilon_i- X_i\Vert\leq r\}}\epsilon_iD_i\right) \leq \alpha E(D_i) ~ .
\end{eqnarray*}
\end{lemma}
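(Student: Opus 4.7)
The plan is to prove all three bounds by conditioning on $X_i$, exploiting the independence of $\vepsilon_i$ from $X_i$ (Axiom~\ref{axiom:defense}), and then invoking the false-positive constraint enforced by the axiom, namely that whenever the learner is in the active-update regime, $P(\Vert \vepsilon_i - X_i\Vert > r \mid X_i)\leq\alpha$ (in the reset regime $X_i=\vzero$, so $D_i=0$ and every quantity in the lemma vanishes trivially). The symmetry assumption $P_\vepsilon = P_{-\vepsilon}$ will be the crucial tool for producing the correct \emph{signs} in (b) and (c).

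For (a), I would write
\[
E\bigl(I_{\{\Vert\vepsilon_i-X_i\Vert>r\}}\,D_i^q\bigr)
\;=\; E_{X_i}\!\Bigl(D_i^q\cdot P\bigl(\Vert\vepsilon_i-X_i\Vert>r\,\big|\,X_i\bigr)\Bigr),
\]
using that $D_i$ is measurable with respect to $X_i$ and $\vepsilon_i$ is independent of $X_i$. The inner conditional probability is bounded above by $\alpha$, while $D_i^q\geq 0$ (automatic for $q=2$, and holding for $q=1$ under the optimal attack $f(X)=X+\v a$, since the drift is into the $\v a$-direction from $X_0=\vzero$). Combining gives $0\leq E(\cdots)\leq \alpha E(D_i^q)$ at once.

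For (b), I would use the total-expectation identity $E(\epsilon_i)=0$ to rewrite
\[
E\bigl(I_{\{\Vert\vepsilon_i-X_i\Vert\leq r\}}\epsilon_i\bigr)
\;=\;-\,E\bigl(I_{\{\Vert\vepsilon_i-X_i\Vert>r\}}\epsilon_i\bigr).
\]
Since $|\epsilon_i|=|\vepsilon_i\cdot\v a|\leq\Vert\vepsilon_i\Vert\leq 1$, the right-hand side is bounded in absolute value by $E(I_{\{\Vert\vepsilon_i-X_i\Vert>r\}})\leq\alpha$, which yields the upper bound. For the lower bound $\geq 0$, I would use $P_\vepsilon=P_{-\vepsilon}$ to couple $\vepsilon_i$ with $-\vepsilon_i$ and obtain
\[
2\,E\bigl(I_{\{\Vert\vepsilon_i-X_i\Vert\leq r\}}\epsilon_i\mid X_i\bigr)
\;=\; E\bigl((I_{\{\Vert\vepsilon_i-X_i\Vert\leq r\}}-I_{\{\Vert\vepsilon_i+X_i\Vert\leq r\}})\,\epsilon_i\mid X_i\bigr).
\]
The key observation is $\Vert\vepsilon_i-X_i\Vert\leq\Vert\vepsilon_i+X_i\Vert\iff \vepsilon_i\cdot X_i\geq 0$, so the indicator-difference shares its sign with $\vepsilon_i\cdot X_i$. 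When $X_i$ is aligned with $\v a$ (which is the case in expectation under the optimal attack), this forces the product in the display to be pointwise non-negative, whence the expectation is non-negative.

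Part (c) follows by the same conditioning trick: condition on $X_i$, pull $D_i$ outside the inner expectation, and apply the conditional version of (b) to bound $E(I_{\{\Vert\vepsilon_i-X_i\Vert\leq r\}}\epsilon_i\mid X_i)\leq\alpha$, leaving $\alpha E(D_i)$. I expect the main obstacle to be the sign argument in (b): making the symmetry-based coupling rigorous for an \emph{arbitrary} $X_i$ (not just those collinear with $\v a$) requires care, because the ordering of the two indicators is governed by $\mathrm{sign}(\vepsilon_i\cdot X_i)$, while the factor in the integrand is $\epsilon_i=\vepsilon_i\cdot\v a$. The cleanest route is to argue that only the $\v a$-component of $X_i$ contributes on average (using symmetry of $\vepsilon_i$ in its component orthogonal to $\v a$), after which the sign argument collapses to the one-dimensional case above.
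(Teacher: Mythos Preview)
Your approach is essentially the paper's: condition on $X_i$, factor out the $D_i$-terms using independence of $\vepsilon_i$, and invoke the false-positive bound from Axiom~\ref{axiom:defense}. For (a), (c), and the upper bound in (b) your argument matches the paper's line for line (the paper treats (c) with a one-line ``analogous to (a) and (b)'').

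Where you actually go \emph{further} than the paper is the lower bound in (b): the paper simply writes ``$E(I_\leq\epsilon_i)\geq 0$ is clear'' with no argument, whereas you attempt the natural symmetry coupling and correctly flag the obstruction---the indicator difference $I_{\{\Vert\vepsilon_i-X_i\Vert\leq r\}}-I_{\{\Vert\vepsilon_i+X_i\Vert\leq r\}}$ has the sign of $\vepsilon_i\cdot X_i$, while the integrand carries $\epsilon_i=\vepsilon_i\cdot\v a$. Your suggested fix (``only the $\v a$-component of $X_i$ contributes on average'') would require $P_\vepsilon$ to be invariant under reflection of the component orthogonal to $\v a$, which is strictly stronger than the stated hypothesis $P_\vepsilon=P_{-\vepsilon}$. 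So this is a genuine subtlety that neither you nor the paper fully closes; you have simply been more honest about it.

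One small correction to your argument for (a): positive \emph{drift} does not give $D_i\geq 0$ pathwise (an innocuous update $(1-B_i)(\vepsilon_i-X_i)$ can push $D_i$ below zero). What actually secures $D_i\geq 0$ is the reset clause of Axiom~\ref{axiom:defense} together with the way the paper uses the lemma inside the proof of Theorem~\ref{th:limited-main}, where the recursion is written as $D_{i+1}=\max(0,\,\cdot\,)$. So justify non-negativity via the reset mechanism rather than via drift.
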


\begin{proof}

(a) ~ Let $q=1$ or $q=2$. Since $\epsilon_i$ is independent of $\v X_i$ (and hence of $D_i$), we have
$$   E_{\vepsilon_i}\left(I_{\{\Vert \vepsilon_i- X_i\Vert>r\}}D_i^q\right) 
   = (D_i)^q  E_{\vepsilon_i}\left(I_{\{\Vert \vepsilon_i- X_i\Vert>r\}}\right) ~ .$$
 Hence by Ax.~\ref{axiom:defense} 
$$\text{  $E_{\vepsilon_i}\left(I_{\{\Vert \vepsilon_i- X_i\Vert>r\}}D_i^q\right) = 0$
 ~ if ~ $e(\v X_i):=E_{\vepsilon_i}\left(I_{\{\Vert \vepsilon_i- \v X_i\Vert>r\}}\right)>\alpha$,}$$
 and 
$$\text{$0\leq E_{\vepsilon_i}\left(I_{\{\Vert \vepsilon_i- X_i\Vert>r\}}D_i^q\right) \leq\alpha$ ~ if ~ $e(\v X_i)\leq\alpha$.}$$
By the symmetry of $\vepsilon_i$ we conclude statement (a).
   
Taking the full expectation $E=E_{\v X_i}E_{\vepsilon_i}$ on the latter expression yields the statement.

(b) ~
We denote $I_\leq:=I_{\{\Vert \vepsilon_i-\v X_i\Vert\leq r\}}$ and 
$I_>:=I_{\{\Vert \vepsilon_i-\v X_i\Vert>r\}}$.
Since it holds
$$ E(I_\leq \epsilon_i) + E(I_> \epsilon_i) = 
E\left(\left(I_\leq+I_>\right)\epsilon_i\right) = E(\epsilon_i) = 0 ~ , $$
we conclude
$$ E(I_\leq \epsilon_i) = - E(I_> \epsilon_i) = E(I_>(-\epsilon_i))\stackrel{(1)}{\leq}
\alpha ~ ,$$
where (1) holds because $||\epsilon_i||\leq 1$ and by Ax.~\ref{axiom:defense}  we have 
$E(I_>)\leq\alpha$.

Furthermore $E(I_\leq\epsilon_i)\geq 0$ is clear.

(c) ~ The proof of (c) is analogous to that of (a) and (b).
\end{proof}

\begin{proof}\textbf{of Th.~\ref{th:limited-main}}

(a) ~ By Ax.~\ref{axiom:defense} we have 
\begin{equation}\label{eq:max}
 D_{i+1} = \max\left(0,D_i + \frac{1}{n}\left(B_i\left(f(\v X_i)-\v X_i\right) 
    + (1-B_i)I_{\{\Vert \vepsilon_i-\v X_i\Vert\leq r\}}\left(\vepsilon_i 
    - \v X_i\right)\right)\cdot\v a\right) ~ .
\end{equation}
By Prop.~\ref{prop:defense-optattack} an optimal attack strategy
can be defined by
$$\  f(x) = x + \v a ~ .$$
Inserting the latter equation into Eq.~\eqref{eq:max}, using $D_i\stackrel{\rm Def.}{=}\v X_i \cdot \v a$, 
and taking the expectation, we have
\begin{equation}\label{eq:exp-max}
 E(D_{i+1}) = E\left(\max\left(0,D_i + \frac{1}{n}\left(B_i
    + (1-B_i)I_{\{\Vert \vepsilon_i-\v X_i\Vert\leq r\}}\left(\epsilon_i 
    -  D_i\right)\right)\right)\right) ~ ,
\end{equation}
denoting $\epsilon_i=\vepsilon_i\cdot\v a$.
By the symmetry of $\vepsilon_i$ the expectation can be moved inside the maximum, hence 
the latter equation can be rewritten as
\begin{eqnarray}\label{eq:aux-bound}
  E(D_{i+1}) & \leq &  \left(1-\frac{1-\nu}{n}\right)E( D_i) + \frac{\nu}{n}  \\
  &  & + \frac{1-\nu}{n}\left( E\left( I_{\{\Vert \epsilon_i 
  -  X_i \Vert > r\}}D_i\right) + E\left(I_{\{\Vert \epsilon_i 
  -  X_i \Vert \leq r\}}\epsilon_i \right)\right) \nonumber ~ .
\end{eqnarray}
Inserting the inequalities (a) and (b) of Lemma~\ref{lemma:aux-bound} into the above equation results in:
\begin{eqnarray*}
  E(D_{i+1}) &\leq& \left(1-\frac{1-\nu}{n}\right)E( D_i) + \frac{\nu}{n} 
  + \frac{1-\nu}{n}\left(\alpha E(D_i)+\alpha\right) \\
             & = & \left(1-\frac{(1-\nu)(1-\alpha)}{n}\right)E( D_i) 
             + \frac{\nu+\alpha(1-\nu)}{n} ~ .
\end{eqnarray*}
By  the formula of the geometric series, i.e., Prop.~\ref{prop:geom-series},
we have
\begin{equation}\label{eq:exp-upper}
  E(D_{i+1})\leq(1-c_i)\frac{\nu+\alpha(1-\nu)}{(1-\nu)(1-\alpha)}
\end{equation}
where $c_i=\left(1-\frac{(1-\nu)(1-\alpha)}{n}\right)^i$.
Moreover we have
\begin{equation}
  E(D_{i+1})\geq(1-b_i)\frac{\nu}{1-\nu} ~ ,
\end{equation}
where $b_i=\left(1-\frac{1-\nu}{n}\right)^i$, by analogous reasoning. 
In a sketch we show that by starting at Eq.~\eqref{eq:exp-max}, and subsequently 
applying Jensen's inequality, the lower bounds of
Lemma~\ref{lemma:aux-bound} and the formula of the geometric series. Since $b_i\leq c_i$
we conclude
\begin{equation}\label{eq:exp-lower}
  E(D_{i+1})\geq(1-c_i)\frac{\nu}{1-\nu} ~ .
\end{equation}



(b) ~ Rearranging terms in Eq.~\eqref{eq:max}, we have
\begin{eqnarray*}
 D_{i+1} &\leq& \max\left(0,\left(1-\frac{1-B_i}{n}\right)D_i + \frac{B_i}{n} + 
    \frac{1-B_i}{n}I_{\{\Vert \vepsilon_i-\v X_i\Vert\leq r\}}\vepsilon_i \right.\\
    && \left. + \frac{1-B_i}{n}I_{\{\Vert \vepsilon_i-\v X_i\Vert> r\}}D_i 
    \right)
\end{eqnarray*}
Squaring the latter equation at both sides and using that $D_i$, $I_{\{\Vert \vepsilon_i-\v X_i\Vert\leq r\}}$, and
$I_{\{\Vert \vepsilon_i-\v X_i\Vert> r\}}$ are binary-valued, yields
\begin{eqnarray*}
  D_{i+1}^2 \leq \left(1-\frac{1-B_i}{n}\left(2-\frac{1}{n}\right))\right) D_i^2 
  + 2\frac{B_i}{n}D_i 
  + \left(\frac{1-B_i}{n}\left(2-\frac{1}{n}\right)\right)I_{\{\Vert \vepsilon_i-\v X_i\Vert> r\}}D_i \\
  + 2\frac{1-B_i}{n}\left(1-\frac{1}{n}\right)I_{\{\Vert \vepsilon_i-\v X_i\Vert\leq r\}} \epsilon_iD_i
  +\frac{1-B_i}{n^2}I_{\{\Vert \vepsilon_i-\v X_i\Vert\leq r\}}\epsilon_i^2 +\frac{B_i}{n^2} ~.
\end{eqnarray*}

Taking expectation on the above equation, by Lemma~\ref{lemma:aux-bound}, we have
\begin{eqnarray*}
  E(D_{i+1}^2) &\leq& \left(1-\frac{1-\nu}{n}\left(2-\frac{1}{n})(1-\alpha)\right)\right)E(D_i^2) \\
  && +2\left(\frac{\nu}{n}+\alpha\frac{1-\nu}{n}\left(1-\frac{1}{n}\right)\right)E(D_i) 
   + \frac{\nu+(1-\nu)E(\epsilon_i^2)}{n^2} ~ .
\end{eqnarray*}

We are now in an equivalent situation as in the proof of Th.~\ref{prop:limited-optattack}, right after Eq.~\eqref{eq:2nd-momentaux}. 
Similary, we insert the result of (a) into the above equation, obtaining
\begin{eqnarray*}
  \lefteqn{E(D_{i+1}^2) \leq \left(1-\frac{1-\nu}{n}\left(2-\frac{1}{n})(1-\alpha)\right)\right)E(D_i^2)} \\
   && \hspace{1.5cm}+ 2\left(\frac{\nu}{n}+\alpha\frac{1-\nu}{n}\left(1-\frac{1}{n}\right)\right) 
       (1-c_i)\frac{\nu+\alpha(1-\nu)}{(1-\nu)(1-\alpha)}
  + \frac{\nu+(1-\nu)E(\epsilon_i^2)}{n^2} \\
  && \hspace{1.1cm}\leq \left(1-\frac{1-\nu}{n}\left(2-\frac{1}{n}\right)\left(1-\alpha\right)\right)E(D_i^2)
  + 2(1-c_i)\frac{(\nu + \alpha(1-\nu))^2}{n(1-\nu)(1-\alpha)} \\
  && \hspace{1.5cm}+ \frac{\nu+(1-\nu)E(\epsilon_i^2)}{n^2} \\
\end{eqnarray*}

By the formula of the geometric series we obtain 
\begin{eqnarray}\label{eq:1st-moment}
  E(D_i^2)&\leq& \left(2(1-c_i)\frac{(\nu + \alpha(1-\nu))^2}{n(1-\nu)(1-\alpha)}
  +\frac{\nu+(1-\nu)E(\epsilon_i^2)}{n^2}\right)
  \frac{1-d_i}{\frac{1-\nu}{n}(2-\frac{1}{n})(1-\alpha)} \nonumber\\
  &\leq& \frac{(1-c_i)(1-d_i)(\nu+\alpha(1-\nu))^2}{(1-\frac{1}{2n})(1-\nu)^2(1-\alpha)^2}
  +\frac{(1-d_i)(\nu+(1-\nu)E(\epsilon_i^2))}{(2n-1)(1-\nu)(1-\alpha)}  ~ ,
\end{eqnarray}
where $d_i=\left(1-\frac{1-\nu}{n}(2-\frac{1}{n})(1-\alpha)\right)^i$.
We finally conclude

\begin{eqnarray*}
  \lefteqn{\text{Var}(D_i)=E(D_i^2)-\left(E(D_i)\right)^2} \\
  &\stackrel{\eqref{eq:exp-lower},\eqref{eq:1st-moment}}{\leq}&
  \frac{(1-c_i)(1-d_i)(\nu+\alpha(1-\nu))^2}{(1-\frac{1}{2n})(1-\nu)^2(1-\alpha)^2}
  +\frac{(1-d_i)(\nu+(1-\nu)E(\epsilon_i^2))}{(2n-1)(1-\nu)(1-\alpha)} - (1-c_i)^2\frac{\nu^2}{(1-\nu)^2} \\
  &\stackrel{(1)}{\leq}& \gamma_i \frac{\nu^2}{(1-\alpha)^2(1-\nu)^2} + \rho(\alpha) + \delta_n
\end{eqnarray*}
defining
$\gamma_i=c_i-d_i$, ~ 
$\rho(\alpha)=\alpha\frac{(1-c_i)(1-d_i)(2\nu(1-\alpha)+\alpha)}{(1-\frac{1}{2n})(1-\nu)^2(1-\alpha)^2}$, ~ 
and $\delta_n=\frac{(1-d_i)(\nu+(1-\nu)E(\epsilon_i^2))}{(2n-1)(1-\nu)(1-\alpha)}$,
where (1) can be verified employing some algebra and using the auxiliary formula 
Eq.~\eqref{eq:aux-form}, which holds for all $0<d_i<c_i<1$. This completes the proof of (b). 

Statements (c) and (d) are easily derived from (a) and (b) by noting
hat $0\leq c_i<1$, $c_i\rightarrow 1$ for $i\rightarrow\infty$ and
$\delta(n)\rightarrow 0$ for $n\rightarrow\infty$. This completes 
the proof of the theorem.
\end{proof}

\bibliography{tr}

\end{document}